\def\ColtFormat{1}
\newcommand{\colt}[1]{\iftoggle{coltformat}{#1}{}} 
\newcommand{\arxiv}[1]{\iftoggle{coltformat}{}{#1}}
\newcommand{\colt}[1]{\iftoggle{coltformat}{#1}{}}
\newcommand{\arxiv}[1]{\iftoggle{coltformat}{}{#1}}
\theoremstyle{definition}  %
\newtheorem{assumption}{Assumption}
\newtheorem{claim}{Claim}
\newtheorem{proposition}{Proposition}
\crefname{proposition}{Proposition}{Propositions}
\theoremstyle{plain}
\newtheorem{lemma}{Lemma}
\newtheorem{remark}{Remark}
\newtheorem{theorem}{Theorem}
\newtheorem{definition}{Definition}
\xpatchcmd{\proof}{\itshape}{\normalfont\proofnameformat}{}{}
\newcommand{\proofnameformat}{\bfseries}
  \newcommand{\jmlrBlackBox}{\rule{1.5ex}{1.5ex}}
  \newcommand{\jmlrQED}{\hfill\jmlrBlackBox\par\bigskip}
\date{}
\author{ Badih Ghazi\thanks{Google Research, Mountain View}\\
        {\small\texttt{badihghazi@gmail.com}}\\ 
	  \and
	  Pritish Kamath\footnotemark[2] \\ 
	  {\small\texttt{pritishk@google.com}}\\
	  \and
	  Ravi Kumar\footnotemark[2]\\
	  {\small\texttt{ravi.k53@gmail.com}}\\
          \and
 Pasin Manurangsi\thanks{Google Research, Thailand}\\
        {\small\texttt{pasin@google.com}}\\ 
	  \and
	Ayush Sekhari\thanks{Massachusetts Institute of Technology}\\
                {\small\texttt{sekhari@mit.edu}}\\
       \and
        Chiyuan Zhang\footnotemark[2] \\  
        {\small\texttt{chiyuan@google.com}}\\
}
\def\Comments{0} %
\setlist{nosep}
\Crefname{lemma}{Lemma}{Lemmas}
\crefname{lemma}{Lemma}{Lemmas}
\definecolor{Gred}{RGB}{219, 50, 54}
\definecolor{Ggreen}{RGB}{60, 186, 84}
\definecolor{Gblue}{RGB}{72, 133, 237}
\definecolor{Gyellow}{RGB}{247, 178, 16}
\definecolor{ToCgreen}{RGB}{0, 128, 0}
\definecolor{myGold}{RGB}{231,141,20}
\definecolor{myBlue}{rgb}{0.19,0.41,.65}
\definecolor{myPurple}{RGB}{175,0,124}
\providecommand{\Comments}{0}
\newcommand{\mytodo}[1]{\ifnum\Comments=1{#1}\fi}
\newcommand{\tableoftodos}{\ifnum\Comments=1 \listoftodos[Comments/To Do's] \fi}
\newcommand{\poly}{\mathrm{poly}}
\newcommand{\defeq}{\triangleq}
\newcommand{\indicator}[1]{\mathds{1}\crl{#1}}
\newcommand{\bit}{\crl{0,1}}
\newcommand{\Learn}{\mathsf{Learn}\xspace}
\newcommand{\Unlearn}{\mathsf{Unlearn}\xspace}
\newcommand{\ERM}{\mathsf{ERM}\xspace}
\newcommand{\aux}{\mathsf{aux}\xspace}
\newcommand{\Encode}{\mathsf{Encode}\xspace}
\newcommand{\Decode}{\mathsf{Decode}\xspace}
\newcommand{\Merge}{\mathsf{Merge}\xspace}
\newcommand{\Compress}{\mathsf{Compress}\xspace}
\newcommand{\thresh}{\mathrm{th}}
\newcommand{\point}{\mathrm{pt}}
\newcommand{\Hth}{\cH_{\thresh}}
\newcommand{\Hdth}{\cH^d_{\thresh}}
\newcommand{\Hpt}{\cH_{\point}}
\newcommand{\N}{\mathbb{N}}
\newcommand{\cQ}{\mathcal{Q}}
\newcommand{\tLearn}{\widetilde{\Learn}}
\newcommand{\tUnlearn}{\widetilde{\Unlearn}}
\newcommand{\minval}{\textsc{MinVal}\xspace}
\newcommand{\maxval}{\textsc{MaxVal}\xspace} 
\newcommand{\succe}{\mathrm{succ}}
\newcommand{\ba}{{\bm a}}
\newcommand{\hstar}{h^\star} 
\newcommand{\indic}[1]{\mathds{1}\crl{#1}}
\newcommand{\err}{\mathrm{err}}
\newcommand{\U}{{S_I}}
	\setlist[itemize]{leftmargin=6mm}
	\newenvironment{proofof}[1][\unskip]{%
		\par\medskip\noindent{\bfseries\upshape Proof of #1 \/ }%
	}{\jmlrQED}
	\newenvironment{proofof}[1][\unskip]{%
		\par\medskip\noindent{\bfseries\upshape Proof of #1 \/ }%
	}{\qed}
\DeclarePairedDelimiter{\abs}{\lvert}{\rvert} 
\DeclarePairedDelimiter{\crl}{\{}{\}}
\DeclarePairedDelimiter{\prn}{(}{)}
\DeclarePairedDelimiter{\tri}{\langle}{\rangle}
\let\Pr\undefined
\DeclareMathOperator{\Pr}{Pr}
\DeclareMathOperator*{\argmin}{argmin} %
\newcommand{\ind}[1]{\mathds{1}\left[{#1}\right]}    %
\def\ddefloop#1{\ifx\ddefloop#1\else\ddef{#1}\expandafter\ddefloop\fi}
\def\ddef#1{\expandafter\def\csname bb#1\endcsname{\ensuremath{\mathbb{#1}}}}
\def\ddefloop#1{\ifx\ddefloop#1\else\ddef{#1}\expandafter\ddefloop\fi}
\def\ddef#1{\expandafter\def\csname b#1\endcsname{\ensuremath{\mathbf{#1}}}}
\def\ddef#1{\expandafter\def\csname c#1\endcsname{\ensuremath{\mathcal{#1}}}}
\def\ddef#1{\expandafter\def\csname h#1\endcsname{\ensuremath{\widehat{#1}}}}
\def\ddef#1{\expandafter\def\csname hc#1\endcsname{\ensuremath{\widehat{\mathcal{#1}}}}}
\def\ddef#1{\expandafter\def\csname t#1\endcsname{\ensuremath{\widetilde{#1}}}}
\def\ddef#1{\expandafter\def\csname tc#1\endcsname{\ensuremath{\widetilde{\mathcal{#1}}}}}
\newcommand{\pref}[1]{\prettyref{#1}}
\newcommand{\savehyperref}[2]{\texorpdfstring{\hyperref[#1]{#2}}{#2}}
\begin{document}

\newpage 

\setcounter{page}{1}
\maketitle

\begin{abstract}
We consider the {\em learning--unlearning} paradigm defined as follows. First given a dataset, the goal is to learn a good predictor, such as one minimizing a certain loss. Subsequently, given any subset of examples that wish to be {\em unlearnt}, the goal is to learn, without the knowledge of the original training dataset, a good predictor that is identical to the predictor that would have been produced when learning from scratch on the surviving examples.

We propose a new \emph{ticketed} model for learning--unlearning wherein the learning algorithm can send back additional information in the form of a small-sized (encrypted) ``ticket'' to each participating training example, in addition to retaining a small amount of ``central'' information for later.
Subsequently, the examples that wish to be unlearnt present their tickets to the unlearning algorithm, which additionally uses the central information to return a new predictor.
We provide space-efficient ticketed learning--unlearning schemes for a broad family of concept classes, including thresholds, parities, intersection-closed classes, among others.

En route, we introduce the {\em count-to-zero} problem, where during unlearning, the goal is to simply know if there are any examples that survived. We give a ticketed learning--unlearning scheme for this problem that relies on the construction of Sperner families with certain properties, which might be of independent interest.
\end{abstract}

\colt{\begin{keywords}%
  Machine unlearning, data deletion, ticket model, space complexity
\end{keywords}}

\section{Introduction}

Machine learning models trained on user data have become widespread in  applications. While these models have proved to be greatly valuable, there is an increasing demand for ensuring that
they respect the consent of the users and the privacy of their data. One of the simplest and common challenge is how to update a model when a user wishes to drop out of the training data.
Re-learning the model from scratch without this user's data is a natural approach, but this can be computationally prohibitive.
Designing alternative approaches which aim to ``mimic'' this natural approach, without the computational overhead, has come to be known as the problem of machine ``unlearning'', a topic of growing interest.

Informally, the ideal {\em learning--unlearning (LU)} paradigm can be modeled as follows. An agent gets a sequence of learning and unlearning requests. Each such request is accompanied by a dataset of examples. At any point, the agent must be able to produce a predictor, the distribution of which must be identical to, or at least ``close to'', that of the same agent on a single learning request containing only the {\em surviving} examples, namely, all examples in learning requests that have not been present in any subsequent unlearning request. A naive approach to keep in mind is an agent that explicitly keeps track of all surviving examples at any point, and returns the predictor obtained by simulating a single learning request on the surviving examples. However, the space requirement of such an agent is linear in the number of examples (as it needs to store all the examples to simulate re-training). The goal in this paper is to understand:

\begin{center}
	\sl When are \textbf{space-efficient} learning--unlearning schemes possible?
\end{center}

Unlearning has been an active area of research. \cite{CaoY15} initiated the study through \emph{exact} unlearning.  Their definition requires an algorithm to have identical
outputs on a dataset after deleting a point, as if that point was 
never inserted; their algorithms are restricted to very structured problems. Several later works studied unlearning algorithms for empirical risk minimization ~\citep{GuoGHV20, IzzoSCZ21, NeelRS21, ullah2021machine, thudi2022unrolling, graves2021amnesiac, BourtouleCCJTZLP21}. However, these  works focus exclusively on the {\em time complexity} of unlearning. 
However, many of these provably effective unlearning methods have large space requirements to enable deletion, e.g., they store space-intensive check-pointing data structures \citep{ullah2021machine}, large ensembles of models trained on subsets of data \citep{BourtouleCCJTZLP21}, extensive statistics about the learning process \citep{thudi2022unrolling, NeelRS21}, or at the very least the entire (surviving) training dataset. This additional space overhead can be impractical for various applications.  In contrast, the primary focus of our paper is to understand the {\em space complexity} of unlearning and to develop space-efficient learning--unlearning schemes for general function classes.

The model of learning--unlearning that has been considered in the above mentioned prior works can be deemed as the ``central'' model, where the agent responsible for unlearning has to remember all additional information beyond the learnt predictor that might be required for unlearning later. 

\subsection{Our Contributions}

In \pref{sec:learning_unlearning}, we introduce the notion of a {\em ticketed learning--unlearning (TiLU) scheme}, which consists of a {\em learning} and an {\em unlearning} algorithm.
The learning algorithm given a training dataset, produces a predictor, as well as a ``ticket'' corresponding to each example and some additional ``central'' information.
The unlearning algorithm, given a subset of the training examples accompanied by their corresponding tickets, and the previously generated central information, produces an updated predictor (that realizes the unlearning guarantee).

In \pref{sec:ticketed-lu-schemes}, we show that certain limitations of the central model of learning--unlearning can be overcome by the ticketed model. In particular, we provide TiLU schemes for a broad family of concept classes that includes several commonly studied classes such as one-dimensional thresholds, product of thresholds, and parities. In \pref{sec:sharper}, we provide improved TiLU schemes with even better space complexity bounds for products of thresholds, as well as for point functions, which are notably not covered by the techniques in \pref{sec:ticketed-lu-schemes}.

Underlying our improvements in \pref{sec:sharper}, is a basic primitive of {\em count-to-zero}. The goal in this problem is simply to determine if the unlearning request contains precisely all the examples in the original learning request or not. We give a TiLU scheme for this problem with space complexity that scales as the log of the inverse-Ackermann function (see \definitionref{def:ackermann}) of the number of examples. This relies on a novel construction of {\em size-indexing Sperner families}, which we believe to be of independent interest.  We also prove a lower bound on such families in \pref{sec:lb}, and use it to show that the space complexity of TiLU schemes for any non-trivial concept class must necessarily increase with the number of examples.

\subsection{Other Related Work}

For specific learning models like SVMs, various algorithms for exact unlearning have been considered under the framework of decremental learning \citep{cauwenberghs2001incremental, tveit2003incremental, karasuyama2010multiple, romero2007incremental}. However, these works do not enjoy any guarantees on the space requirements for unlearning. The primary motivation in these works is to use the decremental learning framework to empirically estimate the leave-one-out error in order to provide generalization guarantees for the learned model.  

We also note that beyond exact unlearning, various probabilistic/approximate notions of unlearning, which are in turn inspired by the notions in differential privacy~\citep{DworkMNS06},  have been considered \citep{GinartGVZ19, sekhari2021remember, GuptaJNRSW21, chourasia2022forget}, and there has been an extensive effort to develop  (approximate) unlearning algorithms for various problem settings. These include unlearning in deep neural networks~\citep{du2019lifelong, GolatkarAS20,GolatkarARPS21, nguyen2020variational, graves2021amnesiac}, random forests~\citep{BrophyL21}, large-scale language models~\citep{zanella2020analyzing}, convex loss functions \citep{GuoGHV20, sekhari2021remember, NeelRS21, suriyakumar2022algorithms}, etc. However, we reiterate that all these prior works have huge space requirements, especially in high-dimensional settings. 

In a related setting, prior works have looked at the problem of constructing  {\em history-independent} data structures \citep{hartline2005characterizing, naor2001anti}. The key motivation here is to prevent an adversary from inferring  information about the dataset from the memory representation of the data structure that is not available through its ``legitimate'' interface. However, no direct application of history-independent data structures for unlearning is currently known. 

The main motivation for unlearning is that a trained model could potentially leak user information in various forms such as membership inference attack~\citep{shokri2017membership,carlini2022membership} or even data extraction attack for text~\citep{carlini2021extracting,carlini2022quantifying} or image~\citep{carlini2023extracting} generative models. An unlearning protocol allows users to withdraw their data from the model training set. The implication of such mechanisms were also studied in the literature. For example, \citet{ippolito2022preventing} showed that an inference-time filtering mechanism to prevent the generation of verbatim training text sequences could be easily bypassed by ``style-transfer'' like prompting in large language models. \citet{carlini2022privacy} further show a ``privacy onion'' effect where unlearning one set of examples could have large impact on the privacy score of a non-overlapping set of examples in the training set. Furthermore, there have also been works exploring machine unlearning as a tool to attack an ML system, e.g., \citet{di2022hidden} recently showed that unlearning can be used as a trigger for executing data poisoning attacks (on demand).

Finally, there has been a recent line of work on formulating alternative definitions of approximate (and probabilistic) unlearning to capture the spirit of ``right to be forgotten" and ``data deletion" under different circumstances, e.g.,~\citep{dukler2023safe, krishna2023towards, cohen2022control, eisenhofer2022verifiable, garg2020formalizing}, etc.  However, developing space-efficient algorithms for these definitions is beyond the scope of our paper, and we only consider exact unlearning of ERMs.

\section{Learning--Unlearning Schemes}\label{sec:learning_unlearning}

We focus on supervised learning with the binary loss function $\ell(\hat{y}, y) = \mathds{1}\{\hat{y} \ne y\}$. %
Each example $(x, y)$ belongs to $\cZ = \cX \times \cY$, with the label set $\cY = \bit$. We denote the {\em empirical loss} of a predictor $h : \cX \to \cY$ on a dataset $S = (z_1 = (x_1, y_1), \ldots, z_n = (x_n, y_n)) \in \cZ^n$ as\footnote{For ease of notation, we let $\cL$ denote the sum of losses over the dataset, as opposed to the more conventional average.} $\cL(h; S) := \sum_{i \in [n]} \ell(h(x_i), y_i)$. 
For any concept class $\cH \subseteq (\cX \to \cY)$, we say that a dataset $S$ is \emph{$\cH$-realizable} if there exists $h \in \cH$ such that $\cL(h; S) = 0$. %
Let $\ERM_{\cH}(S) = \argmin_{h \in \cH} \cL(h; S)$ be the set of all minimizers in $\cH$ of the empirical loss over $S$. For any subset $I \subseteq [n]$ of indices, let $S_I$ denote the dataset $((x_i, y_i))_{i \in I}$. 

\paragraph{Central Model of Learning--Unlearning.} 
We define the notion of a {\em learning--unlearning (LU) scheme} below, formalizing the standard setting considered in literature, which we will refer to as the ``central'' model.
\begin{definition}[LU Scheme]\label{def:lu-central}
A {\em learning--unlearning (LU) scheme} for a concept class $\cH$ consists of a pair $(\Learn, \Unlearn)$ of algorithms as follows.
\begin{itemize} 
\item On input $S \in \cZ^n$,  $\Learn(S)$ returns a predictor $h \in \ERM_{\cH}(S)$ and auxiliary information $\aux \in \bit^C$.
\item For $I \subseteq [n]$, %
$\Unlearn(S_I, \aux)$ returns a predictor $h' \in \ERM_{\cH}(S \smallsetminus S_I)$.
\end{itemize}
For all $S$ and $S_I \subseteq S$, the predictor returned by $\Unlearn$ is required to be identical to the predictor returned by $\Learn(S \smallsetminus S_I)$. 
The space complexity of the scheme is $C$, the bit-complexity of the auxiliary information $\aux$.

Unless otherwise stated, we will only require the above to hold for $\cH$-realizable $S$.
\end{definition}
\begin{figure}
\centering \large
\begin{tikzpicture}[
    alg/.style = {rectangle, rounded corners=2pt, draw, minimum height=10mm, minimum width=25mm, line width=0.75pt, outer sep=2pt},
    arrs/.style = {-{Latex[width=3pt,length=3pt]}, line width=0.6pt},
    user/.style = {circle, line width=1pt, draw=Gblue, fill=Gblue!30, minimum size=5mm},
    comm/.style = {fill=white, midway},
    scale=0.63, transform shape
]

\def\ygap{1.35}

\def\xgap{2.4}

\node (S) at (0, 1.2*\ygap) {$S$};
\node[alg, draw=Ggreen, fill=Ggreen!30] (learn) at (0,0) {$\Learn$};
\node (h) at (\xgap, 0) {$h$};
\node at ($(h)+(0.2,-0.6)$) {\scriptsize $\in \ERM_{\cH}(S)$};
\node (aux) at (0, -\ygap) {$\aux$};
\node[alg, draw=Gred, fill=Gred!30] (delete) at (0,-2*\ygap) {$\Unlearn$};
\node (Sp) at (0, -3.2*\ygap) {$S_I$};
\node (hp) at (\xgap, -2*\ygap) {$h'$};
\node at ($(hp)+(0.2,-0.6)$) {\scriptsize $\in \ERM_{\cH}(S \smallsetminus S_I)$};

\path[arrs]
(S) edge (learn)
(learn) edge (h)
(learn) edge (aux)
(aux) edge (delete)
(Sp) edge (delete)
(delete) edge (hp);

\draw[dashed] (4, 1.8*\ygap) -- (4, -3.9*\ygap);

\def\x{5.8}
\node (S) at (\x, 1.2*\ygap) {$S \smallsetminus S_I$};
\node[alg, draw=Ggreen, fill=Ggreen!30] (learn) at (\x,0) {$\Learn$};
\node (aux) at (\x+\xgap, 0) {$\aux''$};
\node (h) at (\x, -\ygap) {$h''$};
\node[right] at ($(h.east)+(-0.15,-0.05)$) {\scriptsize $\in \ERM_{\cH}(S \smallsetminus S_I)$};

\path[arrs]
(S) edge (learn)
(learn) edge (h)
(learn) edge (aux);

\node[rectangle, rounded corners=2pt, draw, text width=2.2cm, align=center, line width=1pt, inner sep=2mm, draw=Gyellow, fill=Gyellow!30] at (\x,-2*\ygap) {
    \textcolor{black!30!Gyellow}{\bf Guarantee:}\\[1mm]
    $h' = h''$
};

\draw[rounded corners=4pt,line width=1.5pt,draw=black!50] (-1.6, 2*\ygap) rectangle (8.8, -4*\ygap);
\node at (3.6, -4.5*\ygap) {\Large (a) LU Scheme (\definitionref{def:lu-central})};

\def\x{12.1}
\def\xgap{2.6}
{\small
\def\ugap{1.4}
\node[user] (u1) at (\x-1.5*\ugap, 1.5*\ygap) {1};
\node[user] (u2) at (\x-0.5*\ugap, 1.5*\ygap) {2};
\node[user] (u3) at (\x+0.5*\ugap, 1.5*\ygap) {3};
\node[user] (u4) at (\x+1.5*\ugap, 1.5*\ygap) {4};

\def\ugap{1.8}
\node[user] (u3p) at (\x-0.5*\ugap, -3.5*\ygap) {3};
\node[user] (u4p) at (\x+0.5*\ugap, -3.5*\ygap) {4};
}
\node[alg, draw=Ggreen, fill=Ggreen!30] (learn) at (\x,0) {$\Learn$};
\node (h) at (\x+\xgap, 0) {$h$};
\node at ($(h)+(0.2,-0.6)$) {\scriptsize $\in \ERM_{\cH}(S_{\crl{1,2,3,4}})$};
\node (aux) at (\x, -\ygap) {$\aux$};
\node[alg, draw=Gred, fill=Gred!30] (delete) at (\x,-2*\ygap) {$\Unlearn$};
\node (hp) at (\x+\xgap, -2*\ygap) {$h'$};
\node at ($(hp)+(0.2,-0.6)$) {\scriptsize $\in \ERM_{\cH}(S_{\crl{1,2}})$};

{\scriptsize
\path[arrs]
(u1) edge[bend right=20] node[comm] {$z_1$} (learn)
($(learn.north)+(-1,0)$) edge[bend right=10] node[comm] {$t_1$} (u1)
(u2) edge[bend right=12] node[comm] {$z_2$} (learn)
(learn) edge[bend right=12] node[comm] {$t_2$} (u2)
(u3) edge[bend right=12] node[comm] {$z_3$} (learn)
(learn) edge[bend right=12] node[comm] {$t_3$} (u3)
(u4) edge[bend right=10] node[comm] {$z_4$} ($(learn.north)+(1,0)$)
(learn) edge[bend right=20] node[comm] {$t_4$} (u4)
(learn) edge (h)
(learn) edge (aux)
(aux) edge (delete)
(u3p) edge node[comm] {$z_3, t_3$} (delete)
(u4p) edge node[comm] {$z_4, t_4$} (delete)
(delete) edge (hp);
}

\draw[dashed] (\x+4.5, 1.8*\ygap) -- (\x+4.5, -3.9*\ygap);

\def\rx{\x+6.4}
\def\ugap{1.8}
{\small
\node[user] (u1) at (\rx-0.5*\ugap, 1.5*\ygap) {1};
\node[user] (u2) at (\rx+0.5*\ugap, 1.5*\ygap) {2};
}
\node[alg, draw=Ggreen, fill=Ggreen!30] (learn) at (\rx,0) {$\Learn$};
\node (aux) at (\rx+\xgap, 0) {$\aux''$};
\node (h) at (\rx, -\ygap) {$h''$};
\node[right] at ($(h.east)+(-0.15,-0.1)$) {\scriptsize $\in \ERM_{\cH}(S_{\crl{1,2}})$};

{\footnotesize
\path[arrs]
(u1) edge[bend right=12] node[comm] {$z_1$} (learn)
(learn) edge[bend right=12] node[comm] {$t_1''$} (u1)
(u2) edge[bend right=12] node[comm] {$z_2$} (learn)
(learn) edge[bend right=12] node[comm] {$t_2''$} (u2)
(learn) edge (h)
(learn) edge (aux);
}

\node[rectangle, rounded corners=2pt, draw, text width=2.2cm, align=center, line width=1pt, inner sep=2mm, draw=Gyellow, fill=Gyellow!30] at (\rx,-2*\ygap) {
    \textcolor{black!30!Gyellow}{\bf Guarantee:}\\[1mm]
    $h' = h''$
};

\draw[rounded corners=4pt,line width=1.5pt,draw=black!50] (\x-2.8, 2*\ygap) rectangle (\x+9.6, -4*\ygap);
\node at (\x+3.4, -4.5*\ygap) {\Large (b) TiLU Scheme (\definitionref{def:lu-ticket})};
\end{tikzpicture}
\caption{Illustration of the guarantees of LU schemes in central and ticketed models.}
\label{fig:lu-scheme}
\end{figure}
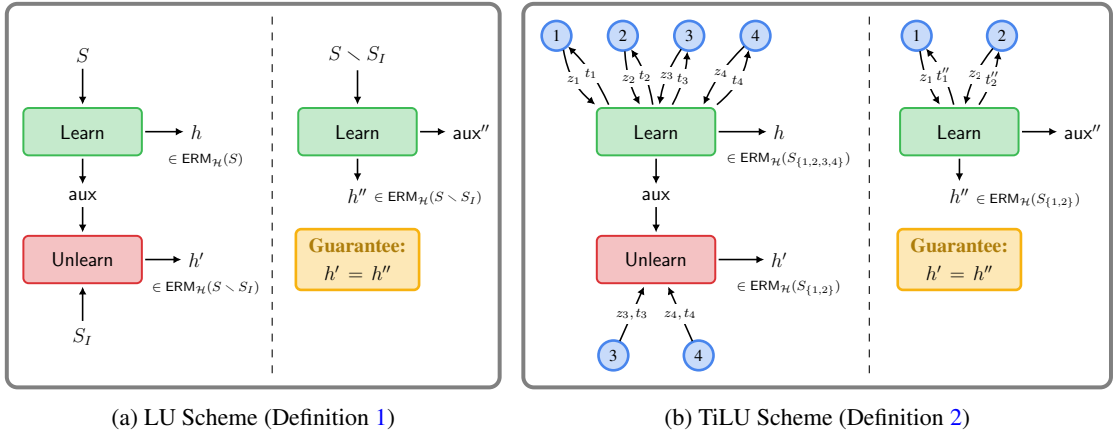

Informally speaking, in a LU scheme, the central agent on a learning request containing a dataset, returns a predictor and also retains some auxiliary information for itself. On an unlearning request (containing a subset of the previous dataset), the agent, using its auxiliary information, returns a new predictor. This new predictor is required to be identical to the predictor that the agent would have returned on a learning request with only the surviving examples. See \pref{fig:lu-scheme}(a).

In this work, our focus is on {\em space complexity}, namely, we consider a LU scheme as {\em space-efficient} if its  space complexity is $\poly(\log n, \log|\cZ|, \log|\cH|)$.  Note that the naive scheme, where $\aux$  contains the entire dataset $S$ and $\Unlearn$ simply returns the predictor returned by $\Learn(S \smallsetminus S_I)$, requires $C = n \cdot \log |\cZ|$ bits and hence is not space-efficient. While we do not explicitly discuss time complexity, all the algorithms that we present are also time-efficient, namely, they run in  $\poly(n, \log |\cZ|, \log |\cH|)$ time. Our lower bounds, on the other hand, hold even against computationally inefficient algorithms. Finally, in this work, we only assume a \emph{one-shot} setting, i.e., there is a single learning request with a dataset \(S\), followed by a single unlearning request \(S_I \subseteq S\).

We show in \pref{apx:central-lu-schemes} that there exists a space-efficient LU scheme for the class of {threshold functions}.  Unfortunately, the central model easily runs into barriers and is quite limited: we also show that any LU scheme for the class of   point functions must store $\Omega(|\cX|)$ bits of auxiliary information.
To circumvent such barriers, we introduce a new {\em ticketed} model of learning--unlearning.

\paragraph{Ticketed Model of Learning--Unlearning.} The basic idea of the ticketed model is that the central agent issues ``tickets'' for each example that is stored by the corresponding user contributing the example. These tickets have to then be provided as part of the unlearning request (see \pref{fig:lu-scheme}(b)).
\begin{definition}[Ticketed LU Scheme]\label{def:lu-ticket}
A {\em ticketed learning--unlearning (TiLU) scheme} for a concept class $\cH$ consists of a pair $(\Learn, \Unlearn)$ of algorithms  such that
\begin{itemize} 
\item On input $S = (z_1, \dots, z_n) \in \cZ^n$, $\Learn(S)$ returns $(h, \aux, (t_i)_{i \in [n]})$, with a predictor $h \in \ERM_{\cH}(S)$, auxiliary information $\aux \in \bit^{C_s}$ and \emph{tickets} $t_1, \ldots, t_n \in \bit^{C_t}$ associated with examples $z_1, \ldots, z_n$ respectively\footnote{The tickets are sent back to the users contributing the corresponding examples, and not stored centrally. }. 
\item On input $I \subseteq [n]$, $\Unlearn(S_I, \aux, (t_i)_{i \in I})$ returns a predictor $h' \in \ERM_{\cH}(S \smallsetminus S_I)$.
\end{itemize}
For all $S$ and $S_I \subseteq S$, the predictor returned by $\Unlearn$ above is required to be identical to the predictor returned by $\Learn(S \smallsetminus S_I)$.
The space complexity of the scheme is $(C_s, C_t)$, where $C_s$ is the bit-complexity of $\aux$ and $C_t$ is the bit-complexity of each ticket $t_i$. 

Unless otherwise stated, we will only require the above to hold for $\cH$-realizable $S$.
\end{definition} 

A TiLU scheme is similar to the central LU scheme, except that during learning, the central agent issues a ``ticket'' for each example in the dataset that will be given to the user contributing that said example; this ticket is required along with the example as part of the unlearning request. As before, we consider a TiLU scheme to be {\em space-efficient} if $C_s, C_t = \poly(\log n, \log |\cZ|, \log |\cH|)$. Note that while the  space requirement for storing all tickets does grow linearly in $n$, the main point is that no single party has to store more than poly-logarithmic amount of information. The challenge in constructing a TiLU scheme is that only the tickets corresponding to the examples in the unlearning request are available at the time of unlearning, so the unlearning step has to be performed with access to a limited amount of information. 
\begin{remark} \label{rem:limitations}\em
For both schemes, our definition is restrictive in the following sense:
\begin{itemize}
\item {\em Exact ERM}: the learning algorithm is required to output a predictor in $\ERM_{\cH}(S)$. %
\item {\em Exact unlearning}: the predictor after unlearning is required to be identical to the predictor learned on just the surviving examples.\footnote{We could consider a seemingly more relaxed variant where the {\em distribution} of the predictor returned by $\Unlearn$ equals the distribution of the predictor returned by $\Learn$ on the surviving examples, but given any such scheme, we could convert it to have a deterministic output by simply choosing a canonical predictor for each distribution.}
\item {\em One-shot unlearning:} there is a single learning request, followed by a single unlearning request.
\end{itemize}
While the above restrictions are seemingly limiting, as our results show, they already capture and highlight many of the key technical challenges.  Developing schemes when these restrictions are suitably relaxed (e.g., when unlearning can be approximate) is an important research direction.
\end{remark}

\section{Mergeable Concept Classes}\label{sec:ticketed-lu-schemes}

We define a {\em mergeability} property of concept classes and provide examples of several commonly studied classes with this property. We then provide space-efficient TiLU schemes for such classes.

\begin{definition}[\boldmath Mergeable Concept Class]\label{def:mergeable}
A concept class $\cH \subseteq (\cX \to \cY)$ is said to be \emph{$C$-bit mergeable} if there exist methods
$\Encode, \Merge, \Decode$ such that
\begin{itemize}
\item $\Encode: \cZ^* \to \bit^C$ is a permutation-invariant encoding of its input into $C$ bits. 
\item $\Decode: \bit^C \to \cH$ such that $\Decode(\Encode(S)) \in \ERM_{\cH}(S)$ for all $\cH$-realizable $S \in \cZ^*$.
\item $\Merge: \bit^C \times \bit^C \to \bit^C$ such that for all $S_1, S_2 \in \cZ^{*}$ such that $S_1 \cup S_2$\footnote{We use $S_1 \cup S_2$ to denote the {\em concatenation} of the two datasets.} is $\cH$-realizable, it holds that $\Encode(S_1 \cup S_2) = \Merge(\Encode(S_1), \Encode(S_2))$.%
\end{itemize}
\end{definition}

\noindent Before we describe TiLU schemes for such classes, we list a few well-studied concept classes that are efficiently mergeable (details deferred to \pref{apx:example-classes}).

\begin{itemize}%
\item \textbf{Thresholds.} Class $\Hth$ consists of all \emph{threshold functions} over $\cX = \crl{1, \dots, |\cX|}$, namely for any $a \in \crl{0, 1, \ldots, |\cX|}$, we have $h_{>a}(x) = \mathds{1}\crl{x > a}$; $\Hth$ is $O(\log |\cX|)$-bit mergeable.

\item \textbf{\boldmath Product of $d$ thresholds.} The class $\Hdth$ over $\cX = [m]^d$, consists of functions indexed by $\ba = (a_1, \dots, a_d) \in \crl{0, 1, \ldots, m}^d$ defined as $h_{>\ba}(x) := \ind{x_1 > a_1 \land \dots \land x_d > a_d}$; $\Hdth$ is $O(d \log |\cX|)$-bit mergeable.

\item \textbf{Parities.} The class $\cH^d_{\oplus}$ consists of all \emph{parity functions}, namely for $\cX = \bbF_2^d$ and $w \in \bbF_2^d$, we have $h_w(x) = \langle w, x \rangle_{\bbF_2}$; $\cH^d_{\oplus}$ is $O(d^2)$-bit mergeable; note that $d = \log |\cX|$.

\item \textbf{Intersection-Closed Classes.}
A class $\cH$ is said to be {\em intersection-closed} if for all $h, h' \in \cH$, the function $\tilde{h}$, given by $\tilde{h}(x) = h(x) \land h'(x)$, is also in $\cH$. Such a class $\cH$ is $d\log |\cX|$-bit mergeable, where $d$ is the VC-dimension of $\cH$. In particular, this includes $\Hdth$ (more examples in \pref{apx:example-classes}).
\end{itemize}

\noindent We also consider an example of a simple class that is {\em not} efficiently mergeable.
\begin{itemize}%
\item \textbf{Point Functions.} Class $\cH_{\point}$ consists of all \emph{point functions} over $\cX = \crl{1, \dots, |\cX|}$, namely for any $a \in \crl{1, \ldots, |\cX|}$, we have $h_a(x) = \mathds{1}\crl{x = a}$; $\cH_{\point}$ is not $o(|\cX|)$-bit mergeable.
\end{itemize}

\subsection{Ticketed LU Schemes for Mergeable Concept Classes}  \label{sec:merkle}

In this section, we provide a TiLU scheme for mergeable concept classes.  The basic idea is to use a {\em Merkle tree}-like data structure to construct tickets for each example.

\begin{theorem}\label{thm:merkle}
For any $C$-bit mergeable concept class $\cH$, there exists a TiLU scheme with space complexity $(C_s = \log |\cH|, C_t = O(C \log n))$.
\end{theorem}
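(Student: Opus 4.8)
The plan is to build the tickets using a Merkle-tree-style binary tree over the $n$ examples, where each node stores the $\Merge$-combined encoding of the examples in its subtree. Concretely, $\Learn(S)$ first arranges the $n$ examples as leaves of a balanced binary tree of depth $O(\log n)$; at each leaf $i$ it computes $\Encode(z_i)$, and at each internal node it computes the $\Merge$ of its two children, so the root holds $\Encode(S)$. The central auxiliary information $\aux$ is just the predictor index $\Decode(\Encode(S)) \in \cH$, which takes $\log|\cH|$ bits, giving $C_s = \log|\cH|$. The ticket $t_i$ handed to example $z_i$ consists of the sibling encodings along the root-to-leaf path for leaf $i$ — that is, for each of the $O(\log n)$ ancestors of leaf $i$, the $C$-bit encoding stored at the sibling of that ancestor — together with the $O(\log n)$ bits describing the position of leaf $i$ in the tree. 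This is $O(C\log n)$ bits, giving $C_t = O(C\log n)$.

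Next I describe $\Unlearn(S_I, \aux, (t_i)_{i\in I})$. The key observation is that the set of sibling-encodings contained in the tickets $\{t_i\}_{i\in I}$, taken together, contains enough information to recompute the encoding at the root of the tree restricted to the \emph{surviving} leaves $[n]\smallsetminus I$: consider the minimal subtree spanned by the leaves in $I$ plus the root; for every node hanging off this subtree (i.e., a sibling of some node on a root-to-leaf path to $I$ whose own subtree contains no element of $I$), its encoding — which is exactly $\Encode$ of the surviving examples in that hanging subtree — appears verbatim in at least one ticket $t_i$, $i\in I$. One then walks up the tree from the root: at each node on the spanned subtree, the surviving-encoding of that node is obtained by $\Merge$-ing the surviving-encodings of its two children, where a child's value is either (a) read directly off a ticket if the child hangs off the spanned subtree, or (b) recursively computed if the child is itself on the spanned subtree, or (c) treated as the encoding of the empty dataset if the child's subtree lies entirely within $I$ — here I would assume, or note that one may WLOG assume, that $\Encode$ is defined on the empty dataset as a neutral element for $\Merge$, or alternatively handle leaves-in-$I$ by simply omitting them from the merge since merging is permutation-invariant over a concatenation. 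By induction up the tree, the root yields $\Encode(S\smallsetminus S_I)$, from which $\Decode$ returns a predictor in $\ERM_\cH(S\smallsetminus S_I)$. Since $\Encode$ is permutation-invariant, this equals $\Encode$ of the surviving dataset in whatever canonical order $\Learn$ would use on input $S\smallsetminus S_I$, so the output matches $\Learn(S\smallsetminus S_I)$ exactly, as required. (Strictly, one should also have $\Learn(S\smallsetminus S_I)$ build its tree so that its root encoding equals $\Encode(S\smallsetminus S_I)$ — this holds by the defining property of $\Merge$ regardless of tree shape.)

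The routine parts are the bit-accounting ($C_s=\log|\cH|$ from storing one predictor; $C_t = O(C\log n)$ from $O(\log n)$ sibling encodings of $C$ bits each plus path bits) and the induction showing the root recomputes to $\Encode(S\smallsetminus S_I)$. The main obstacle — and the step I would be most careful about — is verifying that the union of tickets $\{t_i\}_{i\in I}$ really does supply \emph{every} encoding needed by the bottom-up recomputation, with no gaps: one must argue that each maximal hanging subtree off the $I$-spanned tree is the sibling of some ancestor of some leaf $i\in I$, and hence its encoding sits in $t_i$. This is a clean combinatorial fact about binary trees (the frontier of the complement of a leaf set is covered by the siblings along paths to that leaf set), but it is the crux of correctness and deserves an explicit statement. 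A secondary subtlety is the empty-dataset / $\cH$-realizability bookkeeping: the $\Merge$ property is only assumed when the concatenation is $\cH$-realizable, so I would note that every sub-concatenation of an $\cH$-realizable $S$ is itself $\cH$-realizable (it is realized by the same $h\in\cH$), which makes all intermediate $\Merge$ calls legitimate.
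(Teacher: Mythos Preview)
Your proposal is correct and follows essentially the same Merkle-tree construction as the paper: tickets consist of the sibling encodings along the root-to-leaf path plus the leaf index, $\aux$ is just the learned predictor, and unlearning reconstructs $\Encode(S\smallsetminus S_I)$ by merging the encodings of the maximal surviving subtrees, each of which appears in some ticket. The only cosmetic difference is that the paper phrases the reconstruction by directly identifying the set $R$ of maximal nodes whose subtrees contain no leaf of $I$ and merging those $\Encode(S_v)$'s, thereby sidestepping your case~(c) and any need for an empty-dataset neutral element.
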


\begin{proof}
For simplicity, we assume that $n$ is a power of $2$; the argument generalizes to all $n$ easily.
Consider a full binary tree of depth $d = \log_2 n$ with leaf $i$ corresponding to example $(x_i, y_i)$.
For each internal node $v$, let $S_v \defeq \crl{(x_i,y_i) \mid v~\text{is an ancestor of leaf}~i}$ be the dataset consisting of examples corresponding to the leaf nodes in the subtree under $v$.
For any leaf $i$, let $v_1, \dots, v_{d - 1}$ be the nodes on the path from the root to leaf $i$, and for each $j \in \crl{2,\ldots, d}$ let $\tilde{v}_j$ be the child of $v_{j-1}$ and sibling of $v_j$. \pref{fig:merkle} shows an example.
Let the ticket corresponding to example $i$ be given as
\[
t_i \defeq \prn{i, \Encode(S_{\tilde{v}_2}), \ldots, \Encode(S_{\tilde{v}_{d}})}.
\]
It is immediate to see that the number of bits in $t_i$ is $d + C\cdot (d-1)$. Define $\Learn(S)$ to return $h = \Decode(\Encode(S))$, $\aux=h$, and tickets $t_i$ as specified above.

We define $\Unlearn$ as follows. If $I$ is empty, then simply return $h$. Given a non-empty $I\subseteq [n]$, let $R$ be the set of all nodes $v$ such that no leaf in the sub-tree under $v$ belongs to $I$, but the same is not true of the parent of $v$. It is easy to see that $S \smallsetminus S_I$ is precisely given as $\bigcup_{v \in R} S_v$, and moreover, $S_{v}$ and $S_{v'}$ are disjoint for distinct $v, v' \in R$. 
For all $v \in R$, we can recover $\Encode(S_v)$ from ticket $t_i$ for any leaf $i \in I$ in the subtree under the sibling of $v$. Thus, by repeated applications  of $\Merge$, we can recover $\Encode(\bigcup_{v \in R} S_v) = \Encode(S \smallsetminus S_I)$. Finally, we can return $h' = \Decode(\Encode(S \smallsetminus S_I))$ as the predictor after unlearning, which is identical to the predictor returned by $\Learn(S \smallsetminus S_I)$.
\end{proof}

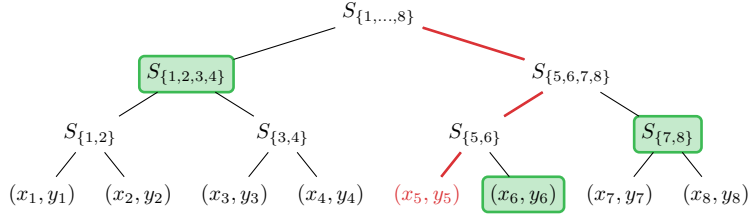
\begin{figure}
\centering \small
\begin{tikzpicture}[
  grow=down,
  level 1/.style={sibling distance=64mm},
  level 2/.style={sibling distance=32mm},
  level 3/.style={sibling distance=16mm},
  level 4/.style={sibling distance=8mm},
  level distance=10mm,
  highlight/.style={rectangle,rounded corners=2pt,draw=Ggreen,fill=Ggreen!30,line width=1pt},
  every node/.style = {outer sep=0pt},
  scale=0.8, transform shape
]
\node (1) {$S_{\crl{1,\ldots,8}}$}
  child {
    node[highlight] {$S_{\crl{1,2,3,4}}$}
    child {
      node {$S_{\crl{1,2}}$}
      child {
        node {$(x_1, y_1)$}
      }
      child {
        node {$(x_2, y_2)$}
      }
    }
    child {
      node {$S_{\crl{3,4}}$}
      child {
        node {$(x_3, y_3)$}
      }
      child {
        node {$(x_4, y_4)$}
      }
    }
  }
  child {
    node (3) {$S_{\crl{5,6,7,8}}$}
    child {
      node (6) {$S_{\crl{5,6}}$}
      child {
        node[Gred] (12) {$(x_5, y_5)$}
      }
      child {
        node[highlight] {$(x_6, y_6)$}
      }
    }
    child {
      node[highlight] {$S_{\crl{7,8}}$}
      child {
        node {$(x_7, y_7)$}
      }
      child {
        node {$(x_8, y_8)$}
      }
    }
  };
\path[Gred, line width=1pt]
(1) edge (3)
(3) edge (6)
(6) edge (12);
\end{tikzpicture}
\caption{Illustration of TiLU scheme underlying the proof of~\theoremref{thm:merkle}. The ticket $t_5$ for the example $(x_5, y_5)$ are the index $i$ and the outputs of $\Encode$ applied on $S_{\crl{1,2,3,4}}$, $S_{\crl{7,8}}$, and $S_{\crl{6}}$.} 
\label{fig:merkle}
\end{figure}

\subsection{Improvements with Compressibility}\label{sec:stable-compress}
We augment the notion of mergeable concept classes, with an additional notion of existence of compressions (similar to the notion introduced by \cite{littlestone1986relating}), and provide improved TiLU schemes for such classes. The advantage of this scheme over that in \theoremref{thm:merkle} is that the space complexity of the ticket depends on \(n\) through only an \emph{additive} $\log n$ factor.

\begin{definition}[\boldmath Mergeable Concept Class with Compressions]\label{def:compression}
A concept class $\cH \subseteq (\cX \to \cY)$ is said to be \emph{$C$-bit mergeable with $K$-compressions} if in addition to $\Encode, \Decode, \Merge$ as in~\definitionref{def:mergeable}, there exists a relation $\Compress \subseteq \cZ^* \times \cZ^{\le K}$, where we say that $T \in \cZ^{\le K}$ is a {\em compression} of $S$ if $(S,T) \in \Compress$. For any $\cH$-realizable $S \in \cZ^*$, the following properties must hold for any valid compression $T$:
(i) $T \subseteq S$,
(ii) $\Encode(S) = \Encode(T)$,
and 
(iii) for all $z \in S \smallsetminus T$, $T$ is a compression for $S \smallsetminus \crl{z}$.
\end{definition}

\noindent The above notion is related, but incomparable to classes with {\em stable} compression schemes~\citep{bousquet2020proper}. A stable compression scheme requires that there exist a unique or canonical compression set for any realizable dataset, whereas in the above we allow the existence of many compressions and do not require that there be a canonical one. On the other hand, the above definition requires the mergeable property, which is not required for stable compression schemes. 

It is easy to see that all the examples of mergeable concept classes we considered earlier are in fact mergeable with compressions (the details are deferred to \pref{apx:example-classes}).
In particular, 
\begin{itemize}[leftmargin=5mm]
 \item Thresholds $\Hth$ are $O(\log |\cX|)$-mergeable with $2$-compressions. 
 \item Parities $\cH^d_{\oplus}$ are $O(d^2)$-mergeable with $d$-compressions.
 \item Intersection-closed classes with VC-dimension $d$ are $O(d \log |\cX|)$-bit mergeable with\linebreak
 $d$-compressions. 
 \end{itemize} 
The following result concerns with TiLU schemes for mergeable concept classes with compressions.

\begin{theorem}\label{thm:span}
For any class $\cH$ that is mergeable with $K$-compressions, there exists a TiLU scheme with space complexity $(C_s = \log |\cH|, C_t = 2K \log |\cZ| + \log n)$. %
\end{theorem}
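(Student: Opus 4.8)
The plan is to attach to each example a small piece of \emph{layered-compression} data, obtained by repeatedly compressing $S$ and deleting the compression, so that at unlearning time the encoding of the surviving set can be rebuilt layer by layer; property~(iii) of \pref{def:compression} is exactly what makes the deeper layers irrelevant. Concretely, fix a deterministic rule selecting one element of $\Compress$ for every realizable dataset, and whenever this would return $\emptyset$ on a nonempty dataset, add one arbitrary point of that dataset --- a short check using property~(iii) shows this is still a valid compression --- so every nonempty realizable dataset gets a compression of size in $\{1,\dots,K\}$. Set $V_0\defeq S$, let $T_j$ be the chosen compression of $V_j$, and $V_{j+1}\defeq V_j\smallsetminus T_j$; since $|T_j|\ge1$ this halts after some $m\le n$ steps with $V_m=\emptyset$, giving a partition $S=T_0\sqcup T_1\sqcup\dots\sqcup T_{m-1}$ into layers, with $\mathrm{layer}(z)=j$ meaning $z\in T_j$. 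Then $\Learn(S)$ returns $h=\Decode(\Encode(S))$, sets $\aux=h$ (so $C_s=\log|\cH|$), and for each $z_i$ with $\mathrm{layer}(z_i)=j$ issues the ticket $t_i\defeq(j,\ T_j,\ T_{j+1})$, using the convention $T_m\defeq\emptyset$ and encoding each $T_{j}$ as a length-$K$ padded list over $\cZ$; this uses at most $\log n+2K\log|\cZ|$ bits, as claimed. (Assume for simplicity that the examples of $S$ are distinct, else carry indices along.)

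Two facts, both immediate from property~(iii), do all the work: \textbf{(a)} iterating~(iii), if $T$ is a compression of a realizable $U$ then $T$ is a compression of $U\smallsetminus W$ for every $W\subseteq U\smallsetminus T$, so $\Encode(U\smallsetminus W)=\Encode(T)$; and \textbf{(b)} for any $j$ and any $B\subseteq V_j$, setting $A\defeq B\cap T_j$, the disjoint decomposition $V_j\smallsetminus B=(T_j\smallsetminus A)\sqcup\big(V_{j+1}\smallsetminus(B\smallsetminus T_j)\big)$ gives $\Encode(V_j\smallsetminus B)=\Merge\big(\Encode(V_{j+1}\smallsetminus(B\smallsetminus T_j)),\,\Encode(T_j\smallsetminus A)\big)$, where we treat $\Merge$ as an associative, commutative operation on valid inputs (legitimate since $\Encode$ is permutation-invariant).

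The unlearner $\Unlearn(S_I,\aux,(t_i)_{i\in I})$ then works as follows. If $I=\emptyset$, or if $0$ is not among the first coordinates of the available tickets --- equivalently $S_I\subseteq V_0\smallsetminus T_0$ --- return $h$; in the second case fact~(a) gives $\Encode(S\smallsetminus S_I)=\Encode(T_0)=\Encode(S)$, whose $\Decode$ is $h$. Otherwise let $L$ be the least layer index not occurring as a first coordinate, so layers $0,\dots,L-1$ are all touched by $S_I$. Apply fact~(b) from $B^{(-1)}\defeq S_I$ down through layers $0,\dots,L-1$; the remainder after layer $j$ is $B^{(j)}=S_I\cap V_{j+1}$, and since layer $L$ is untouched, $B^{(L-1)}=S_I\cap V_L$ is disjoint from $T_L$, so fact~(a) gives $\Encode(V_L\smallsetminus B^{(L-1)})=\Encode(T_L)$. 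Unrolling,
\[
\Encode(S\smallsetminus S_I)\;=\;\Merge\big(\Encode(T_L),\ \Encode(T_{L-1}\smallsetminus A_{L-1}),\ \dots,\ \Encode(T_0\smallsetminus A_0)\big),\qquad A_j\defeq S_I\cap T_j .
\]
Each term is available from the tickets in hand: for $j\le L-1$ layer $j$ is touched, so some deleted example lies in $T_j$ and its ticket carries $T_j$ (and $T_{j+1}$), which in particular yields $T_L$ from a layer-$(L-1)$ ticket; and $A_j=S_I\cap T_j$ is determined because $S_I$ is given. Finally return $h'=\Decode(\Encode(S\smallsetminus S_I))$: by the defining property of $\Decode$ it lies in $\ERM_\cH(S\smallsetminus S_I)$, and being a deterministic function of $\Encode(S\smallsetminus S_I)$ it equals the predictor output by $\Learn(S\smallsetminus S_I)$.

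The delicate point --- and the reason each ticket stores one layer of look-ahead ($T_{j+1}$, not just $T_j$), and why the descent stops at the \emph{first} untouched layer rather than the deepest touched one --- is the accounting: the reconstruction must never require a compression from an untouched layer other than the single boundary layer $L$, and that one is covered precisely by the look-ahead contained in the layer-$(L-1)$ ticket. Everything else is routine: the one-line induction behind fact~(a), the check that padding an empty compression preserves the three properties, the reduction to distinct examples, and confirming that both the peeling during $\Learn$ and the unrolling during $\Unlearn$ run in $\poly(n,\log|\cZ|,\log|\cH|)$ time.
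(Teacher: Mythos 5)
Your proposal is correct and follows essentially the same route as the paper's proof: peel off compression layers $T_0,T_1,\dots$, ticket each example with its layer index plus the pair $(T_j,T_{j+1})$, and at unlearning time reconstruct everything down to the first untouched layer $L$, using iterated property~(iii) to replace $\Encode\bigl(\bigcup_{j\ge L}T_j\smallsetminus S_I\bigr)$ by $\Encode(T_L)$ and $\Merge$ to stitch the layers back together. The only (welcome) additions beyond the paper's argument are your explicit handling of the case where a compression of a nonempty set is empty (needed to guarantee the peeling terminates in at most $n$ layers) and of the boundary case $L=0$.
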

\begin{proof}
Algorithm $\Learn$ works as follows. Given any dataset $S$, partition $S$ into $T_1, T_2, \ldots$ such that $T_i$ is a compression of $\bigcup_{j \ge i} T_i$ (this can be iteratively done by choosing $T_1$ as any compression of $S$, $T_2$ as any compression of $S \smallsetminus T_1$, and so on). The ticket $t_i$ for the example $(x_i, y_i)$ that lies in $T_j$ is given as $(j, T_j \circ T_{j+1})$. The size of the ticket is $2K \log |\cZ| + \log n$, since $|T_\ell| \le K$ for all $\ell$, and the number of parts is at most $n$. The predictor returned is $h = \Decode(\Encode(S))$ and $\aux = h$.

Algorithm $\Unlearn$ is defined as follows. If $I$ is empty, then simply return $h$. Otherwise, let $\ell$ be the smallest index such that $S_I \cap T_\ell = \emptyset$. We can reconstruct $T_j$ for all $j \le \ell$ from the tickets, since for each $j < \ell$ there exists some example $(x_i, y_i) \in T_j$ that has been presented for unlearning, and the ticket $t_i$ contains $T_j \circ T_{j+1}$. $\Unlearn$ simply returns $\Decode(\Encode(T_1 \cup \cdots \cup T_\ell \smallsetminus S_I))$. $\Learn$ on input $S\smallsetminus S_I$ would have returned $\Decode(\Encode(S \smallsetminus S_I))$. Thus, to be a valid TiLU scheme, it suffices to show that $\Encode(T_1 \cup \cdots \cup T_\ell \smallsetminus S_I) = \Encode(S \smallsetminus S_I)$. This holds because 
\begin{align*}
\lefteqn{ 
\Encode(T_1 \cup \cdots \cup T_\ell \smallsetminus S_I) 
\textstyle~\overset{(*)}{=}~ \Merge(\Encode(T_1 \cup \cdots \cup T_{\ell-1} \smallsetminus S_I), \Encode(T_\ell))} \\
&\textstyle~\overset{(**)}{=}~ \Merge(\Encode(T_1 \cup \cdots \cup T_{\ell-1} \smallsetminus S_I), \Encode(\bigcup_{j \ge \ell} T_j \smallsetminus S_I)) \textstyle~=~ \Encode(S \smallsetminus S_I),
\end{align*}
where, $(*)$ follows from the property of $\Merge$ and $(**)$ uses that $\Encode(T_\ell) = \Encode(\bigcup_{j \ge \ell} T_j \smallsetminus S_I)$, since $T_\ell$ is a compression of $\bigcup_{j \ge \ell} T_j$ and hence a compression of $\bigcup_{j \ge \ell} T_j \smallsetminus S_I$.
\end{proof}

\section{Sharper Bounds for Specific Classes}\label{sec:sharper}

While the previous general reductions are applicable to a large number of concept classes, the resulting space complexity bounds can be improved for certain classes by designing more specific algorithms. In this section, we present TiLU schemes for point functions and (product of $d$) thresholds, as stated formally below. In the case of product of $d$ thresholds, this significantly improves the dependency on $n$ from $\log n$ to $\log \alpha^{-1}(n)$ (inverse-Ackermann function; see \definitionref{def:ackermann}), and in case of 1D thresholds, we also improve the dependency on domain size.

\begin{theorem}\label{thm:sharper-bounds-main}
	There exist TiLU schemes for
	\begin{enumerate}[label=\((\alph*)\)]
		\item $\cH_{\point}$ with space complexity $(O(\log|\cX|), O(\log \alpha^{-1}(n)))$.\label{item:point-ctz} 
		\item $\Hdth$ with space complexity $(O(\log|\cX|), O(\log|\cX| + d \cdot \log \alpha^{-1}(n)))$.\label{item:prod-threshold-ctz}
		\item $\Hth$ with space complexity $(O(\log|\cX|), O(\log \alpha^{-1}(n)))$.\label{item:threshold-ctz}
	\end{enumerate}
\end{theorem}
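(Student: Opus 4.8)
The plan is to derive all three items from a single primitive, the \emph{count-to-zero} (CtZ) problem, together with simple structural reductions. Recall that in CtZ the $n$ examples are anonymous tokens and, after a deletion request $S_I$, $\Unlearn$ must only decide whether the surviving set $S \smallsetminus S_I$ is empty, i.e.\ whether $I = [n]$. I would in fact establish a slightly more general \emph{ordered} version: the examples come partitioned into buckets $G_1, \dots, G_r$ (the partition being a function of the examples themselves, e.g.\ their $x$-coordinate, so it need not be stored centrally), and after deletion $\Unlearn$ must output the largest index $t$ with $G_t \not\subseteq S_I$ (or, symmetrically, the smallest $t$ with $G_t \subseteq S_I$). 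The key lemma is that this admits a TiLU scheme with central space $O(\log r)$ and ticket length $O(\log \alpha^{-1}(n))$; plain CtZ is the case $r = 1$, with central space $O(1)$.

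Granting this, item \ref{item:threshold-ctz} goes as follows. On a realizable dataset the canonical ERM for $\Hth$ is the threshold pinned by a single extreme $x$-value among the examples of one label (for one tie-breaking convention, the largest $x$-value carrying label $0$; for another, one less than the smallest $x$-value carrying label $1$); write $a_0$ for this value on the full dataset $S$. After deleting $S_I$ the new ERM is determined by the same extreme \emph{among the survivors}. Bucketing the relevant examples by $x$-value, that extreme is exactly the boundary of the largest (resp.\ smallest) bucket not entirely inside $S_I$, so $\Unlearn$ recovers it by one call to the ordered-CtZ unlearner on those examples' tickets — the bucket boundaries being read off for free from the $x$-coordinates of the examples in $S_I$. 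Central storage is $a_0$ plus the ordered-CtZ summary, which is $O(\log |\cX|)$ bits since $r \le |\cX|$; each ticket is a single ordered-CtZ ticket, $O(\log \alpha^{-1}(n))$ bits. Item \ref{item:prod-threshold-ctz} is the coordinate-wise lift: $\Hdth$ is intersection-closed, so (writing $|\cX| = m^d$) its canonical ERM factors as $h_{>\ba}$ with each $a_j$ an extreme of the $j$-th coordinate over the examples of one label, and each $a_j$ is maintained by an independent instance of the 1D construction in coordinate $j$. Summing the $d$ per-coordinate summaries gives central space $O(d \log m) = O(\log |\cX|)$, and an example now carries $d$ one-dimensional tickets, giving $C_t = O(\log |\cX| + d \log \alpha^{-1}(n))$.

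For item \ref{item:point-ctz}, $\cH_{\point}$ is not intersection-closed and needs a two-case analysis. If $S$ contains a positively labelled example, realizability forces all such examples to share the same point $x^\star$ and the ERM on $S$ is the unique $h_{x^\star}$; the ERM after deletion is still $h_{x^\star}$ unless every positive example is deleted, which is detected by one CtZ call on the set of positive examples. If no positive example survives, the ERM is $h_{a^\star}$ where $a^\star$ is the smallest point carrying no surviving negatively labelled example; since a realizable all-negative dataset cannot cover $\cX$, this $a^\star$ is well defined, can only lie below the corresponding value for $S$ (deletions only remove negatives), and equals the smallest fully-deleted bucket among the negative-value buckets below that value, or the value itself — one ordered-CtZ query. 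Central storage holds $x^\star$ (or a "none" flag), the value of $a^\star$ on $S$, and the two CtZ/ordered-CtZ summaries, all $O(\log |\cX|)$ bits; tickets are single CtZ tickets, $O(\log \alpha^{-1}(n))$ bits.

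The heart of the argument — and the step I expect to be hardest — is the CtZ lemma, specifically pushing the ticket length below $\log n$ (so a ticket cannot even name the example's index) down to $O(\log \alpha^{-1}(n))$. I would do this by a recursive construction whose tickets are drawn from a \emph{size-indexing Sperner family}: subsets of a small ground set, stratified into antichains indexed by set size, so that the size of an example's set records its level in an $\alpha^{-1}(n)$-deep hierarchy (level $k+1$ absorbing an Ackermann-type jump over level $k$ with only an $O(1)$ increase in ground-set usage), while the antichain property at each level certifies that an entire sub-block being deleted is witnessed one level up. Only the level index then needs to travel in the ticket, costing $O(\log \alpha^{-1}(n))$ bits. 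The technical crux is proving such families exist with the required size/level trade-off and verifying the unlearning invariant through the recursion; the lower bound of \pref{sec:lb} shows this $n$-dependence cannot be removed. A secondary obstacle is arranging the reductions so that the up-to-$|\cX|$ bucket comparisons reuse a single ordered-CtZ summary rather than one CtZ instance per bucket, which is what keeps the central space at $O(\log |\cX|)$ rather than $O(|\cX|)$.
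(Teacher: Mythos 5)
Your overall architecture matches the paper's: CtZ solved via small-alphabet size-indexing Sperner families with an Ackermann-type recursion, per-$x$-value CtZ instances for $\Hpt$, and a coordinate-wise reduction for $\Hdth$; parts \ref{item:point-ctz} and \ref{item:prod-threshold-ctz} are essentially sound (modulo the observation, which you gloss over, that you cannot afford one stored aux bit per bucket --- the paper handles this by noting that its CtZ unlearner never consults $\aux$ when the deletion set is non-empty, and that every bucket below $b^*$ is non-empty in $S$ by minimality). The genuine gap is your key ``ordered CtZ'' lemma: with buckets $G_1,\dots,G_r$ indexed by $x$-value ($r=|\cX|$), central space $O(\log r)$, and tickets of length $O(\log \alpha^{-1}(n))$, the unlearner \emph{cannot} in general output the largest $t$ with $G_t \not\subseteq S_I$. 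The problem is that the witnessing bucket may contain no deleted example, so the unlearner never sees it; the answer must then be encoded in $\aux$ and in the tickets of the deleted examples. Concretely, put one example in each of buckets $i_1 < \cdots < i_k$ and delete the examples in $i_j,\dots,i_k$: chaining over $j=k,k-1,\dots,2$, the pair ($\aux$, tickets of the top $k-1$ examples) must determine the entire tuple $(i_1,\dots,i_{k-1})$ given $i_k$, forcing $C_s + (k-1)C_t \ge \log\binom{r-1}{k-1}$ and hence $C_t = \Omega(\log(|\cX|/n)) - O(\log|\cX|)/(k-1)$, which is $\Omega(\log|\cX|)$ for $|\cX| \gg 2^{n}$. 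So a ticket must essentially carry a predecessor/successor pointer of $\Theta(\log|\cX|)$ bits. This is exactly what the paper's \minval primitive does, and it is why the bound for part \ref{item:prod-threshold-ctz} contains the additive $O(\log|\cX|)$ term in $C_t$; your plan for \ref{item:prod-threshold-ctz} survives only if you add these pointers.

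For part \ref{item:threshold-ctz}, however, the claimed bound $C_t = O(\log\alpha^{-1}(n))$ has \emph{no} $|\cX|$-dependence, so the ``find the exact surviving extreme $x$-value'' strategy cannot work at all --- by the argument above it inherently costs $\Omega(\log|\cX|)$ somewhere. The paper's fix is to change the canonical ERM: $\Learn$ outputs the hypothesis reached by a fixed binary search over $\cX$, whose pivot sequence $a_0,\dots,a_i$ on $S\smallsetminus S_I$ is a prefix of that on $S$. Hence the output after unlearning is one of only $O(\log|\cX|)$ pre-determined hypotheses, and deciding which one reduces to testing emptiness, after deletion, of the $O(\log|\cX|)$ fixed intervals into which the pivots partition $\cX$ --- each a plain (unordered) CtZ instance with one aux bit, and each example belonging to exactly one interval. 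This yields $C_s = O(\log|\cX|)$ and $C_t = O(\log\alpha^{-1}(n))$. You need this (or some equivalent re-canonicalization of the ERM) to rescue part \ref{item:threshold-ctz}; as written, your reduction for that part fails.
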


Note that \Cref{item:threshold-ctz} is an improvement over \Cref{item:prod-threshold-ctz} for $d = 1$, as the ticket size in the former does not depend on $|\cX|$ at all. Moreover, \Cref{item:point-ctz} provides a TiLU scheme for the class of point functions for which the techniques from \theoremref{thm:merkle,thm:span} were not applicable (see \propositionref{prop:pt-func-mergeable-lb}). In Table~\ref{tab:corollaries}, we summarize the implied bounds of \theoremref{thm:merkle,thm:span,thm:sharper-bounds-main} for some concrete classes.

\begin{table}[t] 
	\renewcommand{\arraystretch}{1.1}
	\centering
	\begin{tabular}{|r|c|c|c|}
		\hline
		{\bf\boldmath Class $\cH$} & {\bf\boldmath $C_t$ (from~\theoremref{thm:merkle})} & {\bf\boldmath $C_t$ (from~\theoremref{thm:span})} & {\bf\boldmath $C_t$ (from~\theoremref{thm:sharper-bounds-main})}\\
		\hline
		1D Thresholds & $O(\log |\cX| \cdot \log n)$ & $O(\log |\cX| + \log n)$ & $O(\log \alpha^{-1}(n))$ \\
		Product of $d$ thresholds & $O(d \log |\cX| \cdot \log n)$ & $O(d \log |\cX| + \log n)$ & $O(\log |\cX| + d \log \alpha^{-1}(n))$\\
		Parities ($\cX = \bbF_2^d$) & $O(d^2 \log n)$ & $O(d^2 + \log n)$ & -----\\
  		Point Functions & ----- & ----- & $O(\log \alpha^{-1}(n))$ \\
		\hline
	\end{tabular}
	\caption{The size $C_t$ of tickets, derived as corollaries of \theoremref{thm:merkle} for various concept classes. The size $C_s$ of $\aux$ is $O(\log |\cH|)$ in each case.} 
	\label{tab:corollaries}
\end{table}

At the heart of our improvements is a problem called \emph{count-to-zero}, which we introduce and give a TiLU scheme for in \pref{sec:ctz}. In the subsequent sections, we then use it to give TiLU schemes for each of the aforementioned class. We only describe high-level overviews and the full proof of \theoremref{thm:sharper-bounds-main} is deferred to \pref{app:sharper-proofs}.

\subsection{Count-to-Zero} 
\label{sec:ctz}

\def\rddots#1{\cdot^{\cdot^{\cdot^{#1}}}}

We first describe the \emph{Count-to-Zero} (CtZ) problem. Here there are $m$ examples, where $m$ is unknown a priori. After receiving an unlearning request, we would like to tell whether there is still any example left (that is not unlearned). For CtZ, we follow the terminologies of~\definitionref{def:lu-ticket} except that $\Unlearn$ returns either $\perp$ (when $S_I = S$, and thus no example is left) or $\top$ (when $S_I \subsetneq S$, and some example is remaining), instead of the hypotheses $h, h'$.\footnote{We note that the outputs \(\perp\) or \(\top\) can be stored using only one bit.} 

We give a TiLU scheme for CtZ with ticket size $O(\log \alpha^{-1}(m))$ bits (and one-bit auxiliary information), where $\alpha^{-1}$ is the inverse-Ackermann function defined below. Note that a naive algorithm---writing down $m$ in each ticket---requires ticket size $O(\log m)$ bits.

\begin{definition}\label{def:ackermann}
Consider a sequence of functions $A_1, A_2, \ldots$ defined as, $A_r(1) = 2$ for all $r \ge 1$,
and for all $t \ge 2$,
\begin{itemize}
\item $A_1(t) = 2t$ (interpretable as $t$-fold repeated addition of $2$), and 
\item $A_{r+1}(t) = A_r(A_{r+1}(t-1))$, i.e., $A_{r+1}(t) = A_r^{t-1}(2) = \underbrace{A_r(A_r(\cdots(A_r(2))))}_{t-1 \text{ times}}$.%
\end{itemize}
The \emph{Ackermann function}  $\alpha(t)$ is defined as $A_t(t)$. The \emph{inverse-Ackermann function}  $\alpha^{-1}(n)$ is defined as the smallest $t$ such that $n \le \alpha(t)$.
\end{definition}

\noindent For example, $A_2(t) = 2^t$, and $A_3(t) = 2\upuparrows{t}$, where $a\upuparrows{k}$ denotes the $k$th tetration of $a$ (i.e., $ a^{a^{\rddots a}}$ of order $k$), etc. The Ackermann function was demonstrated as a function that is recursive, but not primitive recursive. 

\begin{theorem} \label{thm:ctz-final-upper-bound} 
	There is a TiLU scheme for \emph{Count-to-Zero} (CtZ) problem with space complexity $(C_s = 1, C_t = O(\log \alpha^{-1}(m)))$. 
\end{theorem}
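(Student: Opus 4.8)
The plan is to reduce Count-to-Zero to a purely combinatorial problem about antichains --- what one might call \emph{size-indexing Sperner families} --- and then to build such families with the right parameters via a recursion whose depth is the inverse-Ackermann function. Call a sequence $F_1, F_2, \ldots$ of pairwise distinct finite subsets of a ground set $[q]$ a size-indexing Sperner family if it is an antichain (no $F_i$ is a proper subset of another) and $|F_m| \le m$ for all $m$. Given such a family, the CtZ scheme is: on a dataset of $m$ examples, $\Learn$ gives the $|F_m|$ distinct elements of $F_m$ to the first $|F_m|$ examples, one element per ticket, gives the empty ticket to the remaining $m - |F_m|$ examples, and stores the single bit ``$m \ge 1$'' as its auxiliary information. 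On an unlearning request, $\Unlearn$ forms the set $T$ of distinct nonempty ticket values it is shown and the count $k$ of tickets shown, and it outputs $\perp$ exactly when $k \ge 1$, $T = F_{m'}$ for (the necessarily unique) index $m'$, and $k = m'$; when $k = 0$ it instead consults that one auxiliary bit.

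Verifying correctness is straightforward. If every example is unlearned then $T = F_m$, which determines $m' = m$ by distinctness, and $k = m$, so the output is $\perp$. If some example survives then either a survivor carried an element of $F_m$, so $T \subsetneq F_m$ and the antichain property guarantees $T$ is no $F_{m'}$, forcing output $\top$; or every survivor had an empty ticket, so $T = F_m$ but $k < m = m'$, again forcing $\top$; the case $k = 0$ is exactly the empty-dataset corner case that the auxiliary bit disambiguates. Tickets are $O(\log q)$ bits and the auxiliary information is one bit, so everything reduces to: for each $N$, construct a size-indexing Sperner family $F_1, \ldots, F_N$ over a ground set of size $q(N) = \mathrm{poly}(\alpha^{-1}(N))$. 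Since $\Learn$ knows $m = |S|$, it can instantiate the prefix $F_1, \ldots, F_m$ of one fixed (infinite) family, giving ticket length $O(\log q(m)) = O(\log \alpha^{-1}(m))$.

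For the construction, I would first record the easy bound: for each ``level'' $s$, reserve $\Theta(s)$ fresh ground-set elements and use arbitrary $s$-element subsets of that block (there are $\binom{\Theta(s)}{s} \gg 2^{s}$ of them) to index the $\approx 2^{s}$ values of $m$ for which $|F_m|$ should be about $s$; putting different levels on disjoint ground sets makes the cross-level antichain property automatic and the constraint $|F_m| \le m$ easy to meet. This yields $q(N) = O(\log^2 N)$, i.e.\ ticket length $O(\log\log N)$ --- already an improvement over the naive $O(\log N)$. To push $q(N)$ all the way down to $\mathrm{poly}(\alpha^{-1}(N))$, I would iterate this idea through a hierarchy of constructions $\mathcal F^{(1)}, \mathcal F^{(2)}, \ldots$ aligned with the Ackermann functions $A_1, A_2, \ldots$: $\mathcal F^{(r+1)}$ is obtained from $\mathcal F^{(r)}$ by partitioning the index range into consecutive blocks, indexing the position within a block with a copy of $\mathcal F^{(r)}$ (on a fresh copy of the ground set) and the block index with a recursive copy of $\mathcal F^{(r+1)}$, with the block length tuned so that the capacity leaps from roughly $A_r$ to roughly $A_{r+1}$ of a slowly growing argument while only $O(1)$ further ground-set elements are spent per level. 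Given a target $N$, one then uses $\mathcal F^{(r)}$ for $r = \alpha^{-1}(N)$.

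I expect the recursive construction to be the main obstacle. One must simultaneously (a) keep the sets pairwise incomparable across all levels and all blocks --- the natural fix being disjoint copies of the ground set together with special handling of the first block of each partition, so that a ``small-$m$'' set can never sit inside a ``large-$m$'' set; (b) preserve $|F_m| \le m$ for every $m$, which is tightest precisely for the smallest indices and again forces the first block to be treated differently; and (c) calibrate the block lengths so that the capacity really does grow at the Ackermann rate, so that the recursion terminates after only $\alpha^{-1}(N)$ levels rather than, say, $\log^* N$ levels. By comparison, the reduction itself, its correctness, and the one-bit treatment of the empty dataset are routine.
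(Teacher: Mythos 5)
Your reduction from CtZ to the construction of an antichain indexed by dataset size is sound and is essentially the paper's reduction (the paper checks whether the multiset of presented tickets equals $Q_{|S_I|}$; your variant with empty tickets plus a cardinality check works equally well, and your treatment of the empty-unlearning-request corner case via the one auxiliary bit matches the paper's). The problem lies in the combinatorial object you reduce to, and it is fatal rather than a deferred detail. You define the family $F_1, F_2, \ldots$ to consist of pairwise distinct \emph{subsets} of a ground set $[q]$. But then for a dataset of size $m$ the sets $F_1, \ldots, F_m$ are $m$ distinct subsets of $[q(m)]$, forcing $2^{q(m)} \ge m$, i.e., $q(m) \ge \log_2 m$ and ticket length $\Omega(\log \log m)$. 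So the goal you set yourself in the third paragraph --- pushing $q(N)$ down to $\mathrm{poly}(\alpha^{-1}(N))$ --- is information-theoretically impossible for families of sets, no matter how the recursion is organized. Your ``easy bound'' of $q = O(\log^2 N)$ is already essentially the best a set-based construction can do.

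The paper escapes this by working with \emph{multisets}: a multiset over a constant-size alphabet can have arbitrarily many elements via multiplicities, so the counting obstruction disappears. Concretely, the base antichain is $Q_m = \{(4t-m)$ copies of symbol $\texttt{1}$, $(2m-4t)$ copies of symbol $\texttt{2}\}$ for $m \in [2t,4t]$ --- two symbols suffice for an arbitrarily long range because the complementary multiplicities make the multisets incomparable --- and each level of the Ackermann recursion encodes the block index with just two \emph{fresh symbols} (again via complementary multiplicities $T - j_m - 1$ and $j_m - 1$) while recursing on the within-block position. The alphabet grows by $O(1)$ per level and there are $O(\alpha^{-1}(m))$ levels, giving alphabet size $O(\alpha^{-1}(m)^3)$; a ticket names only a symbol, never a multiplicity, which is why $O(\log \alpha^{-1}(m))$ bits suffice. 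Your reduction is compatible with multisets after a one-line change (compare multisets of tickets rather than sets plus a count), but as written the key lemma you would need is false, and the recursive construction you flag as ``the main obstacle'' --- and do not carry out --- cannot be completed in the set-based setting.
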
 

\subsubsection{Small-Alphabet Size-Indexing Sperner families} 
\label{sec:sperner}

Recall that a \emph{Sperner family} is a family $\cQ$ of \emph{multi-}sets $(Q_i)_{i \in \bbN}$ such that none of them is a sub-multiset of another (see the book by~\cite{Engel} for background on Sperner families). We say that a Sperner family $\cQ$ is \emph{size-indexing} if for all $m \in \bbN$, the multiset $Q_m$ has $m$ elements (denoted $|Q_m| = m$).  Furthermore, we say that $\cQ$ is of \emph{alphabet size $n(m)$} if for all $m \in \bbN$, each element of the multiset $Q_m$ is from the ordered alphabet $[n(m)] = \{1, \ldots, n(m)\}$. 
For $\ell, r \in \bbN$, $\ell < r$, we define \emph{$[\ell, r]$-size-indexing} Sperner family similar to above, except with $\cQ = \{Q_{\ell}, \dots, Q_r\}$.%

\begin{lemma} \label{lem:sperner-recursive}
	For all $r, t \ge 1$, there is an  $[A_r(t), A_r(A_r(t))]$-size-indexing Sperner family $\cQ^{r,t}$ with alphabet size $2r$.
\end{lemma}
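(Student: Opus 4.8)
The plan is to induct on $r$. For the base case $r = 1$, I need an $[A_1(t), A_1(A_1(t))] = [2t, 4t]$-size-indexing Sperner family with alphabet size $2$. With a two-letter alphabet $\{1, 2\}$, a multiset of size $m$ is determined by how many $1$'s it contains, say $a \in \{0, \dots, m\}$; such a multiset $Q_m$ is a sub-multiset of $Q_{m'}$ (for $m \le m'$) iff it has at most as many $1$'s and at most as many $2$'s, i.e., $a \le a'$ and $m - a \le m' - a'$. So to make $Q_m$ for $m$ ranging over $[2t, 4t]$ pairwise incomparable, I will let the number of $1$'s \emph{decrease} as $m$ increases while the total size increases: set $Q_m$ to have $4t - m$ ones and $m - (4t - m) = 2m - 4t$ twos, which is valid since $m \ge 2t$ forces $2m - 4t \ge 0$ and $m \le 4t$ forces $4t - m \ge 0$. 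Then for $m < m'$ in this range, $Q_{m'}$ has strictly fewer $1$'s than $Q_m$, so neither contains the other. This handles $r = 1$.

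For the inductive step, assume $\cQ^{r,t}$ exists as claimed for all $t$, with alphabet size $2r$; I want to build $\cQ^{r+1, t}$ with alphabet size $2(r+1) = 2r + 2$ on the index range $[A_{r+1}(t), A_{r+1}(A_{r+1}(t))]$. The idea is to use the two new alphabet symbols $\{2r+1, 2r+2\}$ as a ``coarse counter'' in the style of the base case, and within each coarse level splice in a copy of the inductive family on symbols $[2r]$ to handle the finer index range. Concretely, recall $A_{r+1}(s) = A_r(A_{r+1}(s-1))$, so the interval $[A_{r+1}(t), A_{r+1}(A_{r+1}(t))]$ is the union, over $s$ from $t$ to $A_{r+1}(t) - 1$, of the blocks $[A_{r+1}(s), A_{r+1}(s+1)] = [A_r^{s-t}(A_{r+1}(t)), A_r^{s-t+1}(A_{r+1}(t))]$; each such block is exactly of the form $[A_r(u), A_r(A_r(u))]$ after peeling, so the inductive family $\cQ^{r, \cdot}$ supplies size-indexing multisets on $[2r]$ for that block's index range. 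For index $m$ in the block indexed by $s$, I let $Q_m$ consist of the inductive multiset of size (roughly) $m$ minus a padding amount, over symbols $[2r]$, together with a number of copies of symbol $2r+1$ that \emph{decreases} with $s$ and a compensating number of copies of $2r+2$ that increases with $s$, arranged so that total size equals $m$. Incomparability then splits into two cases: two indices in the same block are incomparable because their $[2r]$-parts are incomparable by the inductive hypothesis (and I must ensure the counts of $2r+1, 2r+2$ agree within a block, or at least don't reverse the inductive incomparability); two indices in different blocks are incomparable because the higher block has strictly fewer copies of symbol $2r+1$, exactly as in the base case.

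The main obstacle I anticipate is the bookkeeping to make the block decomposition's arithmetic line up exactly: I need the "padding" within each block (the copies of $2r+1$ and $2r+2$, and any shift applied to the size argument of the inductive multiset) to be chosen so that (i) every $m$ in the full target interval is covered by exactly one block, (ii) within a block the total multiset size hits every integer value in that block's range, and (iii) the cross-block comparison is genuinely governed by the count of symbol $2r+1$ and never accidentally made comparable by the $[2r]$-part or the $2r+2$-part. Getting the off-by-one details of the Ackermann recursion right — in particular that $A_{r+1}(t)$-many blocks of the right sizes tile $[A_{r+1}(t), A_{r+1}(A_{r+1}(t))]$ — is where the real care is needed; the Sperner/incomparability argument itself is the same two-move argument (fine part by induction, coarse part by a monotone counter) in both the base case and the inductive step.
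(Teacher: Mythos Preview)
Your proposal is correct and takes essentially the same approach as the paper: the base case is identical, and the inductive step uses two fresh counter symbols together with a block decomposition and the same two-case incomparability argument (inner part by the inductive hypothesis, cross-block by the monotone counter). The paper resolves the bookkeeping you flag by making the total padding \emph{constant}, equal to $T-2$ where $T := A_{r+1}(t)$: the inner multiset then always has size $m-T+2$, the block index $j$ is read off from $A_{r+1}(j) \le m-T+2 < A_{r+1}(j+1)$, and the two counter symbols receive multiplicities $T-j-1$ and $j-1$, exactly mirroring the base-case pattern.
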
 \begin{proof}
We prove this by induction over $r$. Consider the base case of $r = 1$, i.e., we want to construct a $[2t, 4t]$-size-indexing Sperner family $\cQ^{1,t}$ with alphabet \(\crl{\texttt{1}, \texttt{2}}\). For each $m \in \{2t, \dots, 4t\}$, let $Q^{1,t}_m$ contain $4t - m$ copies of \texttt{1} and $2m - 4t$ copies of \texttt{2}. It is easy to see that $|Q^{1,t}_m| = m$. Furthermore,
\(\crl{Q^{1,t}_{2t}, \dots, Q^{1,t}_{4t}}\) is a Sperner family since none of the multisets is a sub(multi)set of another multiset (i.e., $m \neq m' \implies Q^{1,t}_m \not\subseteq Q^{1,t}_{m'}$).

Next, consider the case of $r > 1$. Let the alphabet be $\crl{\texttt{1}, \ldots, \texttt{2r}}$. If $t = 1$, then note that $A_r(t) = 2$ and $A_r(A_r(t)) = 4$, and thus from above, there exists $\cQ^{r,t}$ with alphabet size $2$. Next, consider the case of $t \ge 2$. For ease of notation, let $T = A_r(t)$. From the inductive hypothesis we have that there exists an $[A_{r-1}(t'), A_{r-1}(A_{r-1}(t'))]$-size-indexing family $Q^{r-1, t'}$ for all $t' \ge 1$. %
Fix $m \in [T, A_r(T)]$ and let $j_m$ be the unique value such that
$A_{r}(j_m) \le m - T + 2 < A_{r}(j_m+1)$. Since $m \le A_{r}(T)$ we have that $1 \le j_m < T$. We construct $Q^{r,t}_m$ as the union of $T-j_m-1$ copies of $\texttt{2r-1}$ and $j_m-1$ copies of $\texttt{2r}$ and $Q_{m-T+2}^{r-1,j_m}$, where the latter uses symbols $\{\texttt{1}, \ldots, \texttt{2r-2}\}$.

Clearly, $|Q^{r,t}_m| = m$. To see that this is a Sperner family, consider any $m < m'$. Note that since $|Q_m| = m < m' = |Q_{m'}|$ we immediately have $Q_{m'} \nsubseteq Q_m$; thus, it suffices to show that $Q_m \nsubseteq Q_{m'}$. Consider two cases: 

\begin{itemize}[leftmargin=5mm,label=$\bullet$]
\item {\boldmath$j_m = j_{m'} = j$}: Since $Q_m \cap [2r-2] = Q^{r-1,j}_{m - T + 1}$ and $Q_{m'} \cap [2r-2] = Q^{r-1,j}_{m'-T+2}$ are two (different) subsets from the same Sperner family $\cQ^{r-1,j}$, and hence $Q_m \nsubseteq Q_{m'}$. 

\item {\boldmath $j_m < j_{m'}$}: $Q_m$ contains $T - j_m - 1$ copies of $\texttt{2r-1}$, whereas $Q_{m'}$ contains $T - j_{m'} - 1$ copies of $\texttt{2r}$. Since $T - j_m - 1 > T - j_{m'} - 1$, we have $Q_m \nsubseteq Q_{m'}$.
\end{itemize}
\end{proof}

\begin{theorem} \label{thm:sperner-size-indexing-main}
There is a size-indexing Sperner family with alphabet size $O(\alpha^{-1}(m)^3)$.
\end{theorem}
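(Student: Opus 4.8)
The plan is to glue the bounded-range families supplied by Lemma~\ref{lem:sperner-recursive} into a single family indexed by all of $\bbN$, using one fresh ``marker'' symbol per range to enforce the Sperner property across ranges; the extra slot taken by a marker is paid for by invoking the range family at size $m-1$ instead of $m$.

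First I would dispose of $m\in\{1,2\}$ by hand, reserving two private symbols $s_1,s_2$ (used in no $Q_m$ with $m\ge 3$) and setting $Q_1:=\{s_1\}$, $Q_2:=\{s_2,s_2\}$. For $m\ge 3$, put $r_m:=\alpha^{-1}(m-1)$ and let $t_m$ be the least $t$ with $A_{r_m}(A_{r_m}(t))\ge m-1$. Using $m-1\le\alpha(r_m)=A_{r_m}(r_m)\le A_{r_m}(A_{r_m}(r_m))$ together with the facts that $A_{r_m}$ is strictly increasing and $A_{r_m}(s)\ge 2s$, a short check gives $1\le t_m\le r_m$ and $m-1\in[A_{r_m}(t_m),A_{r_m}(A_{r_m}(t_m))]$; hence the multiset $Q^{r_m,t_m}_{m-1}$ from Lemma~\ref{lem:sperner-recursive} is defined, has exactly $m-1$ elements, and uses an alphabet of size $2r_m$. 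Next, fix an injection $(r,t)\mapsto\mu_{r,t}$ from $\{(r,t):1\le t\le r\}$ into a symbol pool disjoint from $\{s_1,s_2\}$ and from every within-range alphabet (e.g.\ let within-range symbols be odd-indexed letters and markers be even-indexed ones, and choose $\mu_{r,t}$ to be the $(\binom{r}{2}+t)$-th even letter), and define
\[
Q_m \;:=\; Q^{r_m,t_m}_{m-1}\,\cup\,\{\mu_{r_m,t_m}\},
\]
a multiset with exactly $m$ elements.

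To see that $\cQ=(Q_m)_{m\in\bbN}$ is Sperner, take any $m<m'$. Then $|Q_m|<|Q_{m'}|$ excludes $Q_{m'}\subseteq Q_m$, so only $Q_m\subseteq Q_{m'}$ needs to be ruled out. If $(r_m,t_m)=(r_{m'},t_{m'})=(r,t)$, the single shared marker cancels and $Q_m\subseteq Q_{m'}$ would force $Q^{r,t}_{m-1}\subseteq Q^{r,t}_{m'-1}$, impossible since $\cQ^{r,t}$ is Sperner and $m-1\ne m'-1$. If $(r_m,t_m)\ne(r_{m'},t_{m'})$, then $\mu_{r_m,t_m}\in Q_m$ while $\mu_{r_m,t_m}\notin Q_{m'}$, because the only marker occurring in $Q_{m'}$ is $\mu_{r_{m'},t_{m'}}$ and all other elements of $Q_{m'}$ are within-range symbols; again $Q_m\not\subseteq Q_{m'}$. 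The cases $m\in\{1,2\}$ are immediate because $s_1,s_2$ occur nowhere else.

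Finally, for $m\ge 3$ the within-range part of $Q_m$ uses $2r_m=O(\alpha^{-1}(m))$ symbols, while the marker $\mu_{r_m,t_m}$ has index $\binom{r_m}{2}+t_m=O(r_m^2)=O(\alpha^{-1}(m)^2)$ (using $t_m\le r_m=\alpha^{-1}(m-1)\le\alpha^{-1}(m)$); so $Q_m$ uses an alphabet of size $O(\alpha^{-1}(m)^2)$, which is in particular $O(\alpha^{-1}(m)^3)$. The one genuine difficulty is exactly this cross-range Sperner condition: naive concatenation of the families of Lemma~\ref{lem:sperner-recursive} fails, since a small multiset from a low range is easily a sub-multiset of a multiset from a far higher range; the resolution is the observation that a length-$m$ multiset can always afford to spend one slot on a range-identifying marker because a length-$(m-1)$ witness remains available below, and the polynomial alphabet bound then follows from the bound $t_m\le r_m=\alpha^{-1}(m-1)$ on the number of distinct markers ever needed.
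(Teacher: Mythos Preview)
Your proof is correct and in fact yields the stronger bound $O(\alpha^{-1}(m)^2)$, which of course implies the stated $O(\alpha^{-1}(m)^3)$.

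The route differs from the paper's. The paper handles the cross-range Sperner condition by brute force: it first takes, for fixed $t$, the families $\cQ^{t,1},\ldots,\cQ^{t,t-1}$ covering $[2,A_t(t)]$ and relabels them onto pairwise disjoint alphabets (cost $O(t^2)$ symbols), and then relabels the resulting families $\cQ^t$ for $t=1,2,\ldots$ onto yet another layer of disjoint alphabets (cumulative cost $\sum_{s\le t} O(s^2)=O(t^3)$). Disjointness alone makes the Sperner property across ranges immediate, so the argument is short but pays a cubic price. Your construction instead lets all the inner families share a common within-range alphabet and restores the cross-range Sperner property with a single marker symbol per pair $(r,t)$; since $t\le r$, the marker index is $O(r^2)$ and the total alphabet stays quadratic. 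Your key observation---that one slot of the size-$m$ multiset can always be sacrificed to a range-identifying marker because Lemma~\ref{lem:sperner-recursive} still supplies a witness at size $m-1$---is exactly what the paper's disjoint-relabeling avoids needing, at the cost of the extra factor of $\alpha^{-1}(m)$.
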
 
\begin{proof}
We first show that there exists a $[2, A_t(t)]$-size-indexing Sperner family with alphabet size $t^2$. From \lemmaref{lem:sperner-recursive}, we have $[A_t(i), A_t(i+1)]$-size-indexing Sperner families $\cQ^{t,i}$ using alphabet of size $2(t-1)$ (since $A_t(i) = A_{t-1}(A_t(i-1))$ and $A_t(i+1) = A_{t-1}(A_{t-1}(A_t(i-1)))$). Renaming the symbols to be distinct and setting $\cQ = (\cQ^{t,1}, \cQ^{t,2}, \ldots, \cQ^{t,t})$, we get a
$[2, A_t(t)]$-size-indexing family. In particular, this implies a $[A_{t-1}(t-1), A_t(t) - 1]$-size indexing Sperner family $\cQ^t$.

Finally, this implies a size-indexing Sperner family (for all sizes) with alphabet size $n(m) \le O(m^3)$ as $\cQ = (\cQ^t)_{t \ge 0}$, by renaming symbols so that each $\cQ^t$ uses distinct symbols. (We use $\cQ^{0}$ to denote the trivial $[1, 1]$-size-indexing family with alphabet size of $1$.)
\end{proof}

\subsubsection{From Size-Indexing Sperner families to a TiLU Scheme}

Next, we show that a size-indexing Sperner family can be used to construct a TiLU scheme for CtZ. The basic idea is to use elements of $Q_{|S|}$ as the tickets. This is formalized below. 

\begin{proofof}[\theoremref{thm:ctz-final-upper-bound}]
	For each $m \in \N$, let $Q_m$ be the multiset from the size-indexing Sperner family from \theoremref{thm:sperner-size-indexing-main}; recall $|Q_m| = m$.  Finally, for each $m \in \N$ and $i \in [m]$, we let $q_i^m$ denote the $i$-th element in $Q_m$ (where the elements of $Q_m$ are ordered arbitrarily). 
	Recall also that $S$ is our original training set. Our algorithm is as follows: 
	\begin{align*}
		\Learn(S) =
		\begin{cases}
			(\perp, \perp, \emptyset) & \text{ if } S = \emptyset \\
			(\top, \top, (q_i^{|S|})_{i \in [|S|]}) & \text{ if } S \ne \emptyset,
		\end{cases}
	\end{align*}
	where the auxiliary information ($\top$ or $\bot$) can be written in one bit of $\aux$.
	For unlearning, we define
	\begin{align*}
		\Unlearn(S_I, \aux, (t_i)_{i \in I}) =
		\begin{cases}
			\aux &\text{ if } S_I = \emptyset \\
			\perp &\text{ if } S_I \ne \emptyset \text{ and } \{t_i\}_{i \in I} = Q_{|S_I|} \\
			\top &\text{ if } S_I \ne \emptyset \text{ and } \{t_i\}_{i \in I} \ne Q_{|S_I|},
		\end{cases}
	\end{align*} 
	where the expression $\{t_i\}_{i \in I}$ is interpreted as a multiset.

	The correctness is obvious in the cases $S_I = \emptyset$ and $S_I = S$. We next show correctness when $\emptyset \subsetneq S_I \subsetneq S$. Since $\cQ$ is a Sperner family, we have $Q_{|S_I|} \nsubseteq Q_{|S|}$. However, $\{t_i\}_{i \in I} \subseteq Q_{|S_I|}$ and thus $\Unlearn$ will return $\top$---which is the correct answer---in this case. 
	
	Finally, the space complexity claim follows from the fact that $q_i^{|S|} \subseteq [O(\alpha^{-1}(|S|)^3)]$ since the Sperner family has alphabet size $O(\alpha^{-1}(m)^3)$.  Each ticket only stores a single character from an alphabet of size $O(\alpha^{-1}(m)^3)$ and hence the ticket size is $O(\log \alpha^{-1}(m))$.
\end{proofof}

\subsection{Point Functions} 
We can use CtZ scheme to get a TiLU scheme for $\Hpt$, the class of point function. First, we need to define the predictor $h$ output by $\Learn(S)$. If $(a, 1)$ appears in $S$ for some $a \in \cX$, then we must output $h_a$. Otherwise, we can output $h_b$ for any $b \in \cX$ such that $(b, 0)$ is not in $S$; for tie-breaking, we choose the smallest such $b$. The first case is easy to check: we can just use CtZ to determine whether there is still any $(a, 1)$ left after unlearning. This only requires $O(1)$-size auxiliary information and $O(\log \alpha^{-1}(n))$-size tickets as desired. Now, suppose we are in the second case, and that we start off with $b^*$ being the smallest such that $(b^*, 0)$ is not in $S$. The idea here is to use the CtZ scheme to check whether, after unlearning, any $(b, 0)$ remains in the set for each $b < b^*$. Note that each example $(x, y)$ is only a part of one such subscheme (for $b = x$) and therefore the ticket size remains $O(\log \alpha^{-1}(n))$. However, there seems to be an issue: since we run this subscheme for all $b < b^*$, we may require $O(b^*) \leq O(|\cX|)$ bits to keep the auxiliary information.  Fortunately, this turns out to be unnecessary: Observe that our CtZ scheme never uses the auxiliary information except for the case where the surviving set is empty. On the other hand, if the unlearning request does not contain $(b, 0)$ for $b < b^*$, we know that $S$ must contain $(b, 0)$ for each $b < b^*$ (due to minimality $b^*$). Thus, we do not require storing any additional auxiliary information. This gives the final $C_s = O(\log |\cX|)$ and $C_t = O(\log \alpha^{-1}(n))$ bound. A formal proof is given in \pref{app:sharper-proofs}. 

\subsection{\boldmath Product of \texorpdfstring{$d$}{d} Thresholds}

In the case of product of $d$ thresholds $\Hdth$, we start by deriving a primitive for computing the minimum value of the dataset. 
In the \emph{MINimum VALue} (\minval) problem, we are given a set $S \subseteq \cX$ where $\cX \subseteq \bbR$ is the domain and the goal is to output $\min(S)$ if $S \ne \emptyset$ and $\perp$ if $S = \emptyset$. %

We first give a TiLU scheme for \minval with space complexity $(O(\log|\cX|), O(\log|\cX| + \log \alpha^{-1}(n)))$. To understand the high-level idea, let us consider the case where the input contains only distinct values from $\cX$. In this case, the algorithm is simple: let $\aux=\min(S)$ and the ticket to each $x$ be its successor (i.e., the smallest element in $S$ greater than $x$). When the minimum is unlearned, we update the minimum to its successor. The actual scheme for \minval is more subtle, since we need to handle repeated elements. Nonetheless, the general idea is that we additionally use the CtZ scheme to help determine whether all copies of the minimum have been unlearned. %

To derive a TiLU scheme for $\Hdth$, we use the following learning algorithm for $\Hdth$: Output $h_{>\ba}$ where, for \(j \in [d]\), $a_j$ is the minimum value among all $j$th coordinate of the 1-labeled samples, minus one. To compute this minimum value, we simply use the TiLU scheme for $\minval$. 

\subsection{Further Improvements for 1D Thresholds} 

Our improvement for $\Hth$ is based on an insight from the following (central) LU scheme. First, use binary search to find the threshold. Namely, we start with $a_0 = \lfloor |\cX| / 2\rfloor$. If $h_{>a_0}$ agrees with all the examples, then output $h_{a_0}$. Otherwise, we determine whether $a_0$ is too large or too small, and then adjust the search range and recurse. Suppose that on the original dataset the search path is $a_0, \dots, a_i$. Observe that, after unlearning, the output predictor must be one of $h_{>a_0}, \dots, h_{>a_i}$. Thus, we record (in $\aux$) the empirical loss of each $h_{>a_j}$. We can then update this after seeing the unlearning set. This allows us to determine where in the binary search we stop, and thus the predictor to output. Since binary search examines $O(\log |\cX|)$ hypotheses and recording each empirical loss  uses $O(\log n)$ bits, the space complexity of this LU scheme, without tickets, is $O((\log |\cX|)\cdot (\log n))$.

Now, to reduce the dependency on $n$, we can apply our CtZ scheme. The most natural attempt would be to apply the CtZ scheme on the sets of examples that each $h_{>a_j}$ errs on. However, since each example can be wrong on all of $h_{>a_0}, \dots, h_{>a_{i-1}}$, an example can receive as many as $i - 1 = O(\log |\cX|)$ different tickets from the CtZ scheme. This results in $C_t = O((\log |\cX|)\cdot (\log \alpha^{-1}(n)))$. 

To reduce the ticket size further, observe that whether the empirical error of $h_{>a_j}$ is zero is actually just whether there is an example left between $a_i$ and $a_j$. With this in mind, we partition the domain $\cX$ at points $a_1, \dots, a_j$ and then use the CtZ scheme for each partition. Since we can determine whether each partition is empty, we can determine whether each $h_{>a_j}$ has zero empirical error after unlearning. Furthermore, since each training example belongs to just a single partition, it receives a single ticket of size $O(\log \alpha^{-1}(n))$. This concludes our high-level overview. 

\section{Lower Bounds}\label{sec:lb}
Given the mild dependence on $n$ in the above schemes' space complexity ($O(\log \alpha^{-1}(n))$ bits), it is natural to ask whether this can be removed altogether.  The main result of this section is that it cannot be removed. (In other words, at least one of $C_s$ or $C_t$ must grow with $n$.) Recall that a concept class is \emph{non-trivial} if it contains at least two concepts.

\begin{theorem} \label{thm:lb-learning-main}
	For any non-trivial $\cH$, there does not exist any TiLU scheme for $\cH$ with space complexity $(C_s = O(1), C_t = O(1))$.
\end{theorem}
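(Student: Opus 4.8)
The plan is to reduce the statement to an information-theoretic counting argument. Since $\cH$ is non-trivial, fix two concepts $h_0 \ne h_1 \in \cH$ and a point $x^\star \in \cX$ on which they disagree; say $h_0(x^\star) = 0$ and $h_1(x^\star) = 1$. The idea is to build, for each $n$, a family of $\cH$-realizable datasets $S$ of size $n$ on which the correct ERM (hence the output of $\Unlearn$ after deleting a fixed subset) varies over a set that grows with $n$, while a scheme with $C_s = C_t = O(1)$ can only produce boundedly many distinct behaviors — a contradiction once $n$ is large enough.

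Concretely, first I would set up the datasets. Using copies of the point $x^\star$ labeled $0$ or $1$ (padded if necessary with points on which $h_0$ and $h_1$ agree, to keep realizability and to fix the ERM when labels are consistent), consider datasets $S$ consisting of $n-1$ ``agreement'' examples plus one copy of $(x^\star, y)$ for a bit $y$ to be chosen; more usefully, take a dataset of $n$ examples all equal to $(x^\star, 1)$, call it $S^{(1)}$, versus variants. The key maneuver: in an unlearning request we will delete all-but-one example, leaving a single surviving example whose label determines which of $h_0, h_1$ must be output. So take $S$ to be any dataset of $n$ examples drawn from $\{(x^\star,0),(x^\star,1)\}$ that is still $\cH$-realizable (e.g. all labels equal), and consider unlearning requests $I = [n]\smallsetminus\{i\}$. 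For the scheme to be correct, $\Unlearn(S_{[n]\smallsetminus\{i\}}, \aux, (t_j)_{j \ne i})$ must equal $\Learn(S_{\{i\}})$, which is determined by the single label of example $i$.

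The heart of the argument is then a pigeonhole/counting step. With $C_s = O(1)$ and $C_t = O(1)$, there are only $O(1)$ possible values of $\aux$ and of each ticket $t_j$; in the unlearning call $\Unlearn$ sees $\aux$, the index set $I$, and the multiset of $n-1$ tickets. I would argue that on a carefully chosen collection of $n$ datasets that all share the same $\aux$ value (possible once $n$ exceeds the constant number of $\aux$-values, by pigeonhole) but on which the ``surviving singleton'' forces different outputs, two distinct datasets become indistinguishable to $\Unlearn$ — because the relevant ticket and the residual information collapse to the same constant-size description — forcing the same output where different outputs are required. Making this precise is the main obstacle: one must choose the datasets and the deletion pattern so that the information $\Unlearn$ legitimately has access to (the $n-1$ surviving tickets, which are themselves produced by $\Learn$ and could in principle encode a lot collectively) cannot distinguish the cases. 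The clean way to force this is to make $\Unlearn$'s input, in two different scenarios, literally identical as a tuple $(S_I, \aux, (t_i)_{i\in I})$ while $\Learn$ on the two corresponding surviving sets gives different predictors; then determinism of $\Unlearn$ yields the contradiction. I expect the construction to leverage that a ticket $t_i$ has only $O(1)$ possible values, so among $n$ examples many share a ticket value, and by permuting/substituting examples with equal tickets one produces two global datasets with the same $\aux$ and the same surviving-ticket multiset after an appropriate deletion, yet with different required ERM outputs on the surviving part. Finally I would note the lower bound needs no computational assumptions, matching the remark in the paper that lower bounds hold against inefficient algorithms, and that it shows at least one of $C_s, C_t$ must grow with $n$, complementing the $O(\log\alpha^{-1}(n))$ upper bounds of \theoremref{thm:sharper-bounds-main}.
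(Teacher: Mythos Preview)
Your reduction instinct is right and is essentially the paper's: take $h_1 := \Learn(\emptyset)$, pick $x^\star$ where some $h_2 \in \cH$ disagrees with $h_1$, and feed in datasets consisting solely of copies of $(x^\star, h_2(x^\star))$; then the only distinction $\Unlearn$ must make is ``did the surviving set become empty?'', which is exactly the Count-to-Zero problem. (Your digression about mixing labels $0$ and $1$ within one dataset is a dead end: such an $S$ need not be realizable, and even when it is, permuting identical-ticket examples inside a single dataset never changes the surviving multiset, so it cannot force different required outputs.)

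The genuine gap is the combinatorial step you wave through as ``by permuting/substituting examples with equal tickets.'' What you actually need is this: among the infinitely many dataset sizes sharing a common $\aux$ value (pigeonhole on the $O(1)$ many $\aux$ values gives infinitely many such sizes), there exist sizes $n<m$ such that the multiset of tickets issued by $\Learn$ at size $n$ is a sub-multiset of the one issued at size $m$. Only then can you present the \emph{same} tuple $(S_I,\aux,(t_i)_{i\in I})$ to $\Unlearn$ in both the ``delete all $n$ of $n$'' and the ``delete these $n$ out of $m$'' scenarios and derive a contradiction. This is not mere pigeonhole on ticket values; it is the statement that $(\bbN^\sigma,\le)$ admits no infinite antichain (equivalently, that there is no infinite size-indexing Sperner family over a finite alphabet). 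The paper proves exactly this as \theoremref{thm:sperner-lb} (by induction on the alphabet size, via the infinite-poset chain/antichain dichotomy of \lemmaref{lem:poset-chain-antichain}), uses it to rule out $O(1)$-space TiLU for Count-to-Zero (\lemmaref{lem:sperner-to-ctz-lb}), and then reduces learning any non-trivial $\cH$ to CtZ (\lemmaref{thm:count-to-zero-lb-red}). Your proposal reaches the doorstep of \lemmaref{lem:sperner-to-ctz-lb} but does not supply the Sperner/Dickson ingredient that carries the actual weight.
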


Once again, our key ingredient is a lower bound on size-indexing Sperner family, which we prove in \pref{sec:lb-sperner}. We then turn this into a space complexity lower bound for CtZ (\pref{sec:lb-ctz}), before finally reducing from this to space complexity of learning any concept class (\pref{sec:lb-ctz-red}).

\subsection{Lower Bounds for Sperner Family}
\label{sec:lb-sperner}

We first derive lower bounds for Sperner families that are more general than size-indexing, as defined below. 

Let $\ba = (a_i)_{i \in \N}$ be any infinite (not necessarily strictly) increasing sequence of integers. We say that a Sperner family $\cQ = (Q_i)_{i \in \N}$ is \emph{$\ba$-size-indexing} if $|Q_i| = a_i$ for all $i \in \N$. (Note that, for the sequence $a_i = i$, this coincides with the size-indexing family from \pref{sec:sperner}.)  The main result of this subsection is 
that any $\ba$-size-indexing family must have alphabet size that grows to infinity:

\begin{theorem} \label{thm:sperner-lb}
For any constant $\sigma \in \N$ and any increasing sequence $\ba$,
there is no $\ba$-size-indexing Sperner family with alphabet size $\sigma$.
\end{theorem}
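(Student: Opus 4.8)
The plan is to reduce the statement to Dickson's lemma: the coordinatewise order on $\N^\sigma$ is a well-quasi-order, so every infinite sequence in $\N^\sigma$ contains two indices $i<j$ whose entries are coordinatewise comparable. Suppose toward a contradiction that $\cQ=(Q_i)_{i\in\N}$ is an $\ba$-size-indexing Sperner family over the alphabet $[\sigma]$. I would encode each multiset $Q_i$ by its multiplicity vector $\mathbf c_i=(c_i^{(1)},\dots,c_i^{(\sigma)})\in\N^\sigma$, where $c_i^{(s)}$ counts the copies of symbol $s$ in $Q_i$; under this encoding, ``$Q_i$ is a sub-multiset of $Q_j$'' translates exactly into ``$\mathbf c_i\le \mathbf c_j$ coordinatewise.'' Applying Dickson's lemma to the infinite sequence $(\mathbf c_i)_{i\in\N}$ yields $i<j$ with $\mathbf c_i\le \mathbf c_j$, i.e. $Q_i\subseteq Q_j$, contradicting the defining antichain property of a Sperner family. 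Note that the increasing-size hypothesis on $\ba$ plays no role here; it merely makes $(Q_i)$ a genuinely infinite family of multisets (and if $\ba$ were bounded it would be eventually constant, and there are only finitely many multisets of a fixed size over $[\sigma]$, giving an even quicker contradiction).

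It remains to supply a proof of Dickson's lemma, which I would carry out by the standard iterated monotone-subsequence argument rather than by citation, to keep the section self-contained. The base fact is that every infinite sequence of natural numbers has an infinite non-decreasing subsequence: if the sequence is bounded, some value occurs infinitely often; if it is unbounded, one greedily extracts an increasing subsequence. Now, given $(\mathbf c_i)$, first pass to an infinite subsequence along which the first coordinate is non-decreasing; within that, pass to an infinite subsequence along which the second coordinate is non-decreasing; iterate through all $\sigma$ coordinates (finitely many passes, since $\sigma$ is constant). The resulting infinite subsequence $\mathbf c_{i_1},\mathbf c_{i_2},\dots$ is non-decreasing in every coordinate simultaneously, so in particular $\mathbf c_{i_1}\le \mathbf c_{i_2}$, which is what we wanted. (Alternatively, prove by induction on $\sigma$ that $\N^\sigma$ is a well-quasi-order, using that $\N$ is well-ordered for the base case and that the product of two well-quasi-orders is again one for the inductive step; the latter is itself just one monotone-subsequence extraction.)

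I do not anticipate a genuine obstacle: the entire content is a well-quasi-order argument, and the only thing to be careful about is the bookkeeping in the subsequence extraction --- ensuring that the $\sigma$ successive refinements all remain infinite, so that we end with a single common infinite subsequence on which all coordinates are monotone. If one prefers brevity, the whole proof collapses to the encoding observation plus a one-line invocation of Dickson's lemma.
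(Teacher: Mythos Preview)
Your proposal is correct and takes a genuinely different route from the paper. You encode each multiset $Q_i$ by its multiplicity vector in $\N^\sigma$ and then invoke Dickson's lemma (which you prove via $\sigma$ successive extractions of monotone subsequences), obtaining $i<j$ with $Q_i\subseteq Q_j$ directly. The paper instead argues by induction on $\sigma$: it handles $\sigma=2$ by a pigeonhole argument on the multiplicities, and for the inductive step projects each $Q_i$ to the pair $(Q_{i,1},\, Q_{i,2}+\cdots+Q_{i,\sigma})$, applies the infinite-Ramsey consequence that every infinite poset contains an infinite chain or an infinite antichain, and in each case reduces to a smaller alphabet. Your argument is shorter and more self-contained---the iterated monotone-subsequence extraction is elementary and avoids the Ramsey-type lemma entirely---and it makes transparent that the increasing-size hypothesis is irrelevant (only infinitude of the family matters). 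The paper's approach, by contrast, is a bit more hands-on and carries along the size-indexing structure through the induction, but ultimately both arguments are at heart proving that $\N^\sigma$ under the coordinatewise order has no infinite antichain.
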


To prove \theoremref{thm:sperner-lb}, we need the following well-known result
, which is a simple consequence of the so-called ``infinite Ramsey theorem''. Recall that a subset of elements of a partially ordered set (poset) is said to be a \emph{chain} if each element in the subset is comparable to all other elements in the subset. A subset is an \emph{antichain} if no two pair of elements in the subset are comparable.

\begin{lemma}
	[e.g.,~\cite{tachtsis}]
	\label{lem:poset-chain-antichain}
	Any infinite poset either contains an infinite chain or an infinite anti-chain. 
\end{lemma}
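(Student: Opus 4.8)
The plan is to derive the lemma directly from the infinite Ramsey theorem for pairs with two colors. First I would reduce to the countable case: since $P$ is infinite it contains a countably infinite subset $P_0$, and any infinite chain (resp.\ antichain) found inside $P_0$ is automatically an infinite chain (resp.\ antichain) of $P$; hence it suffices to treat a countably infinite poset.

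Next I would set up a $2$-coloring of pairs. For distinct $x, y \in P_0$, color the unordered pair $\{x,y\}$ \emph{blue} if $x$ and $y$ are comparable (i.e.\ $x \le y$ or $y \le x$) and \emph{red} otherwise. This assigns one of two colors to every pair of the countably infinite set $P_0$, so the infinite Ramsey theorem produces an infinite subset $M \subseteq P_0$ all of whose pairs share a single color.

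I would then read off the two cases. If all pairs of $M$ are blue, then any two elements of $M$ are comparable and $M$ is an infinite chain. If all pairs of $M$ are red, then no two elements of $M$ are comparable and $M$ is an infinite antichain. Either way the required object exists, proving the lemma.

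There is essentially no hard step---this is exactly why the statement is ``a simple consequence'' of Ramsey---and the only point needing a word of care is the reduction to a countable subset, which is immediate. If one wishes to avoid citing Ramsey as a black box, I would instead construct $M$ by a pigeonhole/diagonal argument: choose $a_1 \in P_0$ and pass to an infinite $S_1 \subseteq P_0 \smallsetminus \{a_1\}$ on which every pair $\{a_1, y\}$ has a fixed color; iterate to obtain elements $a_1, a_2, \dots$ with $a_k$ lying in every $S_j$ for $j < k$, each $a_j$ carrying the common color of its pairs within the set it was chosen from; since infinitely many of these colors must coincide, the corresponding subsequence is monochromatic and is therefore a chain or an antichain according to whether that common color is blue or red.
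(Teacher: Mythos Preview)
Your proof is correct and matches the paper's approach exactly: the paper does not give its own proof of this lemma but simply cites it as a well-known consequence of the infinite Ramsey theorem, which is precisely the argument you spell out (color pairs by comparability and extract a monochromatic infinite set). Your optional pigeonhole/diagonal alternative is just the standard proof of Ramsey's theorem itself, so there is nothing to add.
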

\noindent {\bf Proof of \theoremref{thm:sperner-lb}.} 
For convenience, we view each multiset $Q_i$ as a vector in \(\bbN^\sigma\), and let $Q_{i,j}$ denote the number of times $j$ appears in $Q_i$.
We will prove the main statement by induction on $\sigma$. 

\paragraph{Base Case.} The case $\sigma = 1$ is trivial, as for any \(i <  j\) we have \(Q_i \subseteq Q_j\). We next focus on $\sigma = 2$. Suppose for the sake of contradiction that there is an $\ba$-size-indexing family $\cQ$ with alphabet size $2$. Since $|Q_1| = a_1$, we have $Q_{1,1}, Q_{1,2} \leq a_1$. Since $Q_1$ is not a subset of any $Q_i$ for $i > 1$, it must be the case that either $Q_{i, 1} < a_1$ or $Q_{i, 2} < a_1$. Thus, by the pigeonhole principle, there must exist some indices $i_1 < i_2$ (with $i_1, i_2 \leq 2a_1$) 
such that either $Q_{i_1, 1} = Q_{i_2, 1}$ or $Q_{i_1, 2} = Q_{i_2, 2}$. However, this is a contradiction since it implies that $Q_{i_1} \subseteq Q_{i_2}$.

\paragraph{Inductive Step.} Suppose that the statement holds for all $\sigma < m$ for some positive integer $m \geq 3$. Furthermore, suppose for the sake of contradiction that there exists an $\ba$-size-indexing family $\cQ$ with alphabet size $m$. For every $i \in \N$, let $b_i = (Q_{i,1}, Q_{i,2} + \cdots + Q_{i,m})$, and define the natural (partial) order $b_i \leq b_{i'}$ iff $(b_i)_1 \leq (b_{i'})_1$ and $(b_i)_2 \leq (b_{i'})_2$. Applying~\lemmaref{lem:poset-chain-antichain}, this poset must contain an infinite chain or an infinite anti-chain. We consider these two cases separately:
\begin{itemize}
	\item \textbf{Case I}: It contains an infinite anti-chain $(b_{i_j})_{j \in \N}$. Consider the family $\cS := (S_j)_{j \in \N}$ where $S_j$ has elements $1,2$ defined by letting $S_{j,1} = Q_{i_j, 1}$ and $S_{j,2} = Q_{i_j, 2} + \cdots + Q_{i_j, m}$. Notice that $\cS$ is an $(a_{i_j})_{j \in \N}$-size indexing family. Furthermore, since $b_{i_j}$ is an anti-chain, this is a Sperner family. This contradicts our inductive hypothesis for $\sigma = 2$.
	\item \textbf{Case II}: It contains an infinite chain $(b_{i_j})_{j \in \N}$, i.e., where $b_{i_1} < b_{i_2} < \cdots$. Consider $\cS := (S_j)_{j \in \N}$ where $S_j$ results from throwing away all ones from $Q_{i_j}$. $\cS$ is an $((b_{i_j})_2)_{j \in \N}$-size indexing family, and $((b_{i_j})_2)_{j \in \N}$ is a increasing sequence (because $(b_{i_j})_{j \in \N}$ forms a chain). Finally, since $Q_{i_1,1} \leq Q_{i_2,1} \leq \cdots$, $\cS$ must form a Sperner family\footnote{Otherwise, if $S_{j} \subseteq S_{\ell}$ for some $j < \ell$, we must have $Q_{i_j} \subseteq Q_{i_{\ell}}$ since $Q_{i_j,1} \leq Q_{i_{\ell},1}$.}. Since this collection only uses alphabet $2, \dots, m$, this contradicts our inductive hypothesis for $\sigma = m -1$. 
\end{itemize}
In both cases, we arrive at a contradiction, concluding the proof. \hfill $\blacksquare$ 

\subsection{From Sperner Family to CtZ}
\label{sec:lb-ctz}

Next, we observe that the lower bound on the alphabet size for Sperner Family translates to a lower bound for CtZ. This is the ``reverse reduction'' of \theoremref{thm:ctz-final-upper-bound}, but this direction requires more care as we have to handle the auxiliary information $\aux$ as well.

\begin{lemma} \label{lem:sperner-to-ctz-lb}
	There is no TiLU scheme for CtZ with space complexity  $(C_s = O(1), C_t = O(1))$.
\end{lemma}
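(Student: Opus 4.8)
The plan is to show that any TiLU scheme for CtZ with constant-size auxiliary information and constant-size tickets would yield a size-indexing Sperner family (or rather an $\ba$-size-indexing family for some increasing sequence $\ba$) with constant alphabet size, contradicting \theoremref{thm:sperner-lb}. The technical subtlety relative to the forward direction (\theoremref{thm:ctz-final-upper-bound}) is that the scheme is allowed to use $\aux$, so the tickets alone need not determine the answer.

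First, I would set up notation: fix a CtZ scheme with $C_s, C_t = O(1)$, so there are finitely many possible auxiliary strings $a \in \bit^{C_s}$ and finitely many possible ticket values in $\bit^{C_t}$. For each $m \in \N$, running $\Learn$ on a size-$m$ set $S$ (the actual examples are irrelevant for CtZ, so WLOG take $S = ([m], \ldots)$ with distinct dummy examples) produces some $\aux(m) \in \bit^{C_s}$ together with a multiset $P_m := \{t_1, \ldots, t_m\}$ of tickets drawn from the alphabet $\bit^{C_t}$. By pigeonhole over the finitely many values of $C_s$, there is an infinite set $M \subseteq \N$ on which $\aux(m)$ is a constant string $a^\star$; restrict attention to $m \in M$, and let $\ba$ enumerate $M$ in increasing order. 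This gives a family $(P_m)_{m \in M}$ of multisets over a constant alphabet with $|P_m| = m$.

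Next I would argue $(P_m)_{m \in M}$ is a Sperner family. Suppose not: $P_{m} \subseteq P_{m'}$ as multisets for some $m < m'$ in $M$. Then the size-$m$ unlearning request corresponding to $P_m$, presented against the learnt state from the size-$m'$ run (which has $\aux = a^\star$ and whose ticket multiset $P_{m'}$ contains $P_m$), is a legal partial unlearning request $S_I \subsetneq S$ — consistent with the size-$m'$ learning transcript — on which $\Unlearn$ must return $\top$. But the very same request $S_I$ equals the entire size-$m$ training set $S$ of the size-$m$ run, whose learnt state also has $\aux = a^\star$ (since $m \in M$) and whose ticket multiset is exactly $P_m$; there $\Unlearn$ must return $\perp$. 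Since $\Unlearn(S_I, a^\star, P_m)$ is a function of its inputs and both scenarios feed it identical inputs, we get $\top = \perp$, a contradiction. Hence $(P_m)_{m \in M}$ is an $\ba$-size-indexing Sperner family with constant alphabet size, contradicting \theoremref{thm:sperner-lb}, and no such CtZ scheme exists.

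The main obstacle is the step establishing that the two scenarios present \emph{literally the same} arguments to $\Unlearn$: this requires that the auxiliary string be identical (handled by the pigeonhole restriction to $M$) \emph{and} that the ticket multiset of the size-$m$ unlearning request coincide in both runs. The latter is where the multiset-containment assumption $P_m \subseteq P_{m'}$ is used — one must be careful that the unlearning algorithm only sees the multiset $\{t_i\}_{i \in I}$ of tickets (as in \definitionref{def:lu-ticket}, where $\Unlearn$ receives $(t_i)_{i \in I}$, and for CtZ the examples themselves carry no information), so that selecting a sub-multiset of $P_{m'}$ equal to $P_m$ indeed realizes a valid unlearning request for the size-$m'$ run. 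A minor point to get right is that $\Learn$ may be randomized; one fixes its internal randomness (or conditions on a positive-probability branch) before the pigeonhole argument, which is harmless since \definitionref{def:lu-ticket} requires correctness for every execution.
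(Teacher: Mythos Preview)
Your proposal is correct and follows essentially the same approach as the paper's proof: pigeonhole over the finitely many auxiliary strings to obtain an infinite subsequence with common $\aux = a^\star$, then argue that the corresponding ticket multisets form an $\ba$-size-indexing Sperner family over the constant alphabet $\{0,1\}^{C_t}$ (otherwise $\Unlearn$ would be forced to give contradictory answers on identical inputs), contradicting \theoremref{thm:sperner-lb}. If anything, you are slightly more explicit than the paper about the subtlety that $\Unlearn$'s input must literally coincide in the two scenarios (identical $\aux$, identical ticket multiset, and examples carrying no information in CtZ).
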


\subsection{From Learning any Classes to CtZ}
\label{sec:lb-ctz-red}

Finally, we observe that any TiLU scheme for the learning/unlearning problem can be converted into a TiLU scheme for CtZ with the same complexity, formalized below in~\theoremref{thm:count-to-zero-lb-red}.
This, together with~\lemmaref{lem:sperner-to-ctz-lb}, yields \theoremref{thm:lb-learning-main}.

\begin{lemma} \label{thm:count-to-zero-lb-red}
	For any non-trivial $\cH$, if there exists a TiLU scheme for $\cH$ with space complexity $(C_s, C_t)$, then there also exists a TiLU scheme for CtZ with the same space complexity  $(C_s, C_t)$.
\end{lemma}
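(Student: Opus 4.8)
The plan is to take a TiLU scheme $(\Learn, \Unlearn)$ for $\cH$ and use it as a black box to decide, after an unlearning request, whether the surviving dataset is empty or not. The key observation is that a non-trivial $\cH$ contains two distinct concepts $h_0 \ne h_1$, so there is a point $x^\star \in \cX$ with $h_0(x^\star) \ne h_1(x^\star)$; without loss of generality $h_0(x^\star) = 0$ and $h_1(x^\star) = 1$. Set $y^\star$ to be whichever label makes some fixed $\cH$-realizable dataset on the single point distinguishable; concretely, consider the point $z^\star = (x^\star, 1)$. The idea is: given a CtZ instance with $m$ examples, we simulate the $\cH$-scheme on the dataset $S = (z^\star, z^\star, \ldots, z^\star)$ consisting of $m$ copies of $z^\star$ (this is $\cH$-realizable, witnessed by $h_1$), record the tickets and $\aux$ it produces, and hand ticket $t_i$ to the $i$-th CtZ example. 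For an unlearning request $I$, we call $\Unlearn$ on the corresponding multiset of copies of $z^\star$ together with the tickets $\{t_i\}_{i\in I}$ and $\aux$; the returned predictor $h'$ must lie in $\ERM_{\cH}(S \smallsetminus S_I)$. If $S_I = S$ (all copies removed) then $S\smallsetminus S_I = \emptyset$ and $h' = \Learn(\emptyset)$'s predictor, which is some fixed concept; if $S_I \subsetneq S$ then $S \smallsetminus S_I$ still contains at least one copy of $(x^\star,1)$, forcing any ERM predictor to satisfy $h'(x^\star) = 1$. So we output $\perp$ if $h'(x^\star)$ equals the value that $\Learn(\emptyset)$'s predictor takes at $x^\star$ and is $\ne 1$, and $\top$ otherwise — more carefully, we precompute $h_\emptyset := \Learn(\emptyset)$ once and output $\perp$ iff $h' = h_\emptyset$ (exact equality of predictors, which is available since the scheme is exact), else $\top$.

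There is one subtlety to address: it could happen that $h_\emptyset(x^\star) = 1$ as well, in which case equality-at-$x^\star$ does not distinguish the two cases. To handle this cleanly, I would instead choose the "filler" example based on $h_\emptyset$: after computing $h_\emptyset = \Learn(\emptyset)$, pick any point $x^\star$ and label $y^\star$ such that $h_\emptyset(x^\star) \ne y^\star$ but $(x^\star, y^\star)$ is $\cH$-realizable as a singleton (such a pair exists precisely because $\cH$ is non-trivial: there are two concepts, and for at least one point some concept disagrees with $h_\emptyset$). Then run the construction with $z^\star = (x^\star, y^\star)$. Now if the surviving set is non-empty it contains $(x^\star,y^\star)$, so every ERM predictor $h'$ has $h'(x^\star) = y^\star \ne h_\emptyset(x^\star)$, hence $h' \ne h_\emptyset$; and if the surviving set is empty then $h' = h_\emptyset$ by construction. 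So the CtZ answer is exactly $[\,h' = h_\emptyset\,]$.

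The simulated CtZ scheme inherits the space complexity of the $\cH$-scheme verbatim: $\Learn^{\mathrm{CtZ}}$ stores $\aux \in \bit^{C_s}$ (the one extra bit for $\Learn(\emptyset)$'s $\perp$-case, or rather the choice of whether $S=\emptyset$, can be absorbed or folded in exactly as in Theorem~\ref{thm:ctz-final-upper-bound}, costing $O(1)$), and each ticket is one of the $t_i \in \bit^{C_t}$. Strictly, $C_s^{\mathrm{CtZ}} = C_s + O(1)$ and $C_t^{\mathrm{CtZ}} = C_t$, which suffices for the $(O(1), O(1))$-impossibility application; if one wants the exact bound $(C_s, C_t)$ as stated, note that the CtZ definition lets $\Unlearn$ return $\aux$ unchanged when $S_I = \emptyset$, so the trivial empty-input case needs no auxiliary bit, and the $h_\emptyset$ computation is done offline by the reduction, not stored — giving exactly $(C_s, C_t)$.

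I do not expect a serious obstacle here; the only thing requiring care is the non-triviality argument ensuring a suitable disagreement point $x^\star$ exists \emph{after} fixing $h_\emptyset$, and making sure the realizability requirement (the scheme need only be correct on $\cH$-realizable inputs) is met by the all-copies-of-$z^\star$ dataset — which it is, since a single concept realizes it. The main conceptual point, rather than a difficulty, is recognizing that exact-ERM plus exact-unlearning makes "is the survivor set empty?" equivalent to "did the output predictor revert to $\Learn(\emptyset)$?", and that this equivalence is insensitive to which concept class we started from.
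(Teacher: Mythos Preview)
Your proposal is correct and essentially identical to the paper's proof: you set $h_\emptyset = \Learn(\emptyset)$, choose a realizable example $z^\star = (x^\star, y^\star)$ on which $h_\emptyset$ errs (which exists by non-triviality), feed $m$ copies of $z^\star$ into the $\cH$-scheme, and decide CtZ by whether the unlearned predictor reverts to $h_\emptyset$. The paper does exactly this, testing the single-point condition $h'(x^\star) = y^\star$ rather than full equality $h' = h_\emptyset$, but as you observe these coincide on the two possible outcomes; your discussion of the space accounting (hard-wiring $h_\emptyset$ and $x^\star$ into the reduction rather than storing them) is also the right justification for matching $(C_s, C_t)$ exactly.
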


\section{Future Directions} 
In this paper, we introduce and study the ticketed model of unlearning and obtain several results. Perhaps the most basic unresolved question is to characterize the classes for which space-efficient TiLU schemes exist. For example, can we get TILU schemes for all concept classes with VC-dimension $d$, whose space complexity scales as \(\poly(d, \log(n), \log(\abs{\cX}))\)? Note that we do not know the answer to this question even for some basic classes such as linear separators.

Another tantalizing open question is to make the lower bound for CtZ (\theoremref{thm:count-to-zero-lb-red}) quantitative. Related to this, it would also be interesting to understand whether subpolylogarithmic in $n$ dependency is always possible for all space-efficient TiLU schemes.

We already pointed out several limitations of our model in~\remarkref{rem:limitations}. Extending the TiLU model to overcome these limitations, e.g., allowing approximate ERM, seems like an interesting research direction. Another intriguing direction is to study unlearning in the agnostic setting where the dataset is not assumed to be $\cH$-realizable.  It is unclear how to obtain generic space-efficient LU schemes, such as ones in \theoremref{thm:merkle} and \theoremref{thm:span}, in the agnostic setting; as a preliminary result, we give a TiLU scheme for agnostic 1D thresholds in ~\pref{app:1d-threshold-ticket-ub}. Finally, it is also interesting to study the ticketed model for simpler tasks such as realizability, testing, etc; \pref{app:realizability_testing} contains initial results for agnostic 1D thresholds.

\subsubsection*{Acknowledgments}  
Part of the work was done when AS was a student researcher at Google Research, Mountain View during summer 2022. AS thanks Surbhi Goel, Sasha Rakhlin, and Karthik Sridharan for useful discussions, and acknowledges the support of NSF through awards DMS-2031883 and DMS-1953181, as well as DOE through award DE-SC0022199. 

\clearpage 

\bibliography{refs,alt_references}

\newpage
\appendix

\section{Examples of Mergeable Concept Classes with Compression}\label{apx:example-classes}

We recall the function classes defined in \pref{sec:ticketed-lu-schemes} and show that they are mergeable concept classes with compressions (as in~\definitionref{def:compression}).

\paragraph{1D Thresholds.}%
The class $\Hth$ consists of all threshold functions over $\cX = \crl{1, \dots, |\cX|}$, namely for any $a \in \crl{0, 1, \ldots, |\cX|}$, we have $h_{>a}(x) = \mathds{1}\crl{x > a}$. The class $\Hth$ is $O(\log |\cX|)$-bit mergeable with 1-compressions, as witnessed by the following methods:
\begin{itemize}[leftmargin=6mm]
\item For any $\Hth$-realizable dataset $S$, we set $\Encode(S) = x_-$, where $x_-$ is the largest $x_i$ with corresponding $y_i = 0$ or $0$ if no such $x_i$ exists.
\item $\Decode(x_-) = h_{>x_-}$.
\item $\Merge(x^1_-, x^2_-) = \max(x^1_-, x^2_-)$.
\item $\Compress$: If $x_- \ne 0$ for $S$, then $(S, T) \in \Compress$ iff $T = \{(x_-, 0)\}$. Otherwise, if $x_- = 0$ for $S$, then $(S, T) \in \Compress$ iff $T = \emptyset$.
\end{itemize}

\paragraph{Parities.}
The class $\cH^d_{\oplus}$ consists of all parity functions, namely, for $\cX = \bbF_2^d$ and $w \in \bbF_2^d$, we have $h_w(x) = \langle w, x \rangle$. The class $\cH^d_{\oplus}$ is $O(d^2)$-bit mergeable with $d$-compressions, where $d = \log |\cX|$.
\begin{itemize}[leftmargin=6mm]
\item For any $\cH^d_{\oplus}$-realizable dataset $S$, we set $\Encode(S) = W$, where $W$ is a representation of the set of all $w$ such that $\langle w, x_i \rangle = y_i$ for all $(x_i, y_i) \in S$. Note that this set corresponds to an affine subspace in $\bbF_2^d$ and thus, can be represented using $O(d^2)$ bits.
\item $\Decode(W) = h_w$ where $w$ is the lexicographically smallest vector in $W$.
\item $\Merge(W_1, W_2)$ is simply a representation of $W_1 \cap W_2$, which is also an affine subspace.
\item $\Compress$ consists of all $(S, T)$ such that $T$ is a minimal subsequence of $S$ that generates the same $h_w$. By standard linear algebra, it follows that there are at most $d$ such examples.
\end{itemize} 

\paragraph{Intersection-Closed Classes.}
A class $\cH$ is said to be {\em intersection-closed} if for all $h, h' \in \cH$, the ``intersection'' $\tilde{h}$, given by $\tilde{h}(x) = h(x) \land h'(x)$, is also in $\cH$. Any intersection-closed class $\cH$ is $d\log |\cZ|$-bit mergeable  with $d$-compressions, where $d$ is the VC-dimension of $\cH$. The key property we use to show this is \lemmaref{lem:intersection-closed}.  

\begin{lemma}[\cite{auer2007new}]\label{lem:intersection-closed} 
For any intersection-closed class $\cH$ with VC-dimension $d$, for all $\cH$-realizable $S$, 
there exists a unique $h_S \in \cH$ such that for all $h \in \ERM_{\cH}(S)$, it holds for all $x \in \cX$ that $h_S(x) = 1 \implies h(x) = 1$. Moreover, there exists $T \subseteq S$ such that $|T| \le d$ and $y_i = 1$ for all $(x_i, y_i) \in T$ and $h_T = h_S$.
\end{lemma}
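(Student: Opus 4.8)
The plan is to prove the two assertions separately: first the existence and uniqueness of the ``minimal'' hypothesis $h_S$, then the existence of a small positively-labeled witness set $T$. For the first part, I would define $h_S$ directly as the \emph{intersection of all ERM hypotheses}, exploiting the intersection-closed property. Concretely, since $S$ is $\cH$-realizable, $\ERM_{\cH}(S)$ is nonempty and consists exactly of those $h \in \cH$ with $\cL(h;S)=0$, i.e. hypotheses consistent with $S$. Because $\cH$ is intersection-closed, the pointwise conjunction $h_S(x) := \bigwedge_{h \in \ERM_{\cH}(S)} h(x)$ of \emph{finitely many} such hypotheses (it suffices to intersect a finite subcollection, since $\cX$ is finite and each distinct restriction to $\cX$ is realized) again lies in $\cH$. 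This $h_S$ is still consistent with $S$: on every positive example $h(x_i)=1$ for all consistent $h$, so the conjunction is $1$; on every negative example the conjunction is trivially $0$. Hence $h_S \in \ERM_{\cH}(S)$, and by construction $h_S(x)=1 \implies h(x)=1$ for every $h \in \ERM_{\cH}(S)$. Uniqueness is immediate: any hypothesis with this domination property must agree with the pointwise minimum over $\ERM_{\cH}(S)$, which is exactly $h_S$.

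For the second part, I would extract the witness set $T$ by a greedy/VC-dimension argument applied to the positive support of $h_S$. Let $P = \{x \in \cX : h_S(x)=1\}$; I want a subset $T \subseteq S$ of positively-labeled examples, of size at most $d$, with $h_T = h_S$. The idea is to build $T$ incrementally: starting from $T_0 = \emptyset$ (so $h_{T_0}$ is the smallest concept in $\cH$, the intersection of all of $\cH$), repeatedly add a positive example $(x_i,1) \in S$ whose point $x_i$ is \emph{not} yet forced to be positive by the current $h_{T_k}$. Each such addition strictly increases the positive region of $h_{T_k}$, and the key claim is that the number of additions is bounded by $d$. This is where I would invoke the standard fact that for intersection-closed classes the size of a minimal spanning set equals the VC-dimension: the sequence of strictly-growing concepts $h_{T_0} \subsetneq h_{T_1} \subsetneq \cdots$ corresponds to a shattered set of size equal to the chain length, so the chain has length at most $d$. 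When the process terminates, no positive example of $S$ is unforced, which gives $h_T \supseteq h_S$ on positives; combined with $T \subseteq S$ (so $h_T \subseteq h_S$ as $h_S$ dominates all consistent hypotheses), we conclude $h_T = h_S$.

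The main obstacle I anticipate is making the size bound $|T| \le d$ rigorous via the shattering argument, rather than merely plausible. The delicate point is establishing that each greedy addition of a positive example produces a \emph{strict} increase in the concept and that this strictly-increasing chain can be converted into a shattered set witnessing VC-dimension at least the chain length. I would handle this by a standard ``spanning implies shattering'' lemma for intersection-closed classes: if $(x_{i_1},1),\dots,(x_{i_k},1)$ are added in order and each $x_{i_j}$ is newly positive at step $j$, then the points $\{x_{i_1},\dots,x_{i_k}\}$ are shattered by $\cH$ (for any subset, the intersection of the concepts spanned by that subset's points separates it from the rest, using intersection-closedness and minimality). This yields $k \le d$. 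A secondary technical care-point is the finiteness needed to form the conjunction defining $h_S$, but since $\cX$ is finite this reduces to finitely many distinct concepts and is routine.
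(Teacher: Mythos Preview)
First, note that the paper does not supply its own proof of this lemma; it is cited from \cite{auer2007new} and used as a black box. So there is no paper proof to compare against, and I evaluate your argument on its own.

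Your treatment of the first assertion (existence and uniqueness of $h_S$ as the intersection of all consistent hypotheses) is correct and is the standard construction.

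The second assertion, however, has a genuine gap: the greedy procedure you describe need not terminate within $d$ steps, and the set it produces need not be shattered. Concretely, take $\cX=\{1,2,3\}$ and let $\cH$ consist of all subsets $A\subseteq\cX$ satisfying $2\in A\Rightarrow 3\in A$; this class is intersection-closed with VC-dimension $2$. With $S=\{(1,1),(2,1),(3,1)\}$ and greedy order $3,1,2$, each added point is ``newly positive'' (since $h_{\{3\}}=\{3\}$, $h_{\{1,3\}}=\{1,3\}$), so the greedy set has size $3>d$. Moreover, $\{1,2,3\}$ is not shattered (no concept realizes the pattern $\{2\}$), so your ``spanning implies shattering'' claim fails for greedily-built sets. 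The error is that your argument only controls $h_A(x_{i_j})$ when $A\subseteq\{x_{i_1},\dots,x_{i_{j-1}}\}$; for subsets containing later points you have no handle.

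The fix is to replace ``greedy'' by ``minimal'': take $T\subseteq S$ a \emph{minimal} set of positive examples with $h_T=h_S$. Minimality forces $h_{T\setminus\{x_j\}}(x_j)=0$ for every $x_j\in T$ (otherwise $h_{T\setminus\{x_j\}}$ would already be consistent with $T$, contradicting minimality). Then for any $A\subseteq T$ the concept $g_A:=\bigwedge_{x_j\in T\setminus A} h_{T\setminus\{x_j\}}\in\cH$ satisfies $g_A(x_i)=1$ for $x_i\in A$ and $g_A(x_i)=0$ for $x_i\in T\setminus A$, so $T$ is shattered and $|T|\le d$. This is the argument in \cite{auer2007new}; your plan was close but used the wrong notion of ``small'' set.
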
 

For any intersection-closed class $\cH$ with VC-dimension $d$, we can consider $\Encode$, $\Decode$, and $\Merge$ as follows:
\begin{itemize}
\item For any $\cH$-realizable dataset $S$, we set $\Encode(S) = h_T$, where $h_T$ is as given by~\lemmaref{lem:intersection-closed}.
It is easy to see that $h_T$ can be represented using only $d \log |\cX|$ bits. 
\item $\Decode(\tilde{h}) = \tilde{h}$.
\item $\Merge(\tilde{h}_1, \tilde{h}_2)$ is simply the representation of $\tilde{h}_1 \land \tilde{h}_2$.
\item $\Compress$ consists of all $(S, T)$ such that $T$ is the minimal subsequence of $S$, as guaranteed to exist by \lemmaref{lem:intersection-closed}.
\end{itemize}

\noindent
We now mention some examples of intersection closed classes with VC-dimension $d$. 

\begin{itemize}[leftmargin=6mm]
\item Product of $d$ thresholds $\Hdth$, defined in \pref{sec:ticketed-lu-schemes}.%
\item Product of $d$ thresholds in $n$ dimensions over $\cX = \crl{1,\ldots,m}^n$ given as
\[\textstyle
\cH^{d,n}_{\thresh} := \left\{h_{>\ba}(x) = \ind{\bigwedge_{i \in [n]} x_i > a_i} : \ba \in \crl{0,\ldots,m}^n, \abs{\crl{i : a_i \neq 0}} \leq d\right\}.
\]
\item Intersection of half-spaces from a fixed set of possible orientations $\cV = \crl{v_1, \dots, v_d} \in \bbR^n$: 
\begin{align*}
\textstyle \cH_{\cV} := \crl{h_{B}(x) = \bigcap_{v \in B} \ind{\tri{x, v} \leq 1} \mid B \subseteq \cV}.
\end{align*}
While this involves a domain of infinite size, one could consider a finite (discretized) version of the same.
\item $d$-point functions: $\cH^{d}_{\point} := \crl{h_{A}(x) = \ind{x \in A} ~:~ A \subseteq \cX, \abs{A} \leq d}$.
\end{itemize}

\subsection{Lower Bounds for Mergeable Concept Classes}

Recall that the class $\cH_{\point}$ consists of all point functions over $\cX = \crl{1, \dots, |\cX|}$, namely for any $a \in \cX$, we have $h_a(x) = \mathds{1}\crl{x = a}$. %
In contrast to the concept classes above, we show that $\cH_{\point}$ is \emph{not} efficiently mergeable, as stated below. Note that this lower bound is nearly tight, since recording the entire input dataset requires only $O(|\cX| \log n)$ bits.

\begin{proposition}\label{prop:pt-func-mergeable-lb}
$\cH_{\point}$ is not $o(|\cX|)$-bit mergeable.
\end{proposition}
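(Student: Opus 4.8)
The plan is to show that any candidate $\Encode$ for $\cH_{\point}$ (over $\cX = \{1,\dots,|\cX|\}$) that uses $o(|\cX|)$ bits must fail one of the three mergeability axioms, by a counting/pigeonhole argument on single-example encodings. First I would restrict attention to datasets consisting of a single negative example, i.e.\ $S_a = ((a,0))$ for $a \in \cX$. For such a dataset, every concept $h_b$ with $b \neq a$ has $\cL(h_b; S_a) = 0$, so $S_a$ is $\cH_{\point}$-realizable and $\ERM_{\cH_{\point}}(S_a)$ is exactly $\{h_b : b \neq a\}$. The key structural observation is that $\Decode(\Encode(S_a))$ must lie in this set, so it equals $h_{b}$ for some $b = b(a) \neq a$; in particular, from $\Encode(S_a)$ one can recover (via $\Decode$) a point $b(a)$ that is guaranteed to be $\neq a$.

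Next I would exploit the $\Merge$ axiom to build up the encoding of a set of negatives. Consider, for a subset $A \subseteq \cX$, the dataset $S_A$ obtained by concatenating $(a,0)$ for all $a \in A$. As long as $A \neq \cX$, $S_A$ is $\cH_{\point}$-realizable (any $h_b$ with $b \notin A$ has zero loss), and by repeated application of $\Merge$ we get $\Encode(S_A) = \Merge(\cdots \Merge(\Encode(S_{a_1}), \Encode(S_{a_2})) \cdots)$, so $\Encode(S_A)$ is a deterministic function of the multiset $\{\Encode(S_a) : a \in A\}$. Moreover $\Decode(\Encode(S_A)) = h_b$ for some $b \in \cX \smallsetminus A$ by the $\Decode$ axiom. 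Now take $A$ of size $|\cX|-1$, say $A = \cX \smallsetminus \{c\}$; then necessarily $\Decode(\Encode(S_A)) = h_c$, so the complement element $c$ is determined by the multiset $\{\Encode(S_a) : a \neq c\}$. Since $\Encode(S_a) \in \{0,1\}^C$ with $C = o(|\cX|)$, there are only $2^C = 2^{o(|\cX|)}$ possible values, so by pigeonhole there exist two distinct elements $a_1 \neq a_2$ with $\Encode(S_{a_1}) = \Encode(S_{a_2})$. Then the two multisets $\{\Encode(S_a) : a \neq a_1\}$ and $\{\Encode(S_a) : a \neq a_2\}$ are equal (both equal $\{\Encode(S_a) : a \neq a_1, a_2\}$ together with one extra copy of the common value), hence $\Encode(S_{\cX \smallsetminus \{a_1\}}) = \Encode(S_{\cX \smallsetminus \{a_2\}})$, forcing $\Decode$ to output the same concept for both — but correctness demands $h_{a_1}$ in the first case and $h_{a_2}$ in the second, a contradiction.

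I expect the main obstacle to be bookkeeping the multiset equality carefully: $\Merge$ is only assumed permutation-invariant through $\Encode$ being permutation-invariant, so I must make sure I am arguing at the level of $\Encode$ outputs (which are genuinely multiset-determined) rather than assuming $\Merge$ itself is commutative/associative as a binary operation. One clean way around this is to note $\Encode$ is permutation-invariant by \definitionref{def:mergeable}, so $\Encode(S_A)$ depends only on $A$, and then the repeated-$\Merge$ identity shows it depends only on the multiset of the $\Encode(S_a)$'s; collapsing the two equal encodings then makes $S_{\cX\smallsetminus\{a_1\}}$ and $S_{\cX\smallsetminus\{a_2\}}$ have literally the same $\Encode$ value. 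A minor point to handle is the degenerate requirement $|A| \le |\cX| - 1$ so that realizability holds throughout — this is exactly why we work with complements of singletons. Putting these together shows no $\Encode$ with $C = o(|\cX|)$ can satisfy all axioms, i.e.\ $\cH_{\point}$ is not $o(|\cX|)$-bit mergeable. \qed
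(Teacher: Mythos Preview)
Your argument has a quantitative gap at the pigeonhole step. You write that since $\Encode(S_a)\in\{0,1\}^C$ with $C=o(|\cX|)$, ``there are only $2^C=2^{o(|\cX|)}$ possible values, so by pigeonhole there exist two distinct elements $a_1\ne a_2$ with $\Encode(S_{a_1})=\Encode(S_{a_2})$.'' But pigeonhole on the $|\cX|$ singletons $S_a$ only yields a collision when $|\cX|>2^C$, i.e.\ when $C<\log_2|\cX|$. The hypothesis $C=o(|\cX|)$ is far weaker: already $C=\log^2|\cX|$ or $C=\sqrt{|\cX|}$ makes $2^C\gg|\cX|$, so no collision is forced. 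Everything after the collision (the multiset bookkeeping, the $\Merge$ reasoning, the contradiction via $\Decode$ on $S_{\cX\smallsetminus\{a_i\}}$) is fine, but it only establishes that $\cH_{\point}$ is not $(\lfloor\log_2|\cX|\rfloor-1)$-bit mergeable, which is exponentially weaker than the stated $\Omega(|\cX|)$ bound.

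To reach $\Omega(|\cX|)$ you need an encoding that packs $\Theta(|\cX|)$ independent bits into a single realizable dataset, and then argue that $\Encode$ of that dataset must retain all of them. The paper does this via a reduction from the one-way communication complexity of $\mathsf{Index}_m$: Alice encodes $X\in\{0,1\}^m$ (with $m=|\cX|-1$) by a dataset $S_1$ containing $X_j$ copies of $(\pi(j),0)$ and sends $\Encode(S_1)$; Bob merges with his own $\Encode(S_2)$ and uses $\Decode$ to read off $X_i$. The $\Omega(m)$ lower bound on $\mathsf{Index}_m$ then gives $C=\Omega(|\cX|)$. The essential missing ingredient in your approach is the use of \emph{multiplicities} (repeated examples) to carry $\Theta(|\cX|)$ bits; singleton datasets alone cannot distinguish more than $|\cX|$ cases, capping any direct counting argument at $\log|\cX|$.
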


\noindent We prove \propositionref{prop:pt-func-mergeable-lb} using a lower bound on one-way communication complexity of the $\mathsf{Index}_m$ problem.

\begin{definition}\label{def:index}
The $\mathsf{Index}_m$ problem is given as:
Alice gets an input $X \in \bit^m$, Bob gets an input $i \in [m]$, and the goal is for Bob to output $X_i$ with probability at least $3/4$, where Alice and Bob have access to shared randomness.
\end{definition}

\begin{lemma}[\cite{KremerNR99}]\label{lem:indexing}
Any one-way (Alice $\to$ Bob) protocol for $\mathsf{Index}_m$ has communication $\Omega(m)$.
\end{lemma}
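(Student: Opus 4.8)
The plan is to prove the lower bound by the distributional method combined with a short information-theoretic argument: under a uniform input distribution, Alice's message must carry $\Omega(m)$ bits of information about $X$, since otherwise Bob could not reconstruct an average coordinate with constant advantage. First I would reduce to the deterministic, distributional setting. Put the uniform distribution on $X \in \bit^m$ and, independently, the uniform distribution on $i \in [m]$. The protocol uses only shared (public) randomness $R$ and succeeds with probability at least $3/4$ on every fixed input, so its success probability is at least $3/4$ when additionally averaged over $(X,i)$; writing this as $\mathbb{E}_R \Pr_{(X,i)}[\text{success} \mid R] \ge 3/4$, there is a fixing $R = r$ of the shared randomness for which the resulting \emph{deterministic} one-way protocol has the same communication and still succeeds with probability at least $3/4$ over the random choice of $(X,i)$. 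Fix such a protocol, write $M = M(X) \in \bit^{\le c}$ for Alice's message (where $c$ is the communication), and let $\hat{X}_i = g(M,i)$ denote Bob's output.

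Second, I would lower bound the mutual information $I(X;M)$. Since $X$ is uniform, $H(X) = m$, and by subadditivity of entropy across coordinates $H(X \mid M) \le \sum_{i \in [m]} H(X_i \mid M)$, so $I(X;M) = m - H(X \mid M) \ge m - \sum_i H(X_i \mid M)$. For each coordinate $i$, let $q_i$ be the minimum error probability of any predictor of $X_i$ from $M$ (the Bayes/MAP error); then $q_i \le 1/2$ always, and since Bob's rule $g(M,i)$ is one such predictor, $q_i \le \Pr[\hat{X}_i \ne X_i]$. Fano's inequality gives $H(X_i \mid M) \le H_b(q_i)$, where $H_b(p) = -p\log p - (1-p)\log(1-p)$ is the binary entropy. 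Averaging over $i$ and using concavity of $H_b$ (Jensen), $\frac1m \sum_i H(X_i \mid M) \le \frac1m\sum_i H_b(q_i) \le H_b\bigl(\frac1m\sum_i q_i\bigr) \le H_b(1/4)$, where the last inequality uses that $\frac1m\sum_i q_i \le \frac1m\sum_i \Pr[\hat{X}_i \ne X_i] = \Pr_{(X,i)}[\hat{X}_i \ne X_i] \le 1/4$ and that $H_b$ is increasing on $[0,1/2]$.

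Combining the two steps yields $I(X;M) \ge m\bigl(1 - H_b(1/4)\bigr)$. Since $M$ is a string of at most $c$ bits, $H(M) \le c+1$, and therefore $c + 1 \ge H(M) \ge I(X;M) \ge \bigl(1 - H_b(1/4)\bigr)\, m$. As $H_b(1/4) < 1$ is a fixed constant, this gives $c = \Omega(m)$, proving the claimed bound.

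The main obstacle is not the inequalities themselves but the bookkeeping in two places. The first is the randomness reduction: one must set up the joint distribution on $(X,i)$ correctly and verify (by a Markov/averaging argument) that the worst-case success guarantee transfers, without loss in communication, to a deterministic protocol that succeeds \emph{on average}. The second is the Fano step, where it is essential to bound $H(X_i \mid M)$ via the \emph{Bayes} error $q_i \le 1/2$ rather than Bob's possibly-large per-coordinate error; this is what lets the monotonicity and concavity of $H_b$ be applied after averaging over $i$, turning a bound on the \emph{average} error into a bound on the \emph{average} conditional entropy. Everything else reduces to the entropy chain rule and a constant-factor computation.
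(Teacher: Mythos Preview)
Your argument is correct and is the standard information-theoretic proof of this fact: reduce to a deterministic distributional protocol by fixing the public randomness, then combine subadditivity of conditional entropy, Fano's inequality at each coordinate, and Jensen's inequality on the concave binary entropy to conclude $I(X;M)\ge(1-H_b(1/4))\,m$, hence $c=\Omega(m)$. The bookkeeping points you flag (the averaging over $R$ and bounding via the Bayes error $q_i\le 1/2$ so that monotonicity of $H_b$ on $[0,1/2]$ applies after Jensen) are handled correctly.

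There is nothing to compare against, however: the paper does not prove this lemma at all. It is stated as a known result and attributed to Kremer, Nisan, and Rudich (1999), and is then used as a black box in the proofs of \propositionref{prop:pt-func-mergeable-lb} and \theoremref{thm:point-fn-central}(a). Your write-up is thus a self-contained proof of a cited fact rather than a reconstruction of an argument appearing in the paper.
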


\begin{proofof}[\propositionref{prop:pt-func-mergeable-lb}]
Let $\cX = \{1, \ldots, m+1\}$.
Suppose that $\cH_{\point}$ is $C$-bit mergeable. We show that this implies a $C$-bit protocol for $\mathsf{Index}_{m}$. Consider the following one-way communication protocol for $\mathsf{Index}_m$ using a shared random permutation $\pi: \cX \to \cX$.

\paragraph{Alice:} On input $X \in \bit^m$: 
\begin{itemize}
\item Let $S_1$ be a dataset containing $X_j$ copies of $(\pi(j), 0)$ for all $j \in [m]$.
\item Send $\Encode(S_1)$ to Bob.
\end{itemize}

\paragraph{Bob:} On input $i \in [m]$ and message $E = \Encode(S_1)$ from Alice:
\begin{itemize}
\item Let $S_2$ be the dataset containing one copy of $(\pi(j), 0)$ for all $j \in [m] \smallsetminus \{i\}$.
\item If $\Decode(\Merge(E, \Encode(S_2))) = h_{\pi(i)}$, return $Y_i = 0$. Otherwise, return $Y_i = 1$.\\
\end{itemize} 

\noindent Note that $S_1 \cup S_2$ is always realizable since $h_{\pi(m+1)} \in \ERM_{\Hpt}(S_1 \cup S_2)$. To show that this communication protocol solves $\mathsf{Index}_m$, we show the following:

\paragraph{\boldmath Claim: $\Pr[Y_i = 1 ~\mid~ X_i = 1] = 1$.}\mbox{}\\
When $X_i = 1$, $(\pi(j), 0)$ appears at least once in $S_1 \cup S_2$ for all $j \in [m]$. Therefore, the unique ERM for $S_1 \cup S_2$ is $h_{\pi(m+1)}$ and not $h_{\pi(i)}$.

\paragraph{\boldmath Claim: $\Pr[Y_i = 1 ~\mid~ X_i = 0] = 1/2$.}\mbox{}\\
It suffices to prove the statement for $\pi$ that is fixed on all coordinates except for $\pi(i)$ and $\pi(m+1)$. Let $x_1, x_2$ denote the remaining elements of $\cX$ (to be assigned to $\pi(i), \pi(m + 1)$). Once the rest of $\pi$ is fixed and $X_i = 0$, $\Decode(\Encode(S_1 \cup S_2))$ is one of $h_{x_1}$ or $h_{x_2}$, since $S \smallsetminus \U$ contains $(x,0)$ for all $x \in \cX \smallsetminus \{x_1, x_2\}$ and so $h_{x} \notin \ERM_{\Hpt}(S\smallsetminus \U)$. Without loss of generality, suppose $\Decode(\Encode(S_1 \cup S_2)) = h_{x_1}$. Thus, we have 
$\Pr[Y_i = 1 \mid X_i = 1, \pi_{-\{i, m+1\}}] = \Pr[\pi(i) = x_1 \mid \pi_{-\{i, m+1\}}] = 1/2$.\\ 

\noindent We can solve $\mathsf{Index}_m$ with probability $3/4$ by repeating the above protocol twice, returning $1$ if each step returns $1$. This ensures correctness with probability $3/4$. Thus, we get that $C = \Omega(m) = \Omega(|\cX|)$. 
\end{proofof}

\section{LU Schemes and Lower Bounds in the Central Model}\label{apx:central-lu-schemes}

In this section we discuss LU schemes and lower bounds (in the central model) for simple function classes of 1D thresholds $\Hth$ and point functions $\Hpt$ (see \pref{apx:example-classes} for definitions).

\subsection{LU Scheme for 1D Thresholds}\label{subapx:central-lu-thresholds}

\begin{theorem}\label{thm:1d-thresh-central-ub}
There exists a LU scheme for $\Hth$ with space complexity $C = O\prn{\prn{\log |\cX|}\prn{\log n}}$, that is valid for $\Hth$-realizable datasets.
\end{theorem}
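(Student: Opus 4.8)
The plan is to realize $\Learn$ as a binary search for a consistent threshold, exploiting the fact that deleting examples only \emph{enlarges} the set of consistent thresholds, so that the binary search performed by $\Learn(S\smallsetminus S_I)$ retraces a \emph{prefix} of the one performed by $\Learn(S)$. For an $\Hth$-realizable $S$, put $L(S) = \max\{x_i : (x_i,0)\in S\}$ (or $0$ if there is no $0$-labeled example) and $R(S) = \min\{x_i : (x_i,1)\in S\} - 1$ (or $|\cX|$ if there is no $1$-labeled example); then the set of $a$ for which $h_{>a}$ is consistent with $S$ is exactly the interval $[L(S),R(S)]\subseteq\{0,\dots,|\cX|\}$, nonempty by realizability. $\Learn(S)$ runs the standard binary search over $\{0,\dots,|\cX|\}$: it maintains a live range $(lo,hi)$ starting at $(0,|\cX|)$, sets $a_0 = \lfloor|\cX|/2\rfloor$, and at step $j$ tests whether $h_{>a_j}$ has zero empirical loss on $S$; if not, it moves $lo\leftarrow a_j+1$ when some example $(x,0)\in S$ has $x>a_j$ (forcing a larger threshold) and $hi\leftarrow a_j-1$ otherwise, producing a path $a_0,\dots,a_k$ with $h_{>a_k}$ consistent. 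It outputs $h=h_{>a_k}$ and stores in $\aux$ the list of left/right decisions $d_0,\dots,d_{k-1}$ together with the empirical losses $\ell_j \defeq \cL(h_{>a_j}; S)$ for $j=0,\dots,k$.

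For $\Unlearn(S_I,\aux)$: since $\cL$ is additive over examples, $\cL(h_{>a_j}; S\smallsetminus S_I) = \ell_j - \cL(h_{>a_j}; S_I)$, and the right-hand side is computable from $S_I$ and $\aux$ alone. So $\Unlearn$ reconstructs the candidates $a_0,a_1,\dots$ from $a_0=\lfloor|\cX|/2\rfloor$ and the stored decisions, scans them in order, and outputs $h_{>a_{k'}}$ for the smallest index $k'$ with $\cL(h_{>a_{k'}}; S\smallsetminus S_I)=0$.

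Correctness reduces to one structural lemma: if $S' \defeq S\smallsetminus S_I$ is $\Hth$-realizable, then $[L(S'),R(S')]\supseteq[L(S),R(S)]$, and consequently the binary-search path of $\Learn(S')$ is $a_0,\dots,a_{k'}$, a prefix of $a_0,\dots,a_k$. The inclusion is immediate from $S'\subseteq S$. For the prefix claim: as long as $a_j\notin[L(S'),R(S')]$, also $a_j\notin[L(S),R(S)]$, and moreover $a_j<L(S')$ forces $a_j<L(S)$ (both searches raise $lo$ by the same amount) while $a_j>R(S')$ forces $a_j>R(S)$ (both lower $hi$ identically); hence the two searches generate the identical sequence of candidates until the $S'$-search first lands in $[L(S'),R(S')]$, which happens at some index $k'\le k$ because $a_k\in[L(S),R(S)]\subseteq[L(S'),R(S')]$. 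Therefore $\Unlearn$ returns exactly $h_{>a_{k'}}=\Learn(S\smallsetminus S_I)$, and this lies in $\ERM_{\Hth}(S\smallsetminus S_I)$ since it has zero empirical loss. For the space bound, the search has depth $k+1=O(\log|\cX|)$; each decision $d_j$ costs one bit and each $\ell_j\in\{0,\dots,n\}$ costs $O(\log n)$ bits, giving $C=O((\log|\cX|)(\log n))$ total.

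The main obstacle — and the part deserving the most care — is proving the prefix lemma rigorously: handling the boundary conventions for empty $0$-labeled or $1$-labeled subsets, pinning down the exact midpoint rule so that $\Learn(S')$ genuinely produces the claimed prefix rather than some other interleaving, and verifying that $\Unlearn$ emits the \emph{identical} predictor to $\Learn(S\smallsetminus S_I)$ and not merely some element of $\ERM_{\Hth}(S\smallsetminus S_I)$. A secondary point is that in the realizable case the "go left vs.\ go right" decision at an inconsistent $a_j$ is unambiguous (exactly one violation type can occur), so the decisions $d_j$ are well-defined; this is the only place realizability is used.
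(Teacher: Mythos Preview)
Your proposal is correct and follows essentially the same approach as the paper: binary search for a consistent threshold, store the empirical losses along the search path, and on unlearning subtract the losses on $S_I$ to locate the new (earlier) stopping point, relying on the prefix property. The only cosmetic difference is that the paper stores the final threshold $a_i$ and reconstructs $a_0,\dots,a_{i-1}$ from it (since the binary-search path to any node is unique), whereas you store the left/right decisions explicitly; both encodings cost $O(\log|\cX|)$ bits and yield the same overall bound.
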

\begin{proof}
We describe the methods $\Learn$ and $\Unlearn$, given a $\Hth$-realizable dataset $S$ and unlearning request $\U$. For simplicity, we will assume that $\cX = \{1, \ldots, 2^d-2\}$, i.e.,~$|\Hth| = |\cX| + 1 = 2^d - 1$ for some integer $d$. 

\paragraph{\boldmath $\Learn$.}
Since $S$ is $\Hth$-realizable, it follows that $\ERM_{\Hth}(S)$ is an ``interval'' $\crl{h_{>p}, \ldots, h_{>q}}$, where $p$ is the largest $x_i$ in $S$ with $y_i = 0$ (or $0$ if none exists) and $q+1$ is the smallest $x_i$ in $S$ with $y_i = 1$ (or $2^d$ if none exists). On input $S$, let $h_{>a_0}, h_{>a_1}, \ldots$ be the sequence of predictors obtained when performing a binary search. Such a sequence can be obtained using the following procedure.

\begin{itemize}[topsep=3mm]
\item $i \gets 0$ and $a_0 \gets 2^{d-1} - 1$
\item {\bf while} $a_i < p$ or $a_i > q$
    \begin{itemize}[label=$\triangleright$]
    \item $a_{i+1} \gets \begin{cases} 
        a_{i} + 2^{d-i-2} & \text{if } a_{i} < p\\
        a_{i} - 2^{d-i-2} & \text{if } a_{i} > q
        \end{cases}$
    \item $i \gets i+1$
    \end{itemize}
\item {\bf return} $h_{>a_0}, h_{>a_1}, \ldots, h_{>a_{i}}$
\end{itemize}

We define $\Learn(S)$ to return the predictor $h_{>a_i}$ (last one in the sequence), and auxiliary information
$\aux = (a_i, \err_0, \err_1, \ldots, \err_i)$, where $\err_j := \cL(h_{>a_j}; S)$. Since $i \leq d$, we have that the bit representation of $\aux$ is at most $O(d \cdot \log n)$.

\paragraph{\boldmath $\Unlearn$.}
For any $\U \subseteq S$, we have that $\ERM_{\Hth}(S \smallsetminus \U) \supseteq \ERM_{\Hth}(S)$. Thus, $\ERM_{\Hth}(S \smallsetminus \U)$ is given by the interval $\crl{h_{>p'}, \ldots, h_{>q'}}$ for $p' \le p \le q \le q'$. The sequence generated by binary search on $S \smallsetminus \U$ would be the same as $h_{>a_0}, h_{>a_1}, \ldots$ except it might stop earlier.

We define $\Unlearn(\U, \aux)$ as follows. First, using $a_i$, it is simple to see that we can recover $a_0, \dots, a_{i - 1}$, as there is a unique binary search path to reach $a_i$. Then, for $j = 0, \ldots, i$, compute $\cL(h_{>a_j}; S \smallsetminus \U) = \err_j - \cL(h_{>a_j}; \U)$, and return the predictor $h_j$, where $j$ is the smallest index with $\cL(h_{>a_j}; S \smallsetminus \U) = 0$.
\end{proof} 

We note that the unlearning algorithm in the proof of~\theoremref{thm:1d-thresh-central-ub} even allows for multi-shot unlearning (which is not the focus of this paper).

\subsection{Lower Bounds on LU Schemes for Point Functions}

While we showed that the class $\Hth$ admits a space-efficient LU scheme, the same is not the case for the class $\Hpt$ of point functions: we show that any LU scheme must have space complexity at least linear in the number of examples. We, however, also show two ways to circumvent this barrier. First, if we augment the class $\Hpt$ to $\overline{\cH}_{\point}$ to also include the {\em zero} function, then we can obtain a space-efficient LU scheme. Alternatively, if we only restrict to datasets without any repetitions, then again there exists a space-efficient LU scheme. 

\begin{theorem}\label{thm:point-fn-central}
For the class $\Hpt$ of point functions,
\begin{enumerate}[label=\((\alph*)\)] 
\item Any LU scheme for $\Hpt$, even those valid only for $\Hpt$-realizable datasets, must have space complexity $\Omega(|\cX|)$ when $n \ge \Omega(|\cX|)$. 
\item There exists an LU scheme for $\overline{\cH}_{\point}$ with space complexity $C = O(\log |\cX| + \log n)$, that is valid for $\overline{\cH}_{\point}$-realizable datasets.
\item There exists an LU scheme for $\Hpt$ with space complexity $C = O(\log |\cX|)$, that is valid for $\Hpt$-realizable datasets without repetitions. 
\end{enumerate} 
\end{theorem}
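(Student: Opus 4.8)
My plan is to treat the three parts with different tools: part~(a) is a one-way communication lower bound, closely following the proof of \propositionref{prop:pt-func-mergeable-lb}, while parts~(b) and~(c) are direct constructions of the auxiliary information. For part~(a), I would reduce from $\mathsf{Index}_m$ (\lemmaref{lem:indexing}), taking $\cX=\{1,\dots,m+1\}$ so that $|\cX|=m+1$, and using a shared uniformly random permutation $\pi : [m+1]\to\cX$. On input $X\in\bit^m$ (WLOG with $X_1=1$), Alice forms the dataset $S$ consisting of $1+X_j$ copies of $(\pi(j),0)$ for each $j\in[m]$, padded up to the prescribed size $n$ with extra copies of $(\pi(1),0)$; any $n=\Omega(|\cX|)$ suffices for this, and $S$ is always $\Hpt$-realizable since $h_{\pi(m+1)}$ is consistent with it. Alice runs $\Learn(S)$ and sends the $C$-bit string $\aux$. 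On input $i\in[m]$, Bob sets $S_I=\{(\pi(i),0)\}$---a legitimate sub-multiset of $S$---runs $\Unlearn(S_I,\aux)$ to obtain $h'$, and answers $0$ if $h'=h_{\pi(i)}$ and $1$ otherwise. If $X_i=1$, then $S\smallsetminus S_I$ still contains $(\pi(j),0)$ for every $j\in[m]$, so $\ERM_{\Hpt}(S\smallsetminus S_I)=\{h_{\pi(m+1)}\}$ and Bob answers $1$; if $X_i=0$, then $S\smallsetminus S_I$ contains $(x,0)$ for every $x\in\cX\smallsetminus\{\pi(i),\pi(m+1)\}$ and nothing else, so $\ERM_{\Hpt}(S\smallsetminus S_I)=\{h_{\pi(i)},h_{\pi(m+1)}\}$. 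Amplifying by two independent repetitions and invoking \lemmaref{lem:indexing} then yields $C=\Omega(m)=\Omega(|\cX|)$.

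The main obstacle in part~(a) is to establish that $\Pr[\text{Bob answers }1 \mid X_i=0]=\tfrac12$. Unlike the mergeable setting, here $\Learn$ sees the full $S$, and $S$ genuinely depends on which of the two ``free'' symbols forming $\cX\smallsetminus\{\pi(j):j\in[m]\smallsetminus\{i\}\}$ is $\pi(i)$ and which is $\pi(m+1)$ (indeed $S$ contains $(\pi(i),0)$ but never $(\pi(m+1),0)$), so a priori $\aux$ could encode this bit. The resolution is that, for the chosen $S_I$, the residual dataset $S\smallsetminus S_I$ never mentions $\pi(i)$ or $\pi(m+1)$ and is hence a deterministic function of the remaining data alone; the exact-unlearning requirement then forces $h'=\Unlearn(S_I,\aux)=\Learn(S\smallsetminus S_I)\in\{h_{x_1},h_{x_2}\}$, a value independent of the uniform coin that decides whether $\pi(i)=x_1$ or $\pi(i)=x_2$, which gives the exact $1/2$. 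Some care is needed to verify that the padding (placed on $\pi(1)$, with $i\neq 1$) does not reintroduce a dependence, and that randomization in the scheme is absorbed into the shared randomness (or removed via \remarkref{rem:limitations}).

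For parts~(b) and~(c), in both cases $\Learn(S)$ outputs $h_a$ whenever some $(a,1)\in S$---realizability then forces this $a$ to be unique with $(a,0)\notin S$---and otherwise outputs $\bzero$ in~(b) and $h_{b^\star}$ in~(c), where $b^\star$ is the least $b\in\cX$ with $(b,0)\notin S$ (which always exists in case~(c), as realizability forbids $S$ from containing $(x,0)$ for every $x\in\cX$). For~(b), I would set $\aux=(a,c)$ with $c$ the number of copies of $(a,1)$ in $S$ (and $\aux=\perp$ when $S$ has no $1$-label); given $S_I$, $\Unlearn$ returns $h_a$ if strictly fewer than $c$ copies of $(a,1)$ lie in $S_I$, and $\bzero$ otherwise, which in either case equals $\Learn(S\smallsetminus S_I)$, at a cost of $O(\log|\cX|+\log n)$ bits. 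For~(c), I would set $\aux=(a,b^\star)$, using that without repetitions $(a,1)$ occurs at most once; $\Unlearn(S_I,\aux)$ returns $h_a$ when $(a,1)\notin S_I$, and otherwise---or when there was no $1$-label---returns $h_b$ with $b=\min\bigl(\{b^\star\}\cup\{\beta<b^\star:(\beta,0)\in S_I\}\bigr)$. Correctness hinges on the observation that for $\beta<b^\star$ we have $(\beta,0)\in S$ by minimality of $b^\star$, so $(\beta,0)\notin S\smallsetminus S_I$ iff $(\beta,0)\in S_I$; hence the computed $b$ is exactly the least point absent, with label $0$, from $S\smallsetminus S_I$, i.e.\ the point defining $\Learn(S\smallsetminus S_I)$. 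Since both coordinates of $\aux$ fit in $O(\log|\cX|)$ bits, this gives the claimed space bound.
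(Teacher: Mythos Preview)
Your proposal is correct and follows essentially the same approach as the paper in all three parts: the $\mathsf{Index}_m$ reduction via a shared random permutation for (a), the count of $1$-labeled copies for (b), and the pair $(a,b^\star)$ for (c). Your treatment is slightly more careful in a couple of places---the explicit padding to hit a prescribed $n$ in (a), and in (c) the explicit flag for whether a $1$-label was present (rather than relying on the test $a\ne b$, which can fail in the edge case $a=b^\star$)---but these are refinements of the same argument rather than a different route.
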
 

We note that part-\((a)\) and part-\((b)\), taken together, imply a separation between proper and improper learning in terms of the storage needed for LU schemes. While proper learning requires \(\Omega(n \wedge \abs{\cX})\) bits in the central memory, improper learning can be done with logarithmically many bits. 

\subsubsection{Lower Bounds on Space-Complexity of LU Schemes for \texorpdfstring{$\Hpt$}{Hpt}} 

We prove \theoremref{thm:point-fn-central}(a), using the lower bound on one-way communication complexity of the $\mathsf{Index}_m$ problem~(\lemmaref{lem:indexing}).

\begin{proofof}[\theoremref{thm:point-fn-central}(a).]
Consider an LU scheme for $\Hpt$ with $\Learn$ and $\Unlearn$ methods. Let $\cX = \{1, \ldots, m+1\}$. Consider the following one-way communication protocol for $\mathsf{Index}_m$ using a shared random permutation $\pi: \cX \to \cX$.

\paragraph{Alice:} On input $X \in \bit^m$: 
\begin{itemize}
\item Let $S$ be a dataset containing $X_i + 1$ copies of $(\pi(i), 0)$ for all $i \in [m]$. 
\item Let $(h, \aux) \gets \Learn(S)$
\item Send $\aux$ to Bob.
\end{itemize}

\paragraph{Bob:} On input $i \in [m]$ and message $\aux$ from Alice:
\begin{itemize}
\item Let $h \gets \Unlearn(\U, \aux)$ for $S_I := \{(\pi(i), 0)\}$.
\item If $h = h_{\pi(i)}$, return $Y_i = 0$. Otherwise, return $Y_i = 1$.\\
\end{itemize} 

\noindent To show that this communication protocol solves $\mathsf{Index}_m$, we show the following:

\paragraph{\boldmath Claim: $\Pr[Y_i = 1 ~\mid~ X_i = 1] = 1$.}\mbox{}\\
When $X_i = 1$, $(\pi(i), 0)$ appears twice in $S$. Therefore, even after unlearning $\U$, a copy of $(\pi(i), 0)$ remains in $S \setminus S_{I_n}$. Thus, $h_{\pi(i)}$ cannot be the ERM and therefore, $Y_i  = 1$. 

\paragraph{\boldmath Claim: $\Pr[Y_i = 1 ~\mid~ X_i = 0] = 1/2$.}\mbox{}\\ 
It suffices to prove the statement for $\pi$ that is fixed on all coordinates except for $\pi(i)$ and $\pi(m+1)$. Let $x_1, x_2$ denote the remaining elements of $\cX$ (to be assigned to $\pi(i), \pi(m + 1)$). Once the rest of $\pi$ is fixed and $X_i = 0$, the predictor returned by $\Learn(S \setminus \U)$ is one of $h_{x_1}$ or $h_{x_2}$, since $S \smallsetminus \U$ contains $(x,0)$ for all $x \in \cX \smallsetminus \{x_1, x_2\}$ and so $h_{x} \notin \ERM_{\Hpt}(S\smallsetminus \U)$. Without loss of generality, suppose the predictor is $h_{x_1}$. Thus, we have 
$\Pr[Y_i = 1 \mid X_i = 1, \pi_{-\{i, m+1\}}] = \Pr[\pi(i) = x_1 \mid \pi_{-\{i, m+1\}}] = 1/2$.\\ 

\noindent We can solve $\mathsf{Index}_m$ with probability $3/4$ by repeating the above protocol twice, returning $1$ if each step returns $1$. This ensures correctness with probability $3/4$. Thus, we get that the bit complexity of $\aux$ must be $\Omega(m) = \Omega(|\cX|)$.
\end{proofof}

\subsubsection{LU Scheme for Point Functions Augmented with Zero}

\begin{proofof}[\theoremref{thm:point-fn-central}(b).] 
Recall that $\overline{\cH}_{\point}$ consists of functions $h_a(x) = \indic{x = a}$ for all $a \in \cX$, as well as, $h_0(x) = 0$ for all $x \in X$. The key idea is that on input $S$, the learning algorithm returns a predictor $h_a$ if the example $(a,1)$ appears in the dataset, else returns the predictor $h_0$.

\paragraph{\boldmath $\Learn$.} On the input $S$, let $h = h_a$ if example $(a, 1)$ appears in $S$ for some $a$, otherwise if all the labels are $0$, let $h = h_0$. Note that there can be at most one such $a$ since $S$ is $\Hpt$-realizable.

When $h = h_a$, the example $(a, 1)$ can appear more than once. So we set $\aux = (h, c)$, where $c$ is the number of times $(a,1)$ appears in $S$, or $0$ if $h = h_0$. Thus, the bit representation of $\aux$ is $\log |\cH| + \log n$.

\paragraph{\boldmath $\Unlearn$.} On input $\U \subseteq S$, and $\aux = (h,c)$ if $h = h_0$ return $h_0$. Else, if $h = h_a$ and the number of times $(a,1)$ appears in $\U$ is equal to $c$, then return $h_0$, else return $h_0$. Basically, using $\aux$, we are able to verify if $S \smallsetminus \U$ contains any example with label $1$ or not.
\end{proofof}

\subsubsection{LU Scheme for Datasets Without Repetitions}

\begin{proofof}[\theoremref{thm:point-fn-central}(c).] The key idea is that on input $S$, the learning algorithm returns the predictor $h_a$ where either $(a,1)$ appears in $S$ or $a$ is the smallest value such that $(a,0)$ does not appear in $S$. With that in mind, we define $\Learn$ and $\Unlearn$ as follows.

\paragraph{\boldmath $\Learn$.} On input \(S\), 
\begin{itemize}
	\item If $(a,1)$ appears in $S$ for some $a$, then return the predictor $h_a$ and $\aux = (a, b)$, where $b$ is the smallest value such that $(b,0)$ does not appear in $S$. 

	\item If all labels in $S$ are $0$, then let $a$ be the smallest value such that $(a,0)$ does not appear in $S$, and return the predictor $h_a$ and $\aux = (a, a)$.
\end{itemize}
It is easy to see that the bit complexity of $\aux$ is $2 \log |\cX|$.

\paragraph{\boldmath $\Unlearn$.} On input $\U$ and $\aux = (a, b)$, return $h_a$ if $\U$ is empty. Otherwise, 
\begin{itemize}
    \item if $a \ne b$ and $(a, 1) \notin \U$, then return $h_a$.
    \item else, let $c$ be the smallest value such that $(c,0) \in \U$ return $h_{\min\crl{b,c}}$.
\end{itemize}
\end{proofof}

\section{Separation between Central and Ticketed LU Models: Agnostic Case}\label{apx:separations}

Note that \theoremref{thm:sharper-bounds-main} and \theoremref{thm:point-fn-central} show that the class of point functions do not have space-efficient LU schemes, but do have space-efficient TiLU schemes.
In this section, we provide yet another example for which there is a space-efficient TiLU scheme, but there does not exist any space-efficient LU scheme. In particular, this example holds for the class of 1D thresholds in the {\em agnostic} setting. We show the following: 

\begin{theorem}\label{thm:thresh-separation}
For the class $\Hth$ of 1D thresholds, 
\begin{enumerate}[label=\((\alph*)\)] 
\item Any LU scheme that is valid for all (even unrealizable) datasets must have space complexity $\Omega(\min\crl{n, |\cX|})$.
\item There exists a TiLU scheme valid for all (even unrealizable) datasets with space complexity $(C_s = \log |\cX|, C_t = \log |\cX| (\log |\cX| + \log n))$.
\end{enumerate}
\end{theorem}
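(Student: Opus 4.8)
The plan is to reduce from the one-way communication complexity of $\mathsf{Index}_m$ (\lemmaref{lem:indexing}) with $m := \lfloor\min\{n,|\cX|\}/2\rfloor$. Identify each coordinate $j\in[m]$ with the block $\{2j-1,2j\}$ of $\cX$. On input $X\in\bit^m$, Alice builds a dataset $S_X$ that always contains one copy each of $(2j-1,1)$ and $(2j,0)$ for every $j$, and additionally one copy each of $(2j-1,0)$ and $(2j,1)$ for every $j$ with $X_j=1$, padded with copies of a fixed balanced pair (which merely adds a constant to $\cL(h_{>a};\cdot)$ for every $a$) so that $|S_X|=n$. A one-line computation of the increments $\cL(h_{>a};S_X)-\cL(h_{>a-1};S_X)$ shows that $\cL(h_{>a};S_X)$ takes a fixed value $E$ at every even threshold $a$ and at $a\in\{0,2m\}$, while at the odd threshold $2j-1$ it equals $E+\ind{X_j=0}$. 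Alice sends the $\aux$ produced by $\Learn(S_X)$. Given $i$, Bob runs $\Unlearn(S_I,\aux)$ with $S_I=\{(2i-1,1),(2i,0)\}$; this is always a legitimate sub-multiset of $S_X$ because those two examples lie in the ``always'' part, and --- crucially --- $S_I$ does not depend on $X$, so Bob can form it. Removing these two examples decreases $\cL(h_{>a};\cdot)$ by exactly $1$ at every threshold $a\ne 2i-1$ and by $2$ at $a=2i-1$; hence the unique minimizer of $\cL(h_{>\cdot};S_X\smallsetminus S_I)$ is $2i-1$ when $X_i=1$, whereas when $X_i=0$ the minimum $E-1$ is already attained at threshold $0$. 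Thus $\Unlearn$ returns $h_{>2i-1}$ iff $X_i=1$ (for whatever canonical tie-breaking $\Learn$ uses, say smallest index), so Bob recovers $X_i$ with certainty, forcing $|\aux|\ge\Omega(m)=\Omega(\min\{n,|\cX|\})$.

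\textbf{Part (b): the TiLU scheme.}
Let $\Learn(S)$ output the canonical (say smallest-index) empirical-risk minimizer $h_{>a^\star}$ and set $\aux=a^\star$, costing $C_s=O(\log|\cX|)$. I would build a balanced binary tree over the threshold positions $\{0,1,\dots,|\cX|\}$ and, at each node $v$ with range $R_v$, store $\mu_v:=\min_{a\in R_v}\cL(h_{>a};S)$ together with the smallest $a_v^\star\in R_v$ attaining it ($O(\log n+\log|\cX|)$ bits). The ticket $t_i$ of the example at position $x_i$ records, for every node on the two root-to-leaf paths --- to threshold positions $x_i-1$ and $x_i$ --- the pair $(\mu,a^\star)$ of that node and of its sibling; this is $O(\log|\cX|)$ nodes, so $C_t=O(\log|\cX|\cdot(\log|\cX|+\log n))$. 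For $\Unlearn(S_I,\aux,(t_i)_{i\in I})$: if $I=\emptyset$ return $h_{>a^\star}$; otherwise the positions $\{x_i:i\in I\}$ cut $\{0,\dots,|\cX|\}$ into at most $n+1$ consecutive segments, and $a\mapsto\cL(h_{>a};S_I)$ is constant on each segment with a value computable from $S_I$ alone. Each segment decomposes into $O(\log|\cX|)$ canonical tree nodes, each of which appears (as a node or a sibling), with its $(\mu,a^\star)$, on one of the paths recorded in the tickets of the examples bounding that segment --- exactly the Merkle reconstruction argument of \theoremref{thm:merkle}. Subtracting the per-segment constant turns each $\mu_v$ into $\min_{a\in R_v}\cL(h_{>a};S\smallsetminus S_I)$; taking the global minimum over all segments, with ties broken toward smaller thresholds, yields $a'$ with $h_{>a'}=\Learn(S\smallsetminus S_I)$, which $\Unlearn$ returns.

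\textbf{Where the difficulty is.}
For part (a) the only genuine constraint is that $\Unlearn$ must be handed an $S_I$ that Bob can produce from $i$ alone; this is precisely why the gadget must separate an $X$-independent skeleton from the $X$-dependent additions and let Bob probe by deleting two skeleton examples, after which everything reduces to a two-line error-increment calculation. For part (b) the substance lies entirely in the bookkeeping of the domain tree: reconciling the off-by-one between threshold positions $\{0,\dots,|\cX|\}$ and example positions $\{1,\dots,|\cX|\}$, checking that every canonical node needed to cover a segment really does occur on a path stored in some deleted example's ticket, and pinning down tie-breaking so that $\Unlearn$'s output is \emph{identical} to (not merely error-equivalent to) $\Learn(S\smallsetminus S_I)$.
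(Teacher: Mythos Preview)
For Part (a), your reduction from $\mathsf{Index}_m$ and the gadget idea are on the right track, but the final step has a real gap. You assert that $\Unlearn$ returns $h_{>2i-1}$ iff $X_i=1$ ``for whatever canonical tie-breaking $\Learn$ uses, say smallest index''. This does not hold for an arbitrary LU scheme: by your own computation, when $X_i=0$ the threshold $2i-1$ has loss $E+\ind{X_i=0}-2=E-1$, so $h_{>2i-1}$ is \emph{also} an ERM of $S_X\smallsetminus S_I$ in that case. A legitimate $\Learn$ could therefore output $h_{>2i-1}$ regardless of $X_i$. Concretely, after removing the skeleton pair at block $i$ in the $X_i=0$ case, positions $2i-1,2i$ carry no examples; a $\Learn$ that, among all ERM thresholds, prefers one sitting inside such a gap would return $h_{>2i-1}$ in both cases, and Bob's test fails. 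Since the lower bound must hold against \emph{every} LU scheme, not just those with smallest-index tie-breaking, the argument is incomplete. The paper fixes this by using four points per block: after unlearning the two outer points, one value of $X_i$ forces $h_{>4i+2}$ to be the \emph{unique} ERM, while the other makes $h_{>4i+2}$ \emph{strictly} suboptimal. Bob then checks whether the output equals $h_{>4i+2}$, which is robust to any tie-breaking by $\Learn$.

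For Part (b), your scheme is essentially the paper's: a balanced binary tree over the domain, storing at each node the local minimum empirical loss and its argmin, with tickets carrying the node information along the root-to-leaf path(s) so that $\Unlearn$ can cover each segment between deleted positions by canonical subtrees and subtract the segment-constant $\cL(h_{>\cdot};S_I)$. Your choice to store two paths (to $x_i-1$ and $x_i$) plus both node and sibling is slight over-storage relative to the paper but changes nothing asymptotically.
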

\noindent 
In contrast, if \(S\) is \(\Hth\)-realizable, then \pref{thm:1d-thresh-central-ub} shows a space-efficient LU scheme in the central model.

\subsection{Lower Bounds for Central LU Schemes}
\label{app:1d-threshold-lb-central} 

Suppose for contradiction we have an LU scheme for $\Hth$, consisting of $\Learn$ and $\Unlearn$ methods, that is valid even in the agnostic setting. Suppose $|\cX| \ge 4m+4$.
We will construct a two-party one-way communication protocol, with communication complexity being the space complexity of the LU scheme, and where Bob can entirely reconstruct Alice's input $X \in \bit^m$ (with probability $1$). 

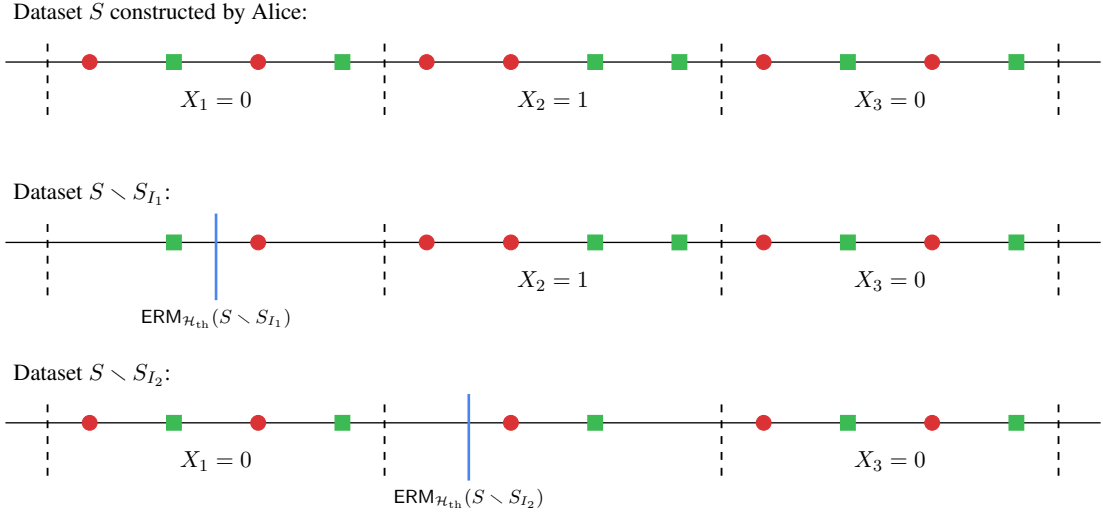
\begin{figure}
\centering
\begin{tikzpicture}[
    pt/.style = {minimum size=7pt,inner sep=0pt,outer sep=0pt},
    onept/.style = {pt,circle,draw=Gred,fill=Gred}, 
    zeropt/.style = {pt,draw=Ggreen,fill=Ggreen},
    ermline/.style = {Gblue,line width=1pt},
    scale=0.8, transform shape
]
\def \xgap{1.4}
\def \ygap{0.8}

\def \y {0}
\def \erm{14.5}

\node[right] at (-\xgap, \y + \ygap) {Dataset $S$ constructed by Alice:};
\draw[line width=0.5pt] (-\xgap, \y) -- (12*\xgap, \y);
\foreach \i in {0, 2, 4, 5, 8, 10}{%
    \node[onept] at (\i * \xgap, \y) {};
}
\foreach \i in {1, 3, 6, 7, 9, 11}{%
    \node[zeropt] at (\i * \xgap, \y) {}; 
}
\node at (1.5*\xgap, \y-0.8*\ygap) {$X_{1} = 0$};
\node at (5.5*\xgap, \y-0.8*\ygap) {$X_{2} = 1$};
\node at (9.5*\xgap, \y-0.8*\ygap) {$X_{3} = 0$};
\foreach \i in {0, 1, 2, 3} {
    \draw[dashed,line width=0.7pt]
    (\i*4*\xgap - 0.5*\xgap, \y-1.1*\ygap) --
    (\i*4*\xgap - 0.5*\xgap, \y+0.45*\ygap);
}

\def \y {-3}
\def \erm{1.5}

\node[right] at (-\xgap, \y + \ygap) {Dataset $S \smallsetminus S_{I_1}$:};
\draw[line width=0.5pt] (-\xgap, \y) -- (12*\xgap, \y);
\foreach \i in {2, 4, 5, 8, 10} {
    \node[onept] at (\i * \xgap, \y) {};
}
\foreach \i in {1, 6, 7, 9, 11} {
    \node[zeropt] at (\i * \xgap, \y) {};
}
\node at (5.5*\xgap, \y-0.8*\ygap) {$X_{2} = 1$};
\node at (9.5*\xgap, \y-0.8*\ygap) {$X_{3} = 0$};
\foreach \i in {0, 1, 2, 3} {
    \draw[dashed,line width=0.7pt]
    (\i*4*\xgap - 0.5*\xgap, \y-1.1*\ygap) --
    (\i*4*\xgap - 0.5*\xgap, \y+0.45*\ygap);
} 

\node[below] (ERM) at (\erm*\xgap, \y-1.2*\ygap) {\footnotesize $\ERM_{\Hth}(S \smallsetminus S_{I_1})$};
\draw[ermline] (\erm*\xgap, \y+0.6*\ygap) -- (ERM);

\def \y {-6}
\def \erm{4.5}

\node[right] at (-\xgap, \y + \ygap) {Dataset $S \smallsetminus S_{I_2}$:};
\draw[line width=0.5pt] (-\xgap, \y) -- (12*\xgap, \y);
\foreach \i in {0, 2, 5, 8, 10} {
    \node[onept] at (\i * \xgap, \y) {};
}
\foreach \i in {1, 3, 6, 9, 11} {
    \node[zeropt] at (\i * \xgap, \y) {};
}
\node at (1.5*\xgap, \y-0.8*\ygap) {$X_{1} = 0$};
\node at (9.5*\xgap, \y-0.8*\ygap) {$X_{3} = 0$};
\foreach \i in {0, 1, 2, 3} {
    \draw[dashed,line width=0.7pt]
    (\i*4*\xgap - 0.5*\xgap, \y-1.1*\ygap) --
    (\i*4*\xgap - 0.5*\xgap, \y+0.45*\ygap);
}

\node[below] (ERM) at (\erm*\xgap, \y-1.2*\ygap) {\footnotesize $\ERM_{\Hth}(S \smallsetminus S_{I_2})$};
\draw[ermline] (\erm*\xgap, \y+0.6*\ygap) -- (ERM);

\end{tikzpicture}
\caption{Illustration of lower bound reduction in \theoremref{thm:thresh-separation}, with red circles for label $1$ and green squares for label $0$.}
\label{fig:agnostic-thresh-central-lb}
\end{figure}

\paragraph{Alice:} On input $X \in \bit^m$
\begin{itemize}
\item Construct a dataset $S$ as follows:
\begin{itemize} 
\item For each $i \in [m]$,
    \begin{itemize}
        \item if $X_i = 0$ include examples $(4i + 1, 1)$, $(4i+2, 0)$, $(4i+3, 1)$, $(4i + 4, 0)$.
        \item if $X_i = 1$ include examples $(4i + 1, 1)$, $(4i+2, 1)$, $(4i+3, 0)$, $(4i + 4, 0)$.
    \end{itemize} 
\end{itemize} 
\item Let $(h, \aux) \gets \Learn(S)$ and send $\aux$ to Bob.
\end{itemize} 

\paragraph{Bob:} On receiving $\aux$ from Alice,  construct $Y \in \bit^m$ as follows:
\begin{itemize}
\item For each $i \in [m]$:
    \begin{itemize}
    \item Let $h^{(i)} \gets \Unlearn(S_{I_i}, \aux)$ for $S_{I_i} := ((4i + 1, 1), (4i+4, 0))$.
    \item If $h^{(i)} = h_{>4i+2}$, set $Y_i = 0$. Otherwise, set $Y_i = 1$.\\
    \end{itemize}
\end{itemize}

\noindent If $X_i = 0$, then after unlearning $S_{I_i}$, the only $h \in \ERM_{\Hth}(S \smallsetminus S_{I_i})$ is $h_{>4i+2}$. On the other hand, if $X_i = 1$, then after unlearning $S_{I_i}$, $h_{>4i+2} \notin \ERM_{\Hth}(S \smallsetminus S_{I_i})$. This reduction is visualized in \pref{fig:agnostic-thresh-central-lb}.

Thus, Bob can perfectly recover Alice's input $X$. Hence, we require that $\aux$ contains at least $m$ bits. 

\subsection{Upper Bounds for TiLU Schemes}\label{app:1d-threshold-ticket-ub} 
We now describe a TiLU scheme for $\Hth$ with space complexity $(C_s = \log |\cX|, C_t = \poly(\log n, \log |\cX|))$ that is valid for all distributions. We first describe the key ideas before delving into the finer details. We will construct a $\Learn$ algorithm that would return the predictor $h_{>a} \in \ERM_{\Hth}(S)$ with the smallest possible $a$ (hence forth referred to as the ``minimal ERM'').

For simplicity we assume that $|\cX| = D = 2^d$ is a power of $2$ (the proof can easily be generalized to other values). We use the following notation below for all $p, q \in \cX = \{1, \ldots, D\}$ with $p < q$:
\begin{itemize} 
\item $S_{p,q}$ denotes dataset within $S$ with $x$-values in $\{p, \ldots, q\}$.
\item Let $a_{p,q}$ be the smallest value such that $h_{>a_{p,q}} \in \ERM_{\Hth}(S_{p,q})$ and $p - 1 \le a_{p,q} \le q$; note that such a value exists because all examples in $S_{p,q}$ have $x$ values between $p$ and $q$ due to which $\cL(h_{>a}; S_{p,q})$ is same for all $a \le p-1$ and similarly, the same for all $a \ge q$.
\item $\err_{p,q} := \cL(h_{>a_{p,q}}; S)$. Intuitively speaking, $\err_{p,q}$ is the smallest loss incurred by any function $h_{>a}$ over $S$ subject to $p-1 \le a \le q$.
\end{itemize}

\noindent Consider an interval $[p, q]$ such that no example in $\U$ has $x$-value in $[p, q]$. Now, if the minimal ERM of $S \smallsetminus \U$ happens to be of the form $h_{>a}$ for $p-1 \le a \le q$, then, the minimal ERM has to be $h_{>a_{p,q}}$. Moreover, we have $\cL(h_{>a_{p,q}}; S \smallsetminus \U) = \err_{p,q} - \cL(h_{>a_{p,q}}; \U)$. 

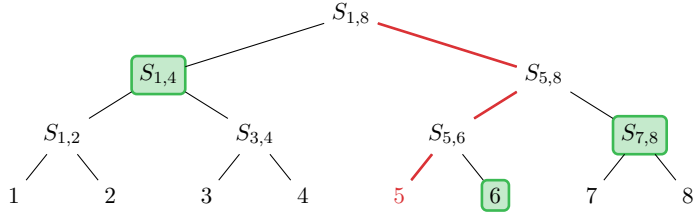
\begin{figure}
\centering
\begin{tikzpicture}[
  grow=down,
  level 1/.style={sibling distance=64mm},
  level 2/.style={sibling distance=32mm},
  level 3/.style={sibling distance=16mm},
  level 4/.style={sibling distance=8mm},
  level distance=10mm,
  highlight/.style={rectangle,rounded corners=2pt,draw=Ggreen,fill=Ggreen!30,line width=1pt},
  every node/.style = {outer sep=0pt},
  scale=0.8, transform shape
]
\node (1) {$S_{1,8}$}
  child {
    node[highlight] {$S_{1,4}$}
    child {
      node {$S_{1,2}$}
      child {
        node {$1$}
      }
      child {
        node {$2$}
      }
    }
    child {
      node {$S_{3,4}$}
      child {
        node {$3$}
      }
      child {
        node {$4$}
      }
    }
  }
  child {
    node (3) {$S_{5,8}$}
    child {
      node (6) {$S_{5,6}$}
      child {
        node[Gred] (12) {$5$}
      }
      child {
        node[highlight] {$6$}
      }
    }
    child {
      node[highlight] {$S_{7,8}$}
      child {
        node {$7$}
      }
      child {
        node {$8$}
      }
    }
  }; 
\path[Gred, line width=1pt]
(1) edge (3)
(3) edge (6) 
(6) edge (12); 
\end{tikzpicture}
\caption{Illustration of TiLU scheme underlying the proof of~\theoremref{thm:thresh-separation}(b) for $|\cX| = 8$. The ticket $t_i$ for all examples of the form $(x_i=5, y_i)$ are the predictors $h_{>a_{6}}$, $h_{>a_{7,8}}$, $h_{>a_{1,4}}$ as well as $\err_{6}$, $\err_{7,8}$ and $\err_{1,4}$.} 
\label{fig:agnostic-thresh}
\end{figure}

\paragraph{\boldmath $\Learn$.} 
Consider a full binary tree of depth $d$ with leaf $x$ corresponding to a possible input value $x \in \cX$. 
For each internal node $v$, let $p_v$ be the smallest leaf index under $v$, and similarly let $q_v$ be the largest leaf index under $v$. For each example $(x_i, y_i)$, let $v_1, \dots, v_{d - 1}$ be the nodes on the path from the root to leaf $x_i$, and for each $j \in \crl{2,\ldots, d}$ let $\tilde{v}_j$ be the child of $v_{j-1}$ and sibling of $v_j$. \pref{fig:agnostic-thresh} shows an example.
Let the ticket corresponding to example $(x_i, y_i)$ be given as
\[
t_i \defeq \prn{(a_{p_{v_2}, q_{v_2}}, \err_{p_{v_2}, q_{v_2}}), \ldots, (a_{p_{v_d}, q_{v_d}}, \err_{p_{v_d}, q_{v_d}})}
\]
It is immediate to see that the number of bits in $t_i$ is $d(d + \log n)$. Define $\Learn(S)$ to return $h = h_{>a_{1,D}}$, $\aux=h$, and tickets $t_i$ as specified above.

\paragraph{\boldmath $\Unlearn$.} If $\U$ is empty, then simply return $h$ (present in $\aux$). Given a non-empty dataset $\U$ indexed by $I\subseteq [n]$, we can use the tickets to construct a partition of $[1, D]$ into disjoint intervals such that either each interval is a singleton $\{p\}$ such that some example in $\U$ has $x$-value equal to $p$, or is of the form $[p,q]$ such that no example in $\U$ has an $x$-value in $[p,q]$. As argued above, we know that the minimal ERM for $S \smallsetminus \U$ must be of the form $h_{>a_{p,q}}$ for one of these parts as recovered above. Moreover, it is possible to compute the loss of each such predictor over $S$ as $\cL(h_{>a_{p,q}}; S \smallsetminus \U) = \err_{p,q} - \cL(h_{>a_{p,q}}; \U)$. Thus, we can choose the minimal ERM for $S \smallsetminus \U$ by choosing the predictor $h_{>a}$ with smallest empirical loss over $S \smallsetminus \U$ and the smallest possible value of $a$ among the candidates above.

\section{Missing Proofs from \texorpdfstring{\pref{sec:sharper}}{Section~\ref{sec:sharper}}} \label{app:sharper-proofs}

\subsection{Point Functions}

\begin{proofof}[\theoremref{thm:sharper-bounds-main}\ref{item:point-ctz}] 
In the following, we provide the methods \(\Learn\) and \(\Unlearn\). Our algorithm will use the TiLU scheme $(\tLearn, \tUnlearn)$ for CtZ from \theoremref{thm:ctz-final-upper-bound}. One specific property of this scheme we will use is that $\tUnlearn$ does not need $\aux$ in the case where the unlearning set is non-empty.

\paragraph{\boldmath \(\Learn\).} We will separately define the hypothesis $h$, the auxiliary information $\aux$, and the ticket $(t_i)_{i \in [n]}$. Recall that $S$ is the input set of training examples.
\begin{itemize}
\item The hypothesis $h$ is defined as follows:
\begin{align*}
h = 
\begin{cases} 
h_a & \text{ if } \exists a \in \cX, (a, 1) \in S, \\ 
h_{\min\{b ~\mid~ (b, 0) \notin S\}} & \text{ otherwise.}
\end{cases}
\end{align*}
\item The auxiliary information $\aux$ consists of three parts $\aux_1, \aux_2, \aux_3$. The first part $\aux_1$ is simply $h$. The second part $\aux_2$ is one bit and records which of the two cases we are in for $h$, i.e., $\aux_2 := \indicator{\exists a \in \cX, (a, 1) \in S}$. The last part $\aux_3$ records $\min\{b \mid (b, 0) \notin S\}$.
\item As for the tickets, for every $a \in \cX$, let $S^{(a)}$ denote $\{(x_i, y_i) \mid i \in [n], x_i = a\}$. For every $a \in \cX$, we run $\tLearn$ on $S^{(a)}$ and let the tickets for the examples in $S^{(a)}$ be as the output of $\tLearn$.
\end{itemize}

\paragraph{\boldmath \(\Unlearn\).} $\Unlearn$ proceeds as follows, based on if $\aux_2 = 1$. Here, for every $a \in \cX$, let $S^{(a)}_I$ denote $\{(x_i, y_i) \mid i \in I, x_i = a\}$.
\begin{itemize}
\item Case I: $\aux_2 = 1$. Let $\aux_1 = h_a$. In this case, start by checking if $S^{(a)}_I = \emptyset$. 
\begin{itemize}
\item If $S^{(a)}_I = \emptyset$, then output $h_a$.
\item If $S^{(a)}_I \ne \emptyset$, then run $\tUnlearn$ on the tickets corresponding to $S^{(a)}_I$ to determine if $S^{(a)}_I = S^{(a)}$.
\begin{itemize}
\item If $S^{(a)}_I \ne S^{(a)}$ (i.e., $\tUnlearn$ outputs $\top$), then output $h_a$.
\item If $S^{(a)}_I = S^{(a)}$ (i.e.,  $\tUnlearn$ outputs $\bot$), then proceed to Case II.
\end{itemize}
\end{itemize}
\item Case II: $\aux_2 = 0$. In this case, start with $b = \aux_3$. Then, for $j = b - 1, \dots, 1$, do the following:
\begin{itemize}
\item If $S^{(j)}_I = \emptyset$, then skip to the next $j$.
\item Otherwise, if $S^{(j)}_I \ne \emptyset$ , then run $\tUnlearn$ on the tickets corresponding to $S^{(j)}_I$ to determine if $S^{(j)}_I = S^{(j)}$. If $S^{(j)}_I = S^{(j)}$, then update $b \gets j$.
\end{itemize}
Finally, output $h_b$.
\end{itemize}

To verify that this is a valid learning scheme, note that for $a$ and $j$'s used in the algorithm, we know that $S^{(a)}$ and $S^{(j)}$ are not empty. Therefore, by considering the case  $S^{(a)}_I, S^{(j)}_I \ne \emptyset$ and uses $\tUnlearn$, we can determine whether $S^{(a)} = S^{(a)}_I$ or $S^{(j)} = S^{(j)}_I$. Due to this, our procedure correctly updates (i) if there is any 1-labeled example remaining and (ii) the minimum $b$ whose $(b, 0)$ does not appear in the dataset. From this, it is not hard to see that $(\Learn, \Unlearn)$ is a valid ticketed LU scheme. The claimed space complexity the follows immediately from that of \theoremref{thm:ctz-final-upper-bound}.
\end{proofof}

\subsection{\boldmath Product of \texorpdfstring{$d$}{d} Thresholds}

\subsubsection{Minimum Value Primitive}

\begin{theorem} \label{thm:min-scheme}
There is a TiLU scheme for $\minval$ with space complexity $(O(\log|\cX|), O(\log|\cX| + \log \alpha^{-1}(n)))$.
\end{theorem}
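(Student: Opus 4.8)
The plan is to assemble the \minval scheme from many parallel copies of the CtZ scheme of \theoremref{thm:ctz-final-upper-bound} --- one per distinct value occurring in $S$ --- linked by ``successor pointers''. For a nonempty $S$, write its distinct values as $v_1 < v_2 < \cdots < v_k$, let $m_j \ge 1$ be the multiplicity of $v_j$, and for a value $v$ let $S^{(v)} = \{i : x_i = v\}$ be its block of copies (with $S^{(v)}_I = S^{(v)} \cap I$). In $\Learn$ I would run the CtZ learner $\tLearn$ separately on each block $S^{(v_j)}$, producing one CtZ-ticket of $O(\log\alpha^{-1}(m_j)) = O(\log\alpha^{-1}(n))$ bits for each copy of $v_j$. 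The central information would be just $\aux = \min(S) = v_1$ (and $\aux = \perp$ when $S = \emptyset$), costing $O(\log|\cX|)$ bits; the ticket for a copy of $v_j$ would be the pair formed by its CtZ-ticket inside the block $S^{(v_j)}$ together with $\mathrm{succ}(v_j)$, the next larger distinct value $v_{j+1}$ (with $\mathrm{succ}(v_k)$ taken to be $\perp$). Each ticket thus uses $O(\log|\cX| + \log\alpha^{-1}(n))$ bits, as claimed.

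For $\Unlearn$, given the request $I$ and the tickets of $S_I$, I would keep a candidate minimum $c$ initialized to $\aux$ (immediately returning $\perp$ if $\aux = \perp$) and loop: if $S^{(c)}_I = \emptyset$, output $c$; otherwise hand the CtZ-tickets of the examples in $S^{(c)}_I$ to the CtZ unlearner $\tUnlearn$ (well-defined, since $|S^{(c)}_I|$ is observed) to test whether $S^{(c)}_I = S^{(c)}$. If $\tUnlearn$ returns $\top$ a copy of $c$ survives, so output $c$; if it returns $\bot$, every copy of $c$ has been deleted, so read $\mathrm{succ}(c)$ off any ticket of an example of $S^{(c)}_I$, output $\perp$ if it equals $\perp$, and otherwise set $c \gets \mathrm{succ}(c)$ and repeat. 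The correctness argument is an invariant: whenever the candidate advances past a value $c$, every value $\le c$ present in $S$ has been entirely removed, so the minimum of $S \smallsetminus S_I$ is exactly the first candidate $c$ with $S^{(c)} \smallsetminus S^{(c)}_I \ne \emptyset$ --- which is precisely what the branches ``$S^{(c)}_I = \emptyset$'' and ``$\tUnlearn$ returns $\top$'' detect --- while if the chain walks off $v_k$ then $S \smallsetminus S_I = \emptyset$ and $\perp$ is correct. Since $c$ strictly increases each iteration, the loop halts within $k \le n$ steps, and the output agrees with $\Learn(S \smallsetminus S_I)$ by construction.

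The delicate point --- and the main obstacle --- is the central space budget: taken literally, each of the $k$ CtZ sub-instances would contribute its one bit of auxiliary information, blowing $C_s$ up to $\Omega(n)$. I would resolve this exactly as in the proof of \theoremref{thm:sharper-bounds-main}\ref{item:point-ctz}: the CtZ scheme of \theoremref{thm:ctz-final-upper-bound} reads its auxiliary bit \emph{only} when its own unlearning set is empty, whereas $\Unlearn$ above invokes $\tUnlearn$ only on blocks with a nonempty unlearning set, so no CtZ auxiliary bit is ever consulted and $C_s = O(\log|\cX|)$ is enough. The remaining items --- formalizing the invariant and dispatching the trivial $S = \emptyset$ and $I = \emptyset$ cases --- are routine and would be written out in full.
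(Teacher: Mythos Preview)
Your proposal is correct and follows essentially the same approach as the paper: both run the CtZ scheme of \theoremref{thm:ctz-final-upper-bound} on each block of equal-valued examples, store the successor value in each ticket alongside the CtZ ticket, set $\aux = \min(S)$, and have $\Unlearn$ walk up the successor chain using CtZ to test whether each block is fully deleted. You are in fact more explicit than the paper about why the per-block CtZ auxiliary bits need not be stored centrally (the paper defers this to ``similar to the previous proof''), but the substance is identical.
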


\begin{proof}
For each $a \in \cX$, we write $\succe_S(a)$ to denote the maximum value in $S$ that is larger than $a$; if such a value does not exist, then let $\succe_S(a) = \bot$. Similarly, we use the convention that $\min(\emptyset) = \bot$. Similar to before, we use $S^{(a)}$ (resp. $S^{(a)}_I$) to denote $\{x_i \mid i \in [n], x_i = a\}$ (resp. $\{x_i \mid i \in I, x_i = a\}$), and let $(\tLearn, \tUnlearn)$ be the ticketed LU scheme for CtZ from \theoremref{thm:ctz-final-upper-bound}.  We now describe our algorithms.

 \paragraph{\boldmath \(\Learn\).} First, we let $h = \aux = \min(S)$. Each ticket $t_i$ consists of two components $t_i^{(1)}, t_i^{(2)}$. The second component $t_i^{(2)}$ is simply set to $\succe_S(x_i)$. As for the first component, it is computed as follows: for every $a \in \cX$, we run $\tLearn(S^{(a)})$ and assign the tickets back to samples in $S^{(a)}$ as a first component.

 \paragraph{\boldmath \(\Unlearn\).} The algorithm works as follows. Start by letting $b$ be such that $\aux = h_b$. Then, do the following:
\begin{itemize}
\item If $b = \bot$ or $S^{(b)}_I = \emptyset$, then output $b$ and stop.
\item Otherwise, if $S^{(b)}_I \ne \emptyset$, then run $\tUnlearn$ on the second component of the tickets corresponding to $S^{(b)}_I$ to determine whether $S^{(b)}_I = S^{(b)}$. If $S^{(b)}_I \ne S^{(b)}$, then output $b$ and stop. Otherwise, if $S^{(b)}_I = S^{(b)}$, then update $b \gets \succe_S(b)$ where $\succe_S(b)$ is taken from the first component of the tickets, and repeat this step.
\end{itemize}

To see the correctness, once again note that we are simply checking in each iteration whether $b$ still belong to $S \setminus S_I$ and otherwise we let $b \gets \succe_S(b)$. The correctness of the check follows similar to the previous proof. Moreover, given the check is correct, it is clear that the entire algorithm is correct.
\end{proof}

\subsubsection{From Minimum Value to Product of \texorpdfstring{$d$}{d} Thresholds}

\begin{proofof}[\theoremref{thm:sharper-bounds-main}\ref{item:prod-threshold-ctz}]
Let $(\tLearn, \tUnlearn)$ denote the TiLU scheme for $\minval$ from \theoremref{thm:min-scheme}. The algorithms work as follows.

\paragraph{\boldmath \(\Learn\).} Let $S^{(+)}$ denote the set $\{(x_i, y_i) \mid i \in [n], y_i = 1\}$ and for each $j \in [d]$, let $S^{(+)}_j$ denote $\{(x_i)_j \mid (x_i, y_i) \in S^{(+)}\}$. The output hypothesis is $h_a$ where 
\begin{align*}
a_j = 
\begin{cases}
m_j & \text{ if } S^{(+)} = \emptyset, \\
\min(S^{(+)}_j) - 1 & \text{ otherwise.}
\end{cases}
\end{align*}

As for the ticket, if $y_i = 0$, then the ticket $t_i$ is empty. For the 1-labeled samples, their ticket $t_i$ is the concatenation of the corresponding tickets from $\tUnlearn(S^{(+)}_1), \dots, \tUnlearn(S^{(+)}_d)$. Similarly, the auxiliary information is the concatenation of the auxiliary information from these $\tUnlearn$ executions.

\paragraph{\boldmath \(\Unlearn\).} We use $\tUnlearn$ to find $b_j = \min(S^{(+)}_j \setminus (S_I)^{(+)}_j)$. We then let
\begin{align*}
a_j = 
\begin{cases}
m_j & \text{ if } b_j = \bot, \\
b_j - 1 & \text{ otherwise.}
\end{cases}
\end{align*}
Finally, output $h_{>\ba}$.

It is simple to see that the learning algorithm is correct: by definition, all 1-labeled examples are correctly labeled by $h_{>\ba}$. Furthermore, if the input training set is realizable by $h_{>\ba^*}$, then we must have $a^*_j \leq a_j$ for all $j \in [d]$; this implies that $h_{>\ba}$ also label all 0-labeled examples correctly. The correctness of the unlearning algorithm then immediately follows from the correctness of $(\tLearn, \tUnlearn)$ for \minval.

Since the ticket is a concatenation of the $d$ tickets for \minval, we can use the bound in \theoremref{thm:min-scheme}. This implies that the total ticket size is $O\left(d \cdot \left(\log m + \log \alpha^{-1}(n)\right)\right) = O(\log|\cX| + d \cdot \log \alpha^{-1}(n)))$ as desired. A similar calculation shows that the auxiliary information has size $O(\log |\cX|)$. 
\end{proofof}

\subsection{Thresholds}

\begin{proofof}[\theoremref{thm:sharper-bounds-main}\ref{item:threshold-ctz}]
Let $(\tLearn, \tUnlearn)$ be the TiLU scheme for CtZ from \theoremref{thm:ctz-final-upper-bound}. For any interval $[\ell, r]$, we write $S^{[\ell, r]}$ (resp. $S_I^{[\ell, r]}$) to denote $\{(x_i, y_i) \in S \mid x_i \in [\ell, r]\}$ (resp. $\{(x_i, y_i) \in S_I \mid x_i \in [\ell, r]\}$).

\paragraph{\boldmath $\Learn$.}
Let $a_0, \dots, a_i$ be the same as in the proof of \theoremref{thm:1d-thresh-central-ub}. Let $j_0, \dots, j_i$ be such that $a_{j_0} < \cdots < a_{j_i}$ Then, for each $k = 0, \dots, i - 1$, we run $\tLearn(S^{[a_{j_k} + 1, a_{j_{k+1}}]})$ to get $\aux_k$ and the tickets, which we assign to the corresponding examples. The final auxiliary information is then the concatenation of $\aux_1, \dots, \aux_{i - 1}$.

\paragraph{\boldmath $\Unlearn$.}
Similar to the proof of \theoremref{thm:1d-thresh-central-ub}, the only task for us is to determine whether $\cL(h_{>a_j}; S \smallsetminus \U) = 0$ for each $j = 0, \dots, i - 1$. To do this, observe that $\cL(h_{>a_j}; S \smallsetminus \U) = 0$ iff $S \smallsetminus \U$ contains no point in $[a_j + 1, a_i] \cup [a_i + 1, a_j]$. We can check this by first running $\tUnlearn$ on each of $U^{[a_{j_k} + 1, a_{j_{k+1}}]}$ with auxiliary information $\aux_k$ to check whether $S \smallsetminus \U$ contains any point in $[a_{j_k} +  1, a_{j_{k+1}}]$. This information is sufficient to determine whether $S \smallsetminus \U$ contains any point in $[a_j + 1, a_i] \cup [a_i + 1, a_j]$.
\end{proofof}

\section{Missing Proofs from \texorpdfstring{\pref{sec:lb}}{Section~\ref{sec:lb}}}
\begin{proofof}[\lemmaref{lem:sperner-to-ctz-lb}]
Suppose for the sake of contradiction that there is a TiLU scheme $(\Learn, \Unlearn)$ for CtZ with space complexity $(C_s = c_s, C_t = c_t)$ where both $c_s, c_t \in \N$ are absolute constants. %

For every $p \in \{0, 1\}^{c_s}$, let $A^p$ denote the set of $a \in \N$ such that, if we feed in $a$ elements to $\Learn$ as input, the returned $\aux$ is equal to $p$. Since $\bigcup_{p \in \crl{0, 1}^{c_s}} A^p = \N \cup \{0\}$, there must exist $p^*$ such that $A^{p^*}$ is infinite. Let $a_1 < a_2 < \cdots$ be the elements of $A^{p^*}$ sorted in increasing order. 

Next, for every $i \in \N$, let $Q_i$ denote the set of all $a_i$ tickets produced by $\Learn$ when the input contains $a_i$ elements. Since $\{Q_i\}_{i \in \N}$ is an $\ba$-size-indexing family with alphabet size $\sigma = 2^{c_t}$, \theoremref{thm:sperner-lb} implies that it cannot be a Sperner family. In other words, there must exists $k < \ell$ such that $Q_k \subseteq Q_\ell$. Now, consider the $\Unlearn$ algorithm when it receives $\aux = p^*$ and tickets being $Q_k$. If it answers $\bot$, then this is incorrect for the case where we start with $a_\ell$ elements and removes $a_k$ elements with the tickets corresponding to $Q_k$. However, if it answers $\top$, then it is also incorrect for the case where we start with $a_k$ elements and remove everything. This is a contradiction.
\end{proofof} 

\begin{proofof}[\theoremref{thm:count-to-zero-lb-red}]
Let $(\Learn, \Unlearn)$ be any TiLU scheme for $\cH$. Let $h_1$ denote the predictor output by $\Learn(\emptyset)$ and $h_2$ denote any other hypothesis in $\cH$, and let $x$ be such that $h_1(x) \ne h_2(x)$. 

Our TiLU scheme for CtZ $(\tLearn, \tUnlearn)$ works as follows. $\tLearn(S)$ returns the same auxiliary information and tickets as $\Learn((x, h_2(x)), \dots, (x, h_2(x)))$, where $(x, h_2(x))$ is repeated $|S|$ times. $\tUnlearn(S_I, \aux, (t_i)_{i \in I})$ first runs $\Unlearn(((x, h_2(x)), \dots, (x, h_2(x))), \aux, (t_i)_{i \in I})$ where $(x, h_2(x))$ is repeated $|S_I|$ times, to get a predictor $h'$. Then, it outputs $\top$ iff $h'(x) = h_2(x)$. 

It is obvious that the space complexity of $(\tLearn, \tUnlearn)$ is the same as $(\Learn, \Unlearn)$. For the correctness, note that if $S_I = S$, then $\Unlearn$ outputs $h_1$ (by our choice of $h_1$) and thus $\tUnlearn$ correctly outputs $\bot$. On the other hand, if $S_I \ne S$, then at least one copy of $(x, h_2(x))$ remains and therefore we must have $h'(x) = h_2(x)$. Hence, in this case, $\tUnlearn$ outputs $\top$ as desired. 
\end{proofof}

\section{Realizability Testing} \label{app:realizability_testing} 

In this section, we describe the \textit{realizability testing} problem, where the goal is to test whether a given dataset \(S\) is \(\cH\)-realizable, i.e.,~ there exists  \(\hstar \in \cH\) s.t.~\(\hstar(x) = y\) for all \((x, y) \in S\). We extend the terminologies of \definitionref{def:lu-ticket} except that $\Learn$ and $\Unlearn$ now return $\perp$ (when $S$ is realizable), or $\top$ (when $S$ is not realizable by \(\cH\)), instead of the hypotheses $h, h'$. 

The easier scenario is when \(S\) is \(\cH\)-realizable, in which case, for any \(\U \subseteq S\), the remaining dataset \(S \setminus \U\) is also \(\cH\)-realizable. However,  when \(S\) is not \(\cH\)-realizable, removing   \(\U\) may make \(S \setminus \U\) to be \(\cH\)-realizable, and this is the challenging scenario for realizability testing.  

Similar to the rest of the results in the paper, we show a separation between central LU and TiLU schemes for realizability testing, by considering the class of 1D thresholds. 

\begin{theorem}\label{thm:thresh-realizability-separation} 
For realizability testing for the class $\Hth$ of 1D thresholds,
\begin{enumerate}[label=\((\alph*)\)]   
\item Any LU scheme that is valid for all datasets must have space complexity $\Omega(\min\crl{n, |\cX|})$. 
\item There exists a TiLU scheme valid for all datasets with space complexity $(C_s = O(\log|\cX|), C_t = O(\log|\cX| + \log \alpha^{-1}(n)))$. 
\end{enumerate}
\end{theorem}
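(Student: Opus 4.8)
The plan is to handle the two parts separately, and the harder one is the lower bound in \((a)\): part \((b)\) is a short reduction to the \minval primitive of \theoremref{thm:min-scheme}, whereas part \((a)\) needs a gadget construction that is delicate precisely because \(\Hth\)-realizability is a \emph{global} property of a dataset.

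\textbf{Part \((b)\) (the TiLU scheme).} The starting point is that a dataset \(T\) is \(\Hth\)-realizable if and only if \(\max\{x : (x,1)\in T\} < \min\{x : (x,0)\in T\}\), under the conventions \(\max\emptyset=-\infty\) and \(\min\emptyset=+\infty\); equivalently, after an unlearning request, \(S\smallsetminus\U\) is realizable iff the largest \(x\)-value among the surviving \(1\)-labeled examples is strictly below the smallest \(x\)-value among the surviving \(0\)-labeled examples. Maintaining the former under unlearning is an instance of \maxval and maintaining the latter is an instance of \minval, so I would invoke \theoremref{thm:min-scheme} (directly, and with the order on \(\cX\) reversed to obtain the \maxval version) to get TiLU schemes for each with space complexity \((O(\log|\cX|),\,O(\log|\cX|+\log\alpha^{-1}(n)))\). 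Then \(\Learn(S)\) runs the \maxval scheme on \(\{x_i : y_i=1\}\) and the \minval scheme on \(\{x_i : y_i=0\}\), stores in \(\aux\) the two sub-scheme auxiliary strings together with the current maximum-\(1\) and minimum-\(0\) values, and gives each example the ticket produced by whichever of the two sub-schemes it participated in; since a label-\(1\) example only enters the \maxval instance and a label-\(0\) example only the \minval instance, each ticket carries just one sub-ticket and has size \(O(\log|\cX|+\log\alpha^{-1}(n))\) while \(\aux\) has size \(O(\log|\cX|)\). On an unlearning request, \(\Unlearn\) calls the two sub-unlearners to recompute the surviving maximum-\(1\) and minimum-\(0\) values and returns \(\perp\) iff the former is strictly below the latter. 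Correctness is immediate from the characterization together with the correctness of the \minval and \maxval schemes; in particular, if \(S\) was realizable then so is \(S\smallsetminus\U\) and the test passes.

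\textbf{Part \((a)\) (the lower bound).} I would follow the encoding/communication template of \theoremref{thm:point-fn-central}(a) and \theoremref{thm:thresh-separation}(a). Take \(m=\Theta(\min\{n,|\cX|\})\) and construct a family \(\{S_X\}_{X\in\{0,1\}^m}\) of pairwise-distinct \emph{unrealizable} datasets over \(\cX\), so that the validity of the scheme forces the map \(X\mapsto\aux_X\) (where \((\cdot,\aux_X)=\Learn(S_X)\)) to be injective, which yields \(C\ge m=\Omega(\min\{n,|\cX|\})\). Injectivity is argued in the usual way: if \(\aux_X=\aux_{X'}\) for some \(X\ne X'\), one produces an unlearning request \(\U\) that is a common sub-dataset of \(S_X\) and \(S_{X'}\) and for which \(S_X\smallsetminus\U\) and \(S_{X'}\smallsetminus\U\) have different realizability; then \(\Unlearn(\U,\aux_X)\) and \(\Unlearn(\U,\aux_{X'})\) are literally the same call, yet must report both (different) realizabilities --- a contradiction. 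The datasets would be assembled from small per-coordinate ``gadgets'', each sitting in its own block of the domain and whose local structure encodes the corresponding bit (for instance a two-point gadget that is realizable in isolation exactly when the bit is \(0\)); one then takes \(\U\) to delete everything except the gadgets on which \(X\) and \(X'\) disagree, so that the two remaining sets' realizabilities are governed by the disagreeing bits. The main obstacle --- just as in \theoremref{thm:thresh-separation}(a) --- is the non-locality of threshold-realizability: a single label-\(1\) point lying below a single label-\(0\) point already destroys realizability, which sharply limits which bit-flips a realizability query can detect, so the crux of the proof is to restrict to a cleverly chosen (still exponentially large) subfamily of datasets and to lay out the gadgets and choose the probing unlearning requests so that the two surviving sets genuinely differ in realizability. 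Once the construction is in place, the counting step is routine.
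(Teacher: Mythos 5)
Your part \((b)\) is essentially the paper's proof: realizability of \(\Hth\) reduces to comparing an extreme value of each label class, maintained via the \minval{}/\maxval{} primitives, with each example receiving exactly one sub-ticket. One correction: with the paper's convention \(h_{>a}(x)=\indic{x>a}\), the \(1\)-labeled points are the \emph{large} ones, so the test is \(\max\crl{x:(x,0)\in T}<\min\crl{x:(x,1)\in T}\) — you have the two label classes swapped (you run \maxval on the \(1\)-labels and \minval on the \(0\)-labels). This is a harmless orientation slip, but as written your \(\Unlearn\) would declare realizable exactly the unrealizable datasets.

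Part \((a)\) has a genuine gap. You correctly set up the injectivity-of-\(\aux\) template and even name the right gadget (a two-point block whose local realizability encodes the bit — the paper uses \((2i+1,0)\) together with \((2i+X_i+1,1)\)), but you explicitly defer the crux: which unlearning request distinguishes \(S_X\) from \(S_{X'}\). The one concrete strategy you float — delete everything except the gadgets where \(X\) and \(X'\) disagree — fails: if two or more blocks disagree, both surviving sets contain a \(0\)-labeled point in a high block and a \(1\)-labeled point in a lower block and are therefore both unrealizable (and if you instead delete all the \(0\)-labeled points, both become trivially realizable), so the query returns the same answer in both worlds. The missing idea, which is the content of the paper's proof, is to isolate a \emph{single} block: take \(i\) to be the first disagreeing coordinate, and let \(\U\) delete every \(0\)-labeled example except \((2i+1,0)\) together with all \(1\)-labeled examples lying below \(2i+1\) (these live in blocks \(j<i\), where \(X\) and \(X'\) agree, so \(\U\) is a common sub-dataset). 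The surviving set then has one \(0\)-labeled point and only \(1\)-labeled points at or above it, and it is realizable iff \(X_i=1\). The paper packages this as an adaptive one-way communication protocol in which Bob recovers \(X_1,\dots,X_m\) sequentially, using the already-recovered prefix to form each request; without this isolation step (or an equivalent one) your counting argument does not go through.
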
 

Before presenting the proof of \pref{thm:thresh-realizability-separation}, we first describe a TiLU scheme for \emph{MAXimum VALue} (\maxval) problem, which will be a useful primitive for developing TiLU schemes for realizability testing in 1D-thresholds. In the \(\maxval\) problem, we are given a set $S \subseteq \cX$ where $\cX \subseteq \bbR$ is the domain and the goal is to output $\max(S)$ if $S \ne \emptyset$ and $\perp$ if $S = \emptyset$. 

\begin{theorem} \label{thm:max-scheme}
There is a TiLU scheme for $\maxval$ with space complexity $(O(\log|\cX|), O(\log|\cX| + \log \alpha^{-1}(n)))$. 
\end{theorem}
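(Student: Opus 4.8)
The plan is to reduce $\maxval$ to $\minval$ and invoke \theoremref{thm:min-scheme}. The key observation is that the domain $\cX \subseteq \bbR$ is totally ordered, so $x \mapsto -x$ is an order-reversing bijection and $\max(S) = -\min(-S)$, where $-S := \{-x : x \in S\}$. Let $(\tLearn, \tUnlearn)$ be the TiLU scheme for $\minval$ from \theoremref{thm:min-scheme}, instantiated over the domain $-\cX$. First I would define $\Learn(S)$ to run $\tLearn(-S)$ and output the same predictor value negated, the same auxiliary information, and the same tickets; and $\Unlearn(S_I, \aux, (t_i)_{i \in I})$ to run $\tUnlearn(-S_I, \aux, (t_i)_{i \in I})$ and negate its output (with $\bot \mapsto \bot$).

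Correctness and the exact-unlearning guarantee are then immediate: since $-(S \smallsetminus S_I) = (-S) \smallsetminus (-S_I)$ as multisets, the scheme $(\tLearn, \tUnlearn)$ on input $-S$ followed by unlearning $-S_I$ returns $\min((-S)\smallsetminus(-S_I)) = \min(-(S\smallsetminus S_I)) = -\max(S\smallsetminus S_I)$, which negates to the desired $\max(S\smallsetminus S_I)$; the same correspondence handles $\bot$ when $S\smallsetminus S_I = \emptyset$. Because $|-\cX| = |\cX|$ and $n$ is unchanged, the space complexity equals that of \theoremref{thm:min-scheme}, namely $(C_s = O(\log|\cX|),\, C_t = O(\log|\cX| + \log\alpha^{-1}(n)))$.

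I do not expect a genuine obstacle here: the statement is simply the order-reversed mirror of \theoremref{thm:min-scheme}, and one could equally well give a direct construction, keeping $\max(S)$ in $\aux$ and giving example $i$ a ticket carrying (a) its CtZ-ticket within the group of examples sharing value $x_i$ (via \theoremref{thm:ctz-final-upper-bound}) and (b) the predecessor $\mathrm{pred}_S(x_i)$, the largest value of $S$ strictly below $x_i$; on unlearning one repeatedly uses $\tUnlearn$ to test whether every copy of the current candidate has been deleted and, if so, falls back to its predecessor, exactly as in the proof of \theoremref{thm:min-scheme}. The only point that needs a moment's care is that the reduction uses that $\cX$ is totally ordered, so that negation — equivalently, the order-reversing relabelling $x \mapsto \abs{\cX} + 1 - \rank_{\cX}(x)$ — is well defined; this holds by assumption since $\cX \subseteq \bbR$.
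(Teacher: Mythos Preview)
Your proposal is correct and matches the paper's approach: the paper simply states that the scheme follows by the same argument as for \minval (\theoremref{thm:min-scheme}) and omits the details, which is exactly what your reduction-by-negation and your alternative predecessor-based mirror construction both accomplish.
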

\begin{proof} The scheme follows similar to the TiLU scheme for \minval problem given in \pref{thm:min-scheme}, and is skipped for conciseness. 
\end{proof}

\begin{proofof}[\pref{thm:thresh-realizability-separation}.]  
(a) Suppose for contradiction, we have a LU scheme in the central model for $\Hth$-realizability testing, consisting of $\Learn$ and $\Unlearn$ methods, that is valid for all datasets. Suppose $|\cX| \ge n$, and let \( m  = n/2\). We will construct a two-party one-way communication protocol, with communication complexity being the space complexity of the LU scheme, and where Bob can entirely reconstruct Alice's input $X \in \bit^m$ (with probability $1$). 

\begin{enumerate}[leftmargin=8mm, label=\(\hspace{0.2in}\)] 
	\item \textbf{Alice:} On input $X \in \bit^{m}$, 
\begin{itemize}
\item Construct a dataset $S$ as follows: 
For each $i \in [m]$, include examples $(2i + 1, 0)$ and $(2i + X_i + 1, 1)$
\item Let $(h, \aux) \gets \Learn(S)$ and send $\aux$ to Bob.
\end{itemize} 

\item \textbf{Bob:} On receiving $\aux$ from Alice,  construct $Y \in \bit^{m}$ as follows: 
\begin{itemize} 
\item For each $i \in [m]$: 
    \begin{itemize}
    \item Let $v^{(i)} \gets \Unlearn(S_{I_i}, \aux)$ for $S_{I_i} := \{(2i + 1, 0)\}_{i \in [2m] \setminus \{i\}} \cup \{(2j + Y_j + 1, 1)\}_{j \in [i - 1]}$. %
    \item If $v^{(i)} = \top$ (non-realizable), set $Y_i = 0$. %
   Else if $v^{(i)} = \perp$ (realizable), set $Y_i = 1$. %
    \end{itemize} 
    \item Return \(Y\). 
\end{itemize} 
\end{enumerate}  

\noindent The proof proceeds by arguing that Bob can successfully recover \(Y = X\), which implies that \(\aux\) must contain at least \(m\) bits. %
To prove the correctness of the construction of $Y$, we can use induction on $i$. Suppose that we have constructed $Y_1, \dots, Y_{i - 1}$ correctly. Then, in iteration \(i\), Bob makes an unlearning request \(S_{I_i}\) to remove all \(0\)-labeled samples except \((2 i + 1, 0)\), and all \(1\)-labeled samples with \(x\) values smaller than \(2i + 1\)
There are two cases:
\begin{enumerate}[label=\(\bullet\)] 
\item $X_i = 0$: In this case, \(S \setminus S_{I_i}\) contains $(2i + 1, 1)$ and is not $\Hth$-realizable, meaning that we set $Y_i = 0$.
\item $X_i = 1$: In this case, \(S \setminus S_{I_i}\) does not contain $(2i + 1, 1)$ and thus is $\Hth$-realizable, leading to $Y_i = 1$.
\end{enumerate}
Therefore, $Y_i = X_i$ in both cases as desired. The lower bound follows since \(n  = 2m\). 

\medskip\noindent
(b) To obtain a TiLU scheme, the key idea is that in order to check whether $S$ is realizable via a threshold, we only need to verify if the largest \(0\)-labeled sample is strictly smaller than the smallest \(1\)-labeled sample. The smallest \(1\)-labeled sample can be computed using the \minval primitive in \pref{thm:min-scheme}, and the largest \(0\)-labeled sample can be computed using the \maxval primitive in \pref{thm:max-scheme}, both of which have a  TiLU scheme with space complexity $(O(\log|\cX|), O(\log|\cX| + \log \alpha^{-1}(n)))$.

With that in mind, we define $\Learn$ and $\Unlearn$ as follows.
\paragraph{\boldmath $\Learn$.} On input \(S\), 
\begin{itemize}
    \item Compute \(S^0\) as the set of all \(0\)-labeled samples in \(S\), and \(S^1 = S \setminus S^0\). 
    \item Implement \maxval on \(S^0\), let \(x^0 = \max(S^0)\) and compute the corresponding tickets for samples in \(S^0\). 
    \item Implement \minval on \(S^1\), let \(x^1 = \min(S^1)\) and compute the corresponding tickets for samples in \(S^1\). 
    \item If either \(x^0\) or \(x^1\) is \(\bot\), or if \(x^0 < x^1\), set \(\aux = (\bot, x^0, x^1)\), and return. 
    \item Else, set    \(\aux = (\top, x^0, x^1)\) and return. 
\end{itemize}

\paragraph{\boldmath $\Learn$.} On input \(\aux = (a, x^0, x^1)\), unlearning requests \(S_I\) and the corresponding tickets \(\crl{t_z}_{z \in S_I}\), return \(a\) if \(S_I\) is empty. Otherwise, 
\begin{itemize}
    \item If \(a = \bot\), return \(\bot\). 
    \item Else, compute \(S_I^0\) and \(S_I^1\) as the set of \(0\)-labeled and \(1\)-labeled delete requests respectively. Similarly, define the tickets \(T_I^0\) and \(T_I^1\) as the tickets for the \(0\)-labeled and \(1\)-labeled samples in \(I\) respectively. 
    \item Compute the max \(0\)-labeled sample in remaining dataset via \(x^0 = \maxval.\Unlearn(S_I^0, x^0, T_I^0)\). 
        \item Compute the min\(1\)-labeled sample in remaining dataset via \(x^1 = \minval.\Unlearn(S_I^1, x^1, T_I^1)\).  
        \item If \(x^0 = \bot\) or \(x^1 = \bot\), return \(\bot\). 
        \item Else if \(x^0 < x^1\), return \(\bot\). Else, return \(\top\). 
\end{itemize}

Since, the \(\minval\) and \(\maxval\) primitives compute the minimum $1$-labeled \(x^1\), and the maximum \(0\)-labeled samples \(x^0\). Additionally, since realizability testing w.r.t.~\(\Hth\) is equivalent to checking whether \(x^0 < x^1\), one can verify that the above algorithm is correct. Since  the \(\minval\) and \(\maxval\) primitives can be executed using a TiLU scheme with space complexity $(O(\log|\cX|), O(\log|\cX| + \log \alpha^{-1}(n)))$, the above TiLu scheme for realizability testing also has space complexity $(O(\log|\cX|), O(\log|\cX| + \log \alpha^{-1}(n)))$. 

\end{proofof}

\end{document}